\documentclass{article}

\usepackage{amsmath,amssymb,amsthm}
\usepackage{stmaryrd}
\usepackage{bussproofs}
\usepackage{mathpartir}
\usepackage{hyperref}
\usepackage{amsfonts}
\usepackage{breqn}

\usepackage{multicol}
\usepackage{multirow}
\usepackage{array}
\usepackage{booktabs}

\usepackage[dvipsnames,svgnames,table]{xcolor}

\usepackage{pdfpages}
\usepackage{graphicx}
\graphicspath{ {./figuras/}}
\usepackage{tikz}
\usetikzlibrary{fit}
\usepackage{wrapfig}

\usepackage{mathrsfs}

\usepackage{placeins} 
\usepackage{makecell} 
\usepackage[title,titletoc]{appendix} 
\usepackage{enumitem} 

\newtheorem{theorem}{Theorem}[section]
\newtheorem{definition}[theorem]{Definition}

\newtheorem{lemma}[theorem]{Lemma}
\newtheorem{corollary}[theorem]{Corollary}

\newtheorem{properties}[theorem]{Properties}
\newtheorem{remark}[theorem]{Remark}

\newcommand\ooplus{\mathbin{\ooalign{$\scriptstyle\bigcirc$\cr\hidewidth$\scriptstyle\oplus$\hidewidth}}}
\newcommand\ooplusg{\mathbin{\ooalign{$\bigcirc$\cr\hidewidth$\oplus$\hidewidth}}}
\newcommand\ootimes{\mathbin{\ooalign{$\scriptstyle\bigcirc$\cr\hidewidth$\scriptstyle\otimes$\hidewidth}}}

\newcommand{\expfi}{\Theta^\oplus_{\varphi}}
\newcommand{\exppsi}{\Theta^\oplus_{\psi}}

\newcommand{\excedfi}{\Theta \top \varphi}
\newcommand{\excedpsi}{\Theta \top \psi}
\newcommand{\excedlor}{\Theta \top \varphi \lor \psi}

\newcommand{\papfi}{(\Theta, \varphi)^{\scriptscriptstyle{\mathbb{APE}}}}

\newcommand{\revfi}{\Theta^\otimes_{\varphi}}

\newcommand{\til}{\mathord{\sim}}

\newcommand{\expfip}{\Theta^{\ooplus}_{\varphi}}
\newcommand{\exppsip}{\Theta^{\ooplus}_{\psi}}
\newcommand{\expfilorp}{\Theta^{\ooplus}_{\varphi \lor \psi}}
\newcommand{\postp}[1]{(\Theta^{\ooplus}#1)}

\newcommand{\papfip}{(\Theta, \varphi)^{\scriptscriptstyle{\mathbb{APE}}}_\circ}

\newcommand{\papfig}{(\Theta, \varphi)^{\scriptscriptstyle{\mathbb{APE}}}_p}
\newcommand{\pappsig}{(\Theta, \psi)^{\scriptscriptstyle{\mathbb{APE}}}_p}
\newcommand{\paplorg}{(\Theta, \varphi \lor \psi)^{\scriptscriptstyle{\mathbb{APE}}}_p}

\newcommand{\revfipi}{\Theta^{\ootimes_i}_{\varphi}}
\newcommand{\revfipe}{\Theta^{\ootimes_e}_{\varphi}}

\setlist[description]{noitemsep} 

\title{AGM-like Paraconsistent Partial Meet Abductive Expansion Operation}

\author{%
    Ulisses Franceschi Eliano%
    \thanks{%
        The author is a PhD student under the supervision of Marcelo Esteban Coniglio 
        and supported by FAPESP (Fundação de Amparo à Pesquisa do Estado de São Paulo) 
        -- process \mbox{2024/22555-9}.%
    } \\
    Institute of Philosophy and the Humanities (IFCH) and \\
    Centre for Logic, Epistemology and The History of Science (CLE), \\
    University of Campinas (UNICAMP) \\
    \texttt{u225017@dac.unicamp.br}
}

\date{\today}

\date{\today}

\begin{document}

\maketitle

\begin{abstract}
In his 1996 doctoral thesis \cite{Pagnucco}, Maurice Pagnucco created the first \textbf{AGM}-like abductive expansion operation. Taking his operation as a basis, as well as a taxonomy---inspired by Atocha Aliseda \cite{Aliseda}---responsible for highlighting and formalizing the main components of abductive reasoning, the main aim of this paper is to present a new \textit{paraconsistent} \textbf{AGM}-like abductive expansion operation---capable of assimilating contradictory explanatory hypotheses without trivialization and the consequent absurd epistemic state---, with its postulates and its transitively relational \textit{partial meet} construction. To a large extent, the formal development presented in this paper was only made possible by the recent creation of the paraconsistent logic \textbf{RCbr} \cite{TESTA_CON_MARTIN_CBR}, an \textbf{LFI} (\textit{Logics of Formal Inconsistencies}) that establishes properties especially relevant to belief revision contexts, in particular, the ability to be \textit{self-extensional}---i.e., to satisfy the \textit{replacement} property. This is the first of two papers: the paraconsistent abductive expansion operation announced here---which is part of a new system called \textbf{AGM}$p_{abd}$---despite bringing many interesting features, does not assign any relevant epistemic role to the paraconsistent operators of negation $\neg$ and consistency $\circ$. Only in a second paper will an analogous paraconsistent abductive expansion operation---which is part of another new system, \textbf{AGM}$\circ_{abd}$---be enhanced in this direction. Nevertheless, to the best of my knowledge, the operation developed in this paper is the first of its kind in the \textbf{AGM} literature.
\end{abstract}

\section{Introduction}
\label{introduction}

Charles Sanders Peirce is considered the founder of abduction as a third type of reasoning, independent of deduction and induction. Abduction is the reasoning responsible for formulating and selecting hypotheses capable of explaining, albeit rudimentarily and conjecturally, intriguing and surprising facts. Or, according to Fann, ``[...] Peirce's theory of abduction is concerned with the reasoning which starts from data and moves towards hypotheses''\cite[p.5]{Fann}. Peirce's theory of abduction is philosophically rich and is related to many other concepts in his philosophy. For the purposes of this article, however, I would like to highlight four important aspects of abduction: (i) the \textit{creative aspect}, (ii) the \textit{logical-inferential aspect}, (iii) its \textit{distinction from the other two types of reasoning}, and (iv) the \textit{selective aspect}. The creative aspect of abduction (i) manifests itself when an inquiry agent, faced with a surprising fact, originally conceives hypotheses to explain it, as an instinctive act\footnote{Peirce, in fact, understood abduction as a kind of \textit{guessing instinct}. In his words, abduction ``[...] tries what \textit{il lume naturale}, which lit the footsteps of Galileo, can do. It is really an appeal to instinct'' (CP 1.630) and abduction is ``[...] nothing but guessing'' (CP 7.219).}, and, thus, \textit{expands} its initial informational repertoire with new conceptions. In Peirce's words, ``Abduction is originary in respect to being the only kind of argument which starts a new idea'' (CP 2.96)\footnote{The acronym ``CP'' refers to the \textit{Collected Papers} of Charles Sanders Peirce, which can be found in the references as \cite{PeirceCP1} and \cite{PeirceCP2}.}. The creative originality of this type of reasoning places it at the center of the debate about the possibility of a logic of discovery, in Hans Reichenbach's well-known terms\footnote{The distinction between \textit{context of discovery} and \textit{context of justification} is well known in the literature. The author considered, for example, Einstein's new theory of gravitation a great discovery, not because the physicist was lucky or had a hunch, but because ``[...] the known facts indicate such a theory; i.e., that an inductive expansion of the known facts leads to the new theory. This is precisely what distinguishes the great scientific discoverer from a clairvoyant''\cite[p.382]{Reich}. Reichenbach, however, identified this type of reasoning involved in scientific discovery, like many authors of his time (and today), with induction.}, something that goes beyond the scope of this article. Despite being creative, abduction has a logical-inferential aspect (ii), which seems paradoxical\footnote{This is a well-known paradox in the specialized literature and is called the \textit{abduction paradox}. Unfortunately, this interesting topic will not be developed in this document. To read more about this, including an excellent attempt to resolve it, see \cite{Santaella}.}. The Peircean conception of inference\footnote{In Peirce's words, ``It must be remembered that abduction, although it is very little hampered by logical rules, nevertheless is logical inference, asserting its conclusion only problematically or conjecturally, it is true, but nevertheless having a perfectly definite logical form'' (CP 5.188).}, however, bears little or no resemblance to the notion of logical consequence, syntactical, or semantic, which has become the canonical standard in the orthodoxy of contemporary logic. It is not possible to address Peirce's inferentialism here, however, it is my understanding that an inference, in the Peircean general sense, but in particular the abductive one, is best represented by an \textbf{AGM}-style belief change operation embedded in the \textit{abductive process} depicted in the taxonomy in Section \ref{taxogeral}

Furthermore, it is important to keep in mind that Peircean abduction should not be confused with the other two types of reasoning (iii). The distinction between abduction and deduction is clearer and does not usually generate controversy, but abduction is often confused with induction. It is not infrequently considered a type of induction\footnote{This confusion seems to have its origins in Gilbert Harman's influential article, \textit{The Inference to the Best Explanation}. I think that Harman's article, in light of Peirce's original abduction theory, deserves much criticism (and it is not uncommon in the literature), but I will not go into it in this article.}. In Peirce's words, however, ``I don't think the adoption of a hypothesis on probation can properly be called induction; and yet it is reasoning, and though its security is low, its uberty is high'' (CP 8.388). Abduction, in turn, is the only truly ampliative inference, in Peirce's most mature conception, and what differentiates it from induction is that the latter ``[...] infers the existence of phenomena such as we have observed in cases which are \textit{similar}, while hypothesis supposes something of a \textit{different kind} from what we have directly observed, and frequently something which it would be impossible for us to observe directly'' (CP 2.640). While abduction is the ``[...] first step of scientific reasoning'' (CP 7.218), and therefore seeks to formulate a new theory, induction, on the other hand, is only responsible for evaluating the experimental results as the final step of inquiry, without actually adding anything new. Abduction also selects (iv) the ``best'' hypotheses from among the countless\footnote{The incountability of possible explanatory hypotheses, that is, ``[...] the idea that data underdetermine theory - that an infinite variety of alternative hypotheses can conform equally well to any finite body of empirical data'' \cite[p.72]{Rescher2}, seems to be a commonplace in the philosophy of science and Peirce seemed, in his time, to endorse such a view. In his words, ``[...] Think of what trillions of trillions of hypotheses might be made of which one only is true; and yet after two or three or at the very most a dozen guesses, the physicist hits pretty nearly on the correct hypothesis'' (CP 5.172).} formulated ones. The philosophical and scientific literature concerning the criteria that adequately apprehend the notion of ``best'' or more ``preferable'' in this context is vast. The passage in CP 7.220, in which Peirce cites the three criteria for selecting hypotheses, is well known. They are: the hypotheses (a) must, in fact, \textit{explain} the surprising facts, (b) must be capable of being subjected to \textit{experimental verification}, and (c) must be conditioned by the \textit{principle of economy of research}. Each of these criteria requires considerable philosophical exposition, including the doctrine of Pragmatism, which is closely linked to abduction, a topic that, unfortunately, cannot be adequately addressed in this article. In terms of contemporary logic (but not Peircean conception of logic), these criteria can be considered ``extra-logical'', and so I will refer to them in this paper, more generally, as the \textit{pragmatic and explanatory principles} of hypothesis selection.

To the best of my knowledge, Peirce did not address the question of the possibility of the coexistence of contradictory hypotheses. It seems to me, however, that they can indeed coexist in a wide variety of contexts, as argued by the authors, for example, in \cite{Bueno_Car_Con_Ab_2017} and \cite{Bueno_Car_Con_Ant_LET_2022}. A paraconsistent approach is therefore necessary to formally represent such situations. The paraconsistent abductive expansion operation presented in this paper will be referred to as the \textbf{AGM}$p_{abd}$ abductive expansion operation, the system of which it is a component\footnote{\label{rodape_agmp_agmo}In Testa's work \cite{Testa}, later improved in \cite{Testetal}, two systems were created: \textbf{AGM}$p$ and \textbf{AGM}$\circ$. The former establishes paraconsistent contraction and revision operations in which no epistemic role is, in fact, assigned to the paraconsistent operators $\neg$ and $\circ$. In the latter, the operators gain more epistemically relevant roles, especially the consistency operator $\circ$, which confers the \textit{irrefutability} of a given belief $\varphi$, when $\varphi \in \Theta$ and $\circ \varphi \in \Theta$, for $\Theta = Cn(\Theta)$---in this case, $\varphi$ is said to be \textit{boldly accepted} in $\Theta$. In turn, two \textbf{AGM}-like abductive systems, with their respective operations, postulates and constructions, are being developed in my ongoing PhD research, and the names \textbf{AGM}$p_{abd}$ and \textbf{AGM}$\circ_{abd}$ were analogously adopted. This paper concerns only the abductive expansion operation that is part of the \textbf{AGM}$p_{abd}$ system.}. As is conventional in the \textbf{AGM} literature, especially following the publication of Gärdenfors and Rott's article \cite{Gard_Rott}\footnote{As is well known in the literature, the criteria established by the authors are: \textit{consistency}---which, in the classical case, is the same as non-contradiction---, \textit{deductive closure}, \textit{minimality} and \textit{epistemic entrenchment}.}, \textbf{AGM} operations should be guided by \textit{rationality criteria}. For the \textbf{AGM}$p_{abd}$ system the rationality criteria are the following:

\begin{description}

    \item (i) \textit{Non-triviality}: an epistemic state must be \textit{always} non-trivial;
    \item (ii) \textit{Deductive closure}: any belief deductively implied by other beliefs of an epistemic state must be included in the epistemic state;
    \item (iii) \textit{Minimality}: when discarding information, the maximum amount of information from the original state should be preserved;
    \item (iv) \textit{Epistemic entrenchment}: epistemically more valuable beliefs (or ``more entrenched'') should be more resistant to change;
    \item (v) \textit{Information acquisition}: the agent is interested in acquiring new and valuable error-free information;
     
    \leftskip 20pt
          \item (v.a) \textit{Surprising fact}: the agent, when faced with a surprising fact, \textit{always} initiates an \textit{abductive process} to elaborate and incorporate explanatory hypotheses;
        \item (v.b) \textit{Explanatory adequacy}: the new hypotheses must be \textit{properly explanatory} for the surprising fact;
          \item (v.c) \textit{Informational balance}: the agent must achieve a balance between the amount of new hypotheses to be incorporated and the risk of error, depending on their degree of caution or bouldness;       
        \item (v.d) \textit{Abductive entrenchment}: some hypotheses are more significant and relevant than others, and the agent selects them according to \textit{explanatory and pragmatic principles};
   
\end{description}

Criterion (i) of non-triviality differs from the classical \textbf{AGM} system in at least two respects: first, the specific requirement of \textit{non-triviality}, rather than \textit{consistency}. This is a fundamental distinction in the paraconsistency literature, since, unlike \textbf{CPL} (\textit{Classical Propositional Logic}), in which the terms ``inconsistency'', ``trivialization'' and ``contradiction'' are synonymous (and formally equivalent), these notions do not coincide in an \textbf{LFI}, as we shall see in Section \ref{rcbr_logic}. The second aspect concerns the term ``always''. Unlike the classical \textbf{AGM} system, in which the expansion operation does not formally block initial trivial epistemic states, nor does it prohibit expansion towards triviality, in the \textbf{AGM}$p_{abd}$ abductive expansion there is no room for triviality. Criterion (ii) of deductive closure remains the same as in classical \textbf{AGM}, i.e., epistemic states will be modelled as deductively closed belief sets. Criteria (iii) and (iv) are old acquaintances of the \textbf{AGM} literature and are only present here to guide other operations of the \textbf{AGM}$p_{abd}$ system that do not strictly concern the paraconsistent abductive expansion operation, but which are worth keeping as general guiding criteria for the system\footnote{\label{nota_rodape_minimalidade_revisao}Regarding criterion (iii) of minimality, three things should be noted. 1) In his inaugural thesis \cite{Pagnucco}, Pagnucco also develops the abductive revision operation, based on the Levi Identity, but with the Levi saturated contraction, followed by his abductive expansion operation, i.e., $\revfi = (\Theta^-_{\til \varphi})^\oplus_\varphi$. This is a simpler operation---quite limited due to the non-monotonic nature of the abductive expansion operation and the impossibility of establishing an adequate inclusion relation between abductive expansion and revision (see \cite[p.165-166]{Pagnucco})---but which, because of the prior contraction operation, needs to be guided by the minimality criterion. The same occurs in the \textbf{AGM}$p_{abd}$ system, with the difference that, since it is a paraconsistent system, an \textit{external revision} operation---i.e., obtained by means of the Reverse Levi Identity, in which expansion occurs \textit{before} contraction, i.e., $\revfipe = (\expfip)^-_{\til \varphi}$---can also be obtained, although suffering from the same (and others) limitations. Unfortunately, revision operations are not the subject of this paper. 2) Moreover, Pagnucco also provides a semantic interpretation of minimality via possible worlds and Grove spheres (see \cite[p.114-121]{Pagnucco}). I will not delve into this topic in this paper. 3) Furthermore, notoriously, the minimality criterion contrasts with criterion (v), which seems more appropriate for abductive expansion, as we shall see. I also think that, specifically for abductive expansion, the minimality criterion should be subordinated to other foundations, of an ``extra-logical'' nature, according to \textit{pragmatic principles} of hypothesis selection and preference---especially those found in C.S. Peirce's own work, such as \textit{simplicity, explanatory power, economy, experimental verification, etc.}---which are not within the scope of this paper. This issue, however, will be taken up again in the second paper, concerning the \textbf{AGM}$\circ_{abd}$ system.}\footnote{In the case of criterion (iv), as in the previous case, it is prudent to keep it, since it concerns the abductive revision operations---in fact, the contraction operation underlying them. Epistemic entrenchment, however, will have no role in the abductive expansion operation proper.}.

Criterion (v) is an inheritance from Pagnucco's own system, insofar as the author reinterprets it from Isaac Levi's work \cite[p.72]{Levi}. This criterion indicates the balance, desired by an ideal rational agent, between the interest in acquiring new and valuable information, in this case through abduction, and the risk of falling into error\footnote{It is important to note, however, that depending on the logic underlying the system, whether classical or paraconsistent, this criterion may mean different things. On the one hand, as we shall see, paraconsistency allows the acquisition of new information in the abductive process in cases where the classical one does not. On the other hand, the term ``error'', in the classical case, means inconsistency, in the sense of contradiction. In the paraconsistent case, the term ``error'' comes to be interpreted exclusively as triviality. Contradiction, therefore, no longer enters the agent's rational balance as an error and comes to hold an informational value that deserves due attention.}. I consider this criterion, however, sufficiently general to encompass new criteria, which determine \textit{how} the agent will carry out such informational acquisition. Thus, criteria (v.a) and (v.b) capture important aspects of abductive reasoning proper that are very prominent in C.S. Peirce's work, as we just discussed. First, criterion (v.a) tells us that abductive reasoning is a process that arises from a surprising fact, which impresses upon us a troubling doubt, and which needs to be explained. As such, it is ampliative par excellence---in fact, the \textit{only} truly ampliative reasoning. The emphasis on the term ``always'' is therefore important, since there is no abductive reasoning that leaves the initial epistemic state unchanged. Criterion (v.b) minimally qualifies the hypotheses, accordingly to Peircean thought, in the sense of requiring them to be properly explanatory. These two criteria justify the creation of a specific taxonomy for abduction, which will be addressed in Section \ref{taxogeral}, and, to a large extent, in addition to paraconsistency itself, differentiate the abductive expansion operation developed here from Pagnucco's operation.

Finally, criteria (v.c) and (v.d) can also be found, in a different exposition, in Pagnucco's thesis. Criterion (v.c) tells us that a balance must be achieved between an extremely self-confident agent, i.e., one who elaborates too many hypotheses and thus may more easily fall into error, and, on the other hand, a very sceptical agent, who does not run many risks of incorporating errors, but, to the same extent, does not elaborate enough hypotheses and therefore fails to incorporate relevant information. The operation that performs this balance is precisely the \textit{partial meet} abductive expansion that will be developed in Section \ref{agmpabd_expansion_construction}\footnote{In his thesis \cite{Pagnucco}, Pagnucco also presents \textit{maxichoice} and \textit{full meet} abductive expansion operations, in line with the classical \textbf{AGM} literature. The same operations can also be established in the \textbf{AGM}$p_{abd}$ system, with many of the same characteristics already found by Pagnucco, among others. Unfortunately, it will not be possible to develop them in this paper.}. Criterion (v.d) attempts to capture the notions underlying \textit{betterness} relations between hypotheses, which are generally those used by the agent to select them. As we just discussed, Peirce himself presented many ``extra-logical'' criteria in this sense, the \textit{pragmatic and explanatory principles} of hypothesis selection. Formally, however, there are several possible ways of representing such notions\footnote{Pagnucco \cite{Pagnucco} also presented a construction called \textit{abductive entrenchment}, similar to the epistemic entrenchment of classical \textbf{AGM}, but interested, in the opposite way, in sentences that lie outside the belief set and need to be incorporated. The same construction can also be obtained in the \textbf{AGM}$p_{abd}$ system, with the appropriate formal changes. Once again, this is a construction that was only made possible thanks to the properties of the \textbf{RCbr} logic, which will be introduced in Section \ref{rcbr_logic}. The paraconsistent abductive expansion operation based on the abductive entrenchment order, however, will be the subject of another future paper.}. In this paper, the relationality and transitivity imposed on the \textit{partial meet} selection function will fulfil this role.

\section{A Taxonomy for Abducion}
\label{taxogeral}

Inspired by the taxonomy for abduction proposed by Atocha Aliseda\footnote{See \cite[46-48]{Aliseda}. It is important to note that Aliseda implements her abductive process specifically through abductive tableaux (despite the author establishing some interesting connections between her tableaux and the \textbf{AGM} system (see \cite[Chapter 8]{Aliseda}). The objective of this article, however, like Pagnucco's, is to present an AGM-style operation with its respective \textit{construction} for epistemic states expanded by explanatory hypotheses, not to \textit{construct} the hypotheses themselves via tableaux, as proposed by Aliseda. Furthermore, Aliseda proposes a ternary format for her inferential parameter $\Theta ~| ~\varphi \Rightarrow \alpha$ (see \cite[p.68]{Aliseda}) and various abductive styles (see \cite[74]{Aliseda}). In the first case, Aliseda's objective is to study the behavior of the well-known structural rules of Gentzen-style sequent calculus when applied to different kinds of premises (as required by the Hempel-Oppenheim DN model), which is not my objective in this article. \pagebreak In the second case, the adequate explanatory inference proposed here contains exactly the same restrictions, jointly, as the consistent and explanatory styles proposed by the author. My inspiration in Aliseda's work, however, is justified by recognizing, in her taxonomy, a very interesting theoretical conception general enough to address various implementations of abductive reasoning, provided it is thought of in terms of process, not merely as a simple logical form.}, a taxonomy for abduction is a general systematization of three components of an \textit{abductive process}: (i) the \textit{triggers}, (ii) an \textit{adequate explanatory inference}, and (iii) the \textit{outputs}. Let us therefore look at each of the components.

\

\textbf{\textit{(i) Triggers}}

\

The triggers represent the beginning of the abductive process, that is, they characterize the \textit{initial surprise} in the face of an unusual and surprising phenomenon or fact $\varphi$. From a formal point of view and considering initially classical propositional logic $\mathbb{L} = \langle \mathcal{L}_\Sigma, \vdash \rangle$, in witch formulas in $\mathcal{L}$ are generated by classical propositional signature $\Sigma = \{\land, \lor, \to, \til, \bot, \top \}$. In this case, the simpler notation $\mathbb{L} = \{\textbf{CPL}\}$ will be used. Let $\Theta$ be a set of formulas representing agent's previous theories or beliefs and $\Theta \cup \{\varphi\} \subseteq \mathcal{L}_\Sigma$. The (classical) triggers for an abdutctive process are:

\begin{description} \label{gatilhos_abducao}

\item (i) \textit{Abductive novelty}: $\varphi$ is a novelty if $\Theta \nvdash_{\mathbb{L}} \varphi$ and $\Theta \nvdash_{\mathbb{L}} \til \varphi$
\item (ii) \textit{Abductive anomaly}: $\varphi$ is an anomaly if $\Theta \nvdash_{\mathbb{L}} \varphi$ and $\Theta \vdash_{\mathbb{L}} \til \varphi$
\end{description}

The requirement $\Theta \nvdash_{\mathbb{L}} \varphi$ is the same in both triggers. It is justified, in the abduction context, because, if $\Theta \vdash_{\mathbb{L}} \varphi$, there is no surprise and, thus, no abducive reasoning. The difference between both triggers, however, is in the fact that, in an abductive novelty, the agent don't have any initial clue about the observed fact, and in the abductive anomaly the fact is inconsistent---in a classical sense, contradictory---with his previous beliefs.

\

\textbf{\textit{(ii) Adequate explanatory inference}}

\

Let us consider the following definition.

\begin{definition}[Abductive explanation]
\label{explicacao_abdutiva}

Let $\mathbb{L} = \{\textbf{CPL}\}$ and $\Theta \cup \{\varphi\} \cup \{\alpha\} \subseteq \mathcal{L}_\Sigma$. The sentence $\alpha$ is an abductive explanation of $\varphi$ with respect to $\Theta$ if and only if:
\end{definition}

\begin{description}

\item (i) $\Theta \cup \{\alpha\} \vdash_{\mathbb{L}} \varphi$;
\item (ii) $\Theta \cup \{\alpha\} \nvdash_{\mathbb{L}} \bot$ ;
\item (iii) $\Theta \nvdash_{\mathbb{L}} \varphi$;
\item (iv) $\{\alpha\} \nvdash_{\mathbb{L}} \varphi$.
\end{description}

Conditions (i) and (ii) are quite common in the literature. However, for the explanatory aspect of a hypothesis $\alpha$, if $\Theta \cup \{\alpha\} \vdash_{\mathbb{L}} \varphi$ and $\Theta \cup \{\alpha\} \vdash_{\mathbb{L}} \bot$ are necessary\footnote{It is well known in the literature, since Aristotle's \textit{Posterior Analytics}, that deduction is not sufficient for explanation, but it is necessary (see \cite{Angioni} e \cite{Angioni2}). There are many other intensional aspects to consider when \textit{explanatory relevance} and \textit{explanatory/causal direction} are in question. They simply can not be captured by deduction alone. The problem of scientific explanation has survived through time until it was revived in the 20th century by Hampel and Oppenheim and their well-known DN-model \cite{Hempel-Oppenheim}. If not the most successful contemporary approach to scientific explanation, it has certainly become paradigmatic, especially in the formal logic field. Of course, there are many others contemporary approaches in philosophy that even disconsider deduction as a necessary condition for explanation, for instance, \cite{Salmon} and \cite{VanFraassen}.}condition for explanation, in Hempelian terms at least, they are not sufficient\footnote{Of course, philosophically, Hempel and Oppenheim \cite{Hempel-Oppenheim} are very distant from Peircean philosophy and his Pragmatism. The philosophical notions of cause, laws of nature and theories (and, of course, explanation) are very different between these two schools of thought. Several 20th-century logical positivists and philosophers of science, including Hempel and Oppenheim themselves, were influenced by previous interpretations, dating back to the 19th century, of the Humean philosphy that recognized him as a causal regularist. I cannot go further on this topic, but it is important to say that, when the main motivation is the logical formalization of the explanation (thus disregarding strictly statistical and computational approaches), the work of Hempel and Oppenheim seems to be, to a large extent, inescapable.}. I claim that the restrictions $\Theta \vdash_{\mathbb{L}} \varphi$ and $\{\alpha\} \vdash_{\mathbb{L}} \varphi$ must also be considered as \textit{minimum} conditions to achieve \textit{some}\footnote{Note that Hempel and Oppenheim give us many logical and empirical conditions to guarantee explanatory relevance  (see \cite[247-249]{Hempel-Oppenheim}). I think that their definition of \textit{potential explanans} (\cite[277-278]{Hempel-Oppenheim} can be considered a desirable general goal for all those who desire a good logical-deductive representation of explanation, at least when abduction and its potential hypotheses are in question. Unfortunately, Classical Propositional Logic, used predominantly in AGM-like implementations, makes this complete adequacy to their definition unfeasible, as the authors require quantified first-order language to represent laws or background theories. For the purposes of this paper, however, I think these proposed restrictions are good enough.} explanatory relevance over the fact $\varphi$: the appropriate explanation for $\varphi$ must be an interplay of the background theories $\Theta$ together with the hypothesis $\alpha$, and $\alpha$ can not manifests itself as a total self-explanation\footnote{A self-explanation is an empirical and logical problem that emerges when deduction is underneath. Let $\Theta$ be background theories and $\varphi$ an \textit{explanandum}. So, due to the deduction logical properties, it is always possible to create explanations of the kind $\varphi \vdash \varphi$. If reflexivity is a desirable property for deduction, certainly it is not for explanation.}, witch is, in this context, represents the blocking of the reflexivity property, present in all Tarskian deductive logic.

\

\textbf{\textit{(iii) Outputs}}

\

If the triggers are the beginning of the process, the outputs are the end, the final result of the process. In the case of the formal system presented by Aliseda \cite{Aliseda}, based on abductive tableaux responsible for ``generating'' or ``constructing'' the abductive hypotheses, the outputs of the process are the multiple abductive hypotheses $\alpha$ themselves. Differently, as in Pagnucco's case \cite{Pagnucco}, my proposal is to represent abductive reasoning by means of \textbf{AGM}-like operation. Thus, a belief change operation is a function that takes an initial epistemic state, together with a sentence of the language---in this case, a sentence representing the phenomenon to be explained---and returns multiple possible expanded epistemic states, proportionally to the number of available explanatory hypotheses. In other words, the outputs of the abductive process, instead of the abductive hypotheses, due to the nature of \textbf{AGM}-style operations, are the expanded epistemic states themselves relative to them\footnote{It is important to emphasize, therefore, that unlike the abductive tableau obtained by Aliseda (and also the paraconsistent abductive tableaux found in \cite{Bueno_Car_Con_Ab_2017} and \cite{Bueno_Car_Con_Ant_LET_2022}), which is responsible for ``constructing'' the hypotheses---which can be interpreted as a formal attempt to represent the ``creative'' aspect of abduction---the paraconsistent abductive expansion operation presented in this paper already starts from constructed hypotheses, as can be noted in the very definition \ref{conj_hip_abdutivas_agmpabd} in Section \ref{agmpabd_expansion_postulates}. In my view, therefore, these are complementary formal implementations, not competing ones.}.
\section{The RCbr Paraconsistent Logic}
\label{rcbr_logic}

\textbf{RCbr} is a recently developed \textbf{LFI} \cite{TESTA_CON_MARTIN_CBR} specifically designed to serve as the underlying logic for the paraconsistent belief revision \textbf{AGM}$\circ$ system\footnote{As already mentioned in footnote \ref{rodape_agmp_agmo}, the \textbf{AGM}$\circ$ is a paraconsistent belief revision system first developed in \cite{Testa} and extensively improved in \cite{Testetal} and \cite{TESTA_CON_MARTIN_CBR}. This system can establish four new AGM-style epistemic attitudes---besides the three well-known from standard \textbf{AGM}---relative to the agent's belief \textit{strength} and contradictoriness, a new AGM-like paraconsistent contraction operation, and other paraconsistent internal and (now possible) external revision operations. In \cite{Testa}, due to general \textbf{LFI} properties, only paraconsistent non-extensional \textit{partial meet} contraction and revision operations were developed, and in \cite{Testetal}, the authors obtained an \textit{extensional} version of the same operations. Only with \textbf{RCbr} could an epistemic entrenchment order be established and proven to satisfy the paraconsistent contraction postulates.}. To describe \textbf{RCbr}'s syntax, semantics, and general properties, some brief background is necessary. The main idea underlying \textbf{LFI}s is the distinction between trivialization and contradiction, where explosion---due to the classical \textit{ex falso} principle---occurs only in a \textit{controlled} manner when the contradictory proposition in question is marked as consistent by means of the consistency operator $\circ$. Thus, in \textbf{LFI}s:

\begin{center}
$\varphi, \neg \varphi \nvdash_{LFI} \psi$, but $\circ \varphi, \varphi, \neg \varphi \vdash_{LFI} \psi$
\end{center}

It is well known in the paraconsistency literature\footnote{The most comprehensive guide to \textbf{LFI}s---particularly to \textbf{mbC} and many of its extensions---can be found in \cite{CONCAR_2016}.} that \textbf{mbC} is the weakest \textbf{LFI}. Therefore, let us consider a standard and supraclassical Tarskian logic\footnote{Definitions \ref{definicao_logica_tarskiana_standard} and \ref{definicao_logica_tarskiana_supraclassuca} found in Appendix \ref{apendiceA}. As so, the \textbf{mbC} logic also satisfies fundamental rules and properties of \textbf{CPL}, such as the \textit{deduction theorem}, \textit{disjunction of premises}, and \textit{proof by cases}, as well as $delta$-saturated properties and Lindenbaum-Łoś Theorem. See properties \ref{propriedades_teorema_da_deducao_disjuncao_das_premissas} and \ref{proposicao_propriedade_delta_saturado} and theorem \ref{teorema_lindenbaum_los} in Appendix \ref{apendiceA}.} \textbf{mbC} $= \langle \mathcal{L}_{\Sigma_\circ}, \vdash \rangle$ over the language $\mathcal{L}_{\Sigma_\circ}$ generated by the signature $\Sigma_\circ = \{\land, \lor, \to, \neg, \circ \}$, where $\neg$ denotes the paraconsistent negation and $\circ$ the primitive consistency operator. Let us also define \mbox{$\bot_\psi =_{\text{def}} \psi \land \neg \psi \land \circ \psi$} as the \textit{falsum} particle and $\sim_\psi \! \varphi =_{\text{def}} (\varphi \to \bot_\psi)$ as the defined classical (strong) negation. Consider the following Hilbert-style axiomatic scheme:

\begin{description}
    \item[(1)] All \textbf{CPL}$^+$ basic axioms for $\land, \lor, \to$ plus \textit{Modus Ponens}
    \item[(2)] $\varphi \lor \neg \varphi$
    \item[(3)] ($bc1$) $\circ \varphi \to (\varphi \to (\neg \varphi \to \psi))$
\end{description}

The derivation in \textbf{mbC} is defined as usual\footnote{See \cite[p.34]{CONCAR_2016}.}. Three main characteristics deserve attention: (i) the absence of the principle of non-contradiction, (ii) the excluded middle axiom 2, which is formulated using the paraconsistent negation $\neg$, and (iii) the axiom 3 ($bc1$), which governs the behavior of the consistency operator $\circ$. Semantic valuations for the classical connectives operate as usual, while specific non-deterministic valuations for $\neg$ and $\circ$ are provided below:

\begin{description}
    \item[($vNeg$)] $v (\neg \varphi) = 0  \implies v(\varphi) = 1$
    \item[($vCon$)] $v (\circ \varphi) = 1 \implies v(\varphi) = 0\;$ or $\;v(\neg \varphi) = 0$
\end{description}
\label{definicao_valoracoes_mbC}

The \textbf{mbC} fundamental properties\footnote{The most notable properties of \textbf{mbC} are: (i) a contradiction implies inconsistency, but not vice versa, that is, $\varphi \land \neg \varphi \vdash_{\textbf{mbC}} \neg \circ \varphi$, but $\neg \circ \varphi \nvdash_{\textbf{mbC}} \varphi \land \neg \varphi$; (ii) consistency implies non-contradiction, but not vice versa, that is, $\circ \varphi \vdash_{\textbf{mbC}} \neg (\varphi \land \neg \varphi)$, but $\neg (\varphi \land \neg \varphi) \nvdash_{\textbf{mbC}} \circ \varphi$; finally, since most classical equivalences involving negation do not hold in general, \textbf{mbC}---like most \textbf{LFI}s---is not \textit{self-extensional}; that is, it does not satisfy the \textit{replacement} property, which we will examine shortly. } and its soundness and completeness theorems are well-established\footnote{See \cite[p.36-38]{CONCAR_2016}.}. There are many \textbf{mbC} extensions in the literature. The following extensions and axioms---where those on the right incorporate the adjacent ones on the left---are relevant for our purposes:

\begin{table}[h!]
    \centering
    \resizebox{\textwidth}{!}{
    \begin{tabular}{|c|c|c|c|c|c|c|}
         \hline
         \cellcolor{gray!5}\textbf{mbC} & \cellcolor{gray!10}$\subset$ & \cellcolor{gray!15}\textbf{mbCciw} & \cellcolor{gray!20}$\subset$ & \cellcolor{gray!25}\textbf{Cbr} & \cellcolor{gray!30}$\subset$ & \cellcolor{gray!35}\textbf{RCbr}\\
         \hline
         \cellcolor{gray!5}\textbf{mbC axioms} & \cellcolor{gray!10}$+$ & \cellcolor{gray!15}(ciw) $\circ \varphi \lor (\varphi \land \neg \varphi)$ & \cellcolor{gray!20}$+$ & \cellcolor{gray!25}\makecell{(ce) $\varphi \to \neg \neg \varphi$\\ (cf) $\neg \neg \varphi \to \varphi$} & \cellcolor{gray!30}$+$ & \cellcolor{gray!35}\textit{replacement}\\
         \hline
    \end{tabular}}
    \caption{\textbf{RCbr} axiomatic hierarchy.}
    
\end{table}

The \textbf{mbCciw} logic is the minimal extension of \textbf{mbC} that guarantees that the truth values of $\varphi$ and $\neg \varphi$ completely determine the truth value of $\circ \varphi$\footnote{\label{rodape_mbcciw_properties}Other important \textbf{mbCciw} properties are: (i) strongly denying a formula is equivalent to weakly denying it when the formula is considered consistent, that is, $\;\sim\!\varphi \equiv_{\textbf{mbCciw}}$ $\circ \varphi \land \neg \varphi$; and (ii) consistency operator can be defined in terms of other connectives, that is, $\circ \varphi \equiv_{\textbf{mbCciw}} \; \sim \!(\varphi \land \neg \varphi)$.}. The extension \textbf{Cbr} incorporates both double negation axioms and thus satisfies $\varphi \equiv_{\textbf{Cbr}} \neg \neg \varphi$. The semantics for \textbf{Cbr} is obtained through an algebraic swap structure-based three-valued non-deterministic matrix (\textit{Nmatrix}) as follows\footnote{The algebraization and the single finite matrix characterization of logical systems is often desirable. However, the problem of algebraizing \textbf{LFI}s, in the sense of Blok and Pigozzi \cite{BLOK_PIGOZZI_1989}, and obtain its respective single finite matrix characterization is well-known. Although algebraization in this sense can be obtained for a whole class of \textbf{LFI}s based on a three-valued finite matrix (see \cite[Section 4.4]{CONCAR_2016}), in general, many \textbf{LFI}s cannot be algebraitized in this sense or characterized by a single finite deterministic matrix (see \cite[Sec. 4.2]{CONCAR_2016})---including \textbf{mbC} and all its extensions discussed in this document. Nevertheless, they can be characterized by a single finite \textit{non-deterministic} matrix, as demonstrated through a class of multialgebras called \textit{swap structures} (see \cite[Ch. 6]{CONCAR_2016}, \cite{CON_FIB_SWAP}, and \cite{CON_ORELLANO_GOLZIO_SWAP}). In the article in question \cite{TESTA_CON_MARTIN_CBR}, therefore, the authors use these previously obtained results to derive the following Nmatrix. I will not detail such methods here, as I believe assuming the obtained results as valid is sufficient for this exposition.}:
 
\begin{definition}[Nmatrix for \textbf{Cbr}]
    Let $\mathcal{M}_{Cbr}$ be a three-valued non-deterministic matrix \\ $\langle \mathcal{T}, \mathcal{D}, \{\hat{\land}, \hat{\lor}, \hat{\to}, \hat{\neg}, \hat{\circ}\} \rangle$ over the signature $\Sigma_\circ = \{\land, \lor, \to, \neg, \circ \}$ with domain $\mathcal{T} = \{1, \frac{1}{2}, 0\}$ and set of designated values $\mathcal{D} = \{1, \frac{1}{2}\}$, such that the truth tables associated with each connective are as follows:
    \label{definicao_nmatriz_cbr_cie}
\end{definition}

   \begin{table}[h!]
   \centering
        \resizebox{\textwidth}{!}{
        \begin{tabular}{| c | c | c | c | c | c | c | c | c | c | c | c | c | c | c | c | c |  c | c | c |}

        \hline
        
        \multicolumn{20}{|c|}{$\mathcal{M}_{Cbr}$} \\

        \hline
     $\hat{\land}$ & 1 & $\frac{1}{2}$ & 0 & & $\hat{\lor}$ & 1 & $\frac{1}{2}$ & 0 & & $\hat{\to}$ & 1 & $\frac{1}{2}$ & 0 & & &$\hat{\neg}$ & & & $\hat{\circ}$\\

      \cline{1-4}  \cline{6-9}  \cline{11-14}  \cline{16-17} \cline{19-20} 
      \cline{1-4}  \cline{6-9}  \cline{11-14}  \cline{16-17} \cline{19-20} 

         1 & $\mathcal{D}$ & $\mathcal{D}$ & \{0\} & & 1 & $\mathcal{D}$ & $\mathcal{D}$ & $\mathcal{D}$ & & 1 & $\mathcal{D}$ & $\mathcal{D}$ & \{0\} & & 1 & \{0\} & & 1 & $\mathcal{D}$ \\
        
      \cline{1-4}  \cline{6-9}  \cline{11-14}  \cline{16-17} \cline{19-20} 

         $\frac{1}{2}$ & $\mathcal{D}$ & $\mathcal{D}$ & \{0\} & & $\frac{1}{2}$ & $\mathcal{D}$ & $\mathcal{D}$ & $\mathcal{D}$ & &  $\frac{1}{2}$ & $\mathcal{D}$ & $\mathcal{D}$ & \{0\} & & $\frac{1}{2}$ & $\{\frac{1}{2}\}$ & & $\frac{1}{2}$ & \{0\} \\

      \cline{1-4}  \cline{6-9}  \cline{11-14}  \cline{16-17} \cline{19-20} 

         0 & \{0\} & \{0\} & \{0\} & & 0 & $\mathcal{D}$ & $\mathcal{D}$ & \{0\} & & 0 & $\mathcal{D}$ & $\mathcal{D}$ & $\mathcal{D}$ & & 0 & \{1\} &  & 0 & $\mathcal{D}$ \\

      \cline{1-4}  \cline{6-9}  \cline{11-14}  \cline{16-17} \cline{19-20} 
    
    \end{tabular}}
   \caption{Nmatrix $\mathcal{M}_{Cbr}$ for \textbf{Cbr} connectives.}
    \label{tabela_valores_Nmatriz_Cbr_land_lor_to_neg}
 \end{table}

\FloatBarrier

Given that $\Gamma \models_{\mathcal{M}_{Cbr}} \varphi$ if for every valuation $v$ in the Nmatrix $\mathcal{M}_{Cbr}$\footnote{Where $v: \mathfrak{Fm} \to \mathcal{T}$ is considered a valuation in $\mathcal{M}_{Cbr}$ if $v(\varphi\#\psi) \in v(\varphi) \; \hat{\#} \; v(\psi)$ for $\# \in \{\lor, \land, \to\}$ and $v(\#\varphi) \in \hat{\#}v(\varphi)$ for $\# \in \{\neg, \circ\}$.}, it holds that for every $\psi \in \Gamma$, if $v(\psi) \in \mathcal{D}$, then $v(\varphi) \in \mathcal{D}$, the soundness and completeness theorems for \textbf{Cbr} can be established\footnote{Follows from \cite[Sec. 6.4]{CON_FIB_SWAP}.}. All these conditions are necessary---yet still not sufficient---for \textbf{RCbr} to satisfy three crucial properties for epistemic dynamics\footnote{Properties (i) and (ii) are desirable insofar as it is reasonable that an epistemic agent who considers a belief consistent should also consider its negation consistent; moreover, if the agent considers two beliefs logically equivalent, their consistency status should also be equivalent. The \textit{replacement} property, in turn, is a final necessary technical requirement to make \textbf{RCbr} suitable for an extensional \textbf{AGM} operation, as well as uniform substitutions.}:

\begin{properties} \mbox{}
\begin{description}
    \item[(i)]$\circ \varphi \equiv \circ \neg \varphi$;
    \item[(ii)] If $\varphi \equiv \psi$ and $\neg \varphi \equiv \neg \psi$, then $\circ \varphi \equiv \circ \psi$.
    \item[(iii)] (\textit{Replacement}) Given formulas $\varphi_i$ and $\psi_i$, for $1 \leqslant i \leqslant n$, such that $\varphi_1 \equiv \psi_1, ..., \varphi_n \equiv \psi_n$, then $\gamma(\varphi_1, ...,\varphi_n) \equiv \gamma(\psi_1, ..., \psi_n)$, for any formula $\gamma(a_1, ..., a_n)$.
\end{description}
\label{propriedades_relacao_consistencia_A_e_neg_A_e_preservacao_equiv_de_o_Cie_Cbr}
\end{properties}

Observing Nmatrix $\mathcal{M}_{Cbr}$, it is clear that \textbf{Cbr} satisfies (i) and (ii), but it is not \textit{self-extensional}, i.e., does not satisfies (iii). In \cite{CAR_CON_FUEN_REPLAC}, the authors present a class of self-extensional \textbf{LFI}s---denoted \textbf{R}$\mathbb{L}$, where $\mathbb{L}$ is some \textbf{LFI}---, the weakest of which is \textbf{RmbC}\footnote{The authors consider several self-extensional \textbf{LFI}s and their respective \textit{BALFI} models, but the results can be generalized. In \cite{TESTA_CON_MARTIN_CBR}, the logics \textbf{RCie} and \textbf{RCbr}---which are the self-extensional versions of \textbf{Cie} and \textbf{Cbr}, respectively---are considered. For the purposes of this document, only \textbf{RCbr} will be considered, and thus the following definitions and results will be adapted accordingly.}. Replacement satisfiability is obtained, briefly, as follows. First, two new \textit{global} inference rules---that is, rules valid only for theorems---and a new derivation notion are introduced\footnote{These inferential rules and the derivation notion are defined similarly to the necessitation rule in modal logic. They should be read as: if $\varphi \leftrightarrow \psi$ is a theorem, then $\# \varphi \leftrightarrow \# \psi$ is a theorem, for $\# \in \{\neg, \circ\}$. A more detailed version of the derivation definition can be found in \cite[p.6]{CAR_CON_FUEN_REPLAC}. However, a shorter version, sufficient for our purposes, is available in \cite[p.8]{TESTA_CON_MARTIN_CBR}.}:

\begin{table}[h!]
    \centering
    \begin{tabular}{c l p{2cm} c l}
         $\dfrac{\varphi \leftrightarrow \psi}{\neg \varphi \leftrightarrow \neg \psi}$ & $(R_\neg)$ & & $\dfrac{\varphi \leftrightarrow \psi}{\circ \varphi \leftrightarrow \circ \psi}$ & $(R_\circ)$  \\
    \end{tabular}
\end{table}
\FloatBarrier

\begin{definition}\label{definicao_derivacao_RmbC}
    We say that $\varphi$ is derivable in \textbf{R}$\mathbb{L}$, written $\vdash_{\text{R}\mathbb{L}} \varphi$, if there is a derivation in \textbf{R}$\mathbb{L}$ in the usual sense. On the other hand, $\varphi$ is derivable in \textbf{R}$\mathbb{L}$ from a set of premises $\Gamma$, written $\Gamma \vdash_{\text{R}\mathbb{L}} \varphi$, is either $\vdash_{\text{R}\mathbb{L}}  \varphi$ or there exists $\varphi_1, ...,\varphi_n \in \Gamma$, such that $\vdash_{\text{R}\mathbb{L}} (\varphi_1 \land ... \land \varphi_n) \to \varphi$.  
\end{definition}

The algebraization of \textbf{R}$\mathbb{L}$ in the Lindenbaum-Tarski sense consists of expansions of Boolean algebras through the addition of operators $\tilde{\neg}$ and $\tilde{\circ}$. The definition of \textit{BALFI} (Boolean Algebra with \textbf{LFI} Operators) for the class \textbf{R}$\mathbb{L}$, with $\mathbb{L} =$  \{\textbf{mbC}, \textbf{mbCciw}, \textbf{Cbr}\}, and the corresponding semantic logical consequence is given as follows\footnote{We denote the algebraic operations by $\sqcap$, $\sqcup$, $-$, and $\Rightarrow$ for \textit{meet}, \textit{join}, \textit{complement}, and \textit{implication}, respectively. As is usual in logical algebraization, valuations are expressed via homomorphisms $v: \mathcal{L}_{\Sigma_\circ} \to \mathfrak{B}$ as follows: $v(\#\varphi) = \tilde{\#} \; v(\varphi)$ for $\# \in \{\neg, \circ\}$, $v(\sim \! \varphi) = - v(\varphi)$, and $v(\varphi \# \psi) = v(\varphi) \; \tilde{\#} \; v(\psi)$ for \mbox{$\# \in \{\lor, \land, \to\}$} and $\tilde{\#} \in \{\sqcup, \sqcap, \Rightarrow\}$, respectively.}:

\begin{definition}[BALFI]
A Boolean algebra with \textbf{LFI} operators (\textit{BALFI}) is an algebra $\mathfrak{B} = \langle A, \sqcap, \sqcup, -, \Rightarrow, \tilde{\neg}, \tilde{\circ}, 0, 1 \rangle$, obtained by expanding a Boolean algebra $\mathfrak{A} = \langle A, \sqcap, \sqcup, -, \Rightarrow, 0, 1 \rangle$ with the unary operators $\tilde{\neg}, \tilde{\circ}$. For all $x \in A$: a \textit{BALFI} for \textbf{RmbC} is an algebra $\mathfrak{B}$ such that \mbox{$x \sqcup \tilde{\neg x} = 1$} and $x \sqcap \tilde{\neg} x \sqcap \tilde{\circ} x = 0$; A \textit{BALFI} for \textbf{RmbCciw} is a \textit{BALFI} for \textbf{RmbC} such that $\tilde{\circ} x = - (x \sqcap \tilde{\neg} x)$. A \textit{BALFI} for \textbf{RCbr} is a \textit{BALFI} for \textbf{RmbCciw} such that $\tilde{\neg} \tilde{\neg} x = x$. Given $\mathbb{L} =$ {\textbf{mbC}, \textbf{mbCciw}, \textbf{Cbr}}, the class of BALFIs for \textbf{R}$\mathbb{L}$ is denoted $\mathbb{B}(\text{R}\mathbb{L})$.
\label{definicao_balfis_RLs}
\end{definition}

\begin{definition}[Logical consequence in \textit{BALFI}s]
Let $\mathfrak{B} \in \mathbb{B}(\text{R}\mathbb{L})$. (i) We say that $\varphi$ is valid in $\mathfrak{B}$, that is, $\models_{\mathfrak{B}} \varphi$, if $v(\varphi) = 1$ for every homomorphism $v: \mathcal{L}_{\Sigma_\circ} \to \mathfrak{B}$; (ii) A formula $\varphi$ is valid in $\mathbb{B}(\text{R}\mathbb{L})$, that is, $\models_{\mathbb{B}(\text{R}\mathbb{L})}$, if it is valid in every $\mathfrak{B} \in \mathbb{B}(\text{R}\mathbb{L})$; (iii) For every $\Gamma \cup \{\varphi\} \subseteq \mathcal{L}_{\Sigma_\circ}$, we say that $\varphi$ is a consequence of $\Gamma$ in $\mathbb{B}(\text{R}\mathbb{L})$, that is, $\Gamma \models_{\mathbb{B}(\text{R}\mathbb{L})} \varphi$, if either $\varphi$ is valid in $\mathbb{B}(\text{R}\mathbb{L})$ or there exists $\varphi_1, ..., \varphi_n \in \Gamma$, such that $(\varphi_1 \land...\land \varphi_n) \to \varphi$ is valid in $\mathbb{B}(\text{R}\mathbb{L})$.
\label{definicao_consequencia_logica_BALFI}
\end{definition}

The soundness and completeness theorem---that is, for every $\Gamma \cup \{\varphi\} \subseteq \mathcal{L}_{\Sigma_\circ}$, $\Gamma \vdash_{R\mathbb{L}} \varphi$ if and only if $\Gamma \models_{\mathbb{B}(\text{R}\mathbb{L})} \varphi$---is obtained\footnote{See \cite[p.8]{CAR_CON_FUEN_REPLAC} and \cite[p.9]{TESTA_CON_MARTIN_CBR}.}. Since in logic \textbf{Cbr} the property \ref{propriedades_relacao_consistencia_A_e_neg_A_e_preservacao_equiv_de_o_Cie_Cbr} (ii) holds, the inference rule $(R_\circ)$ is superfluous; that is, it can be derived in the self-extensional logic \textbf{RCbr}. Therefore, \textbf{RCbr} = \textbf{Cbr} $+ (R_\neg)$. In \cite[p.9-18]{TESTA_CON_MARTIN_CBR}, the authors present an interesting algebraic model $\mathfrak{B} = \mathbb{B}(\text{\textbf{RCbr}})$ that validates all its axioms while remaining a paraconsistent \textbf{LFI}. Furthermore---and importantly---$\mathfrak{B}$ is a countermodel for $\not\models_{RCbr} \circ \!\circ\! \varphi$, $\not\models_{RCbr} \circ \varphi \to \circ \!\circ\! \varphi$, $\not\models_{RCbr} (\varphi \land \circ\varphi) \to \circ \!\circ\! \varphi$, and $\not\models_{RCbr} (\neg \varphi\land \circ \varphi) \to \circ \!\circ\! \varphi$, which makes the \textbf{RCbr} logic ideal for belief revision purposes\footnote{Without these countermodel, the \textbf{AGM}$\circ$ system would be unable to overturn beliefs, since they would automatically be considered strongly accepted. See \cite{Testetal} for more specific information about \textbf{AGM}$\circ$ system and its epistemic attitudes.}.

\section{AGM\texorpdfstring{$p_{abd}$}{} system preliminaries}
\label{agmpabd_preliminaries}

Thanks to the special properties of the logic \textbf{RCbr}, the \textbf{AGM}$p_{abd}$ system was developed to allow both \textbf{CPL} and \textbf{RCbr} as underlying logics, without substantial changes to the statements, definitions and proofs\footnote{This is possible mainly due to the following reasons: the guarantee of substitutions between formulas, the satisfiability of classical properties, and the non-direct use of the paraconsistent connectives $\neg$ and $\circ$ in its postulates and constructions. As we have seen, the logic \textbf{RCbr}, as an extension of \textbf{mbC}, is a standard Tarskian logic (definition \ref{definicao_logica_tarskiana_standard}) and supraclassical (definition \ref{definicao_logica_tarskiana_supraclassuca}). Thus, it satisfies both structural substitutions and fundamental rules and properties of \textbf{CPL}, such as \textit{modus ponens}, the \textit{deduction theorem}, \textit{disjunction of premises} and \textit{proof by cases} (see properties \ref{propriedades_teorema_da_deducao_disjuncao_das_premissas}). Moreover, the logic \textbf{RCbr} is self-extensional, i.e., it satisfies the \textit{replacement} property, which naturally allows substitutions between logically equivalent formulas---a particularly important feature for the \textit{extensionality} postulate and its developments, as we shall see. Furthermore, substitutions at the metalogical level, according to well-known conventions in set theory and commonly used in manipulating the properties of the Tarskian consequence operator $Cn$, are evidently natural.}. This quite special feature makes paraconsistency optional, should it be convenient to overcome problematic scenarios in the classical world, as we shall see below. With this in mind, for reasons of simplification, the notation $\mathbb{L} \in$ \{\textbf{CPL}, \textbf{RCbr}\} and the subscripts $\vdash_{\mathbb{L}}$, $\nvdash_{\mathbb{L}}$, $\equiv_{\mathbb{L}}$ and $Cn_{\mathbb{L}}$, used many times throughout this article, should be considered a metatheoretical indicator, such that each statement, definition, theorem and proof, when pertaining to both logics, is presented only once, with $\mathbb{L}$ being a parameter that can be instantiated as either \textbf{CPL} or \textbf{RCbr}. In both cases, we have that $\mathbb{L} = \langle \mathcal{L}_{\Sigma_{*}}, Cn_{\mathbb{L}} \rangle$ is a standard and supraclassical Tarskian logic, where, on the one hand, in the case of $\mathbb{L} =$ \{\textbf{CPL}\}, we should consider the language $\mathcal{L}_{\Sigma_*}$ as generated by the signature $\Sigma_* = \{\land, \lor, \to, \til, \bot, \top \}$, where $\til$ is the primitive classical negation. On the other hand, when $\mathbb{L} =$ \{\textbf{RCbr}\}, we should consider the language $\mathcal{L}_{\Sigma_*}$ as generated by the signature $\Sigma_* = \{\land, \lor, \to, \neg, \circ \}$---the same used in the exposition of \textbf{mbC}s in Section \ref{rcbr_logic}. In this case, the \textit{falsum} particle and the strong negation will be defined by \mbox{$\bot_\psi =_{\text{def}} \psi \land \neg \psi \land \circ \psi$} and (ii) $\til_\psi \varphi =_{\text{def}} (\varphi \to \bot_\psi)$.

At this point, I present some important definitions for the following sections:

\begin{definition}  \label{conjunto_th_de_todos_as_teorias_agmpabd}
    Let $\mathbb{L} \in \{\textbf{CPL}, \textbf{RCbr}\}$. The set $Th(\mathbb{L})$ of all theories of the logic $\mathbb{L}$ is given by:
    
    \centering
    $Th(\mathbb{L})=\{\Theta \subseteq \mathcal{L}_{\Sigma_*} \ : \ \Theta=Cn_{\mathbb{L}}(\Theta)\}.$
\end{definition}

\begin{definition}[Belief set]
     A set $\Theta \subseteq \mathcal{L}_{\Sigma_*}$ of sentences is a belief set if, and only if, $\Theta \in Th(\mathbb{L})$.
\label{definicao_conj_crencas_agmpabd}
\end{definition}

Since $\Theta$ is a belief set, the (classical) triggers presented in Section \ref{taxogeral} can be defined as follows:

\begin{definition}[Classical abductive process triggers] \label{gatilhos_processo_abdutivo_classico_agmpabd}
Let $\mathbb{L} = \{\textbf{CPL}\}$, $\Theta \cup \{\varphi\} \subseteq \mathcal{L}_{\Sigma_{*}}$ be a set of formulas and $\Theta \in Th(\mathbb{L})$. $\varphi$ is a trigger of the abductive process with respect to $\Theta$ if:
\end{definition}
    
\begin{description}
\leftskip 40pt
    \item (i) \textit{Abductive novelty}: $\varphi \notin \Theta$ and $\til \varphi \notin \Theta$;
    \item (ii) \textit{Abductive anomaly}: $\varphi \notin \Theta$ and $\til \varphi \in \Theta$.
\end{description}

Classically, the abductive process, with its triggers and its operations $\expfip$ and $\revfipi$ of abductive expansion and (internal) revision \textbf{AGM}$p_{abd}$, respectively, as well as other components of the taxonomy, can be better detailed visually according to Figure \ref{figura_processo_abdutivo_classico_agmpabd}. Let us recall, however, that only the abductive expansion operation $\expfip$ is the focus of this paper:

\begin{figure}[h!]
\begin{center}
\includegraphics[width=\textwidth, keepaspectratio=true]{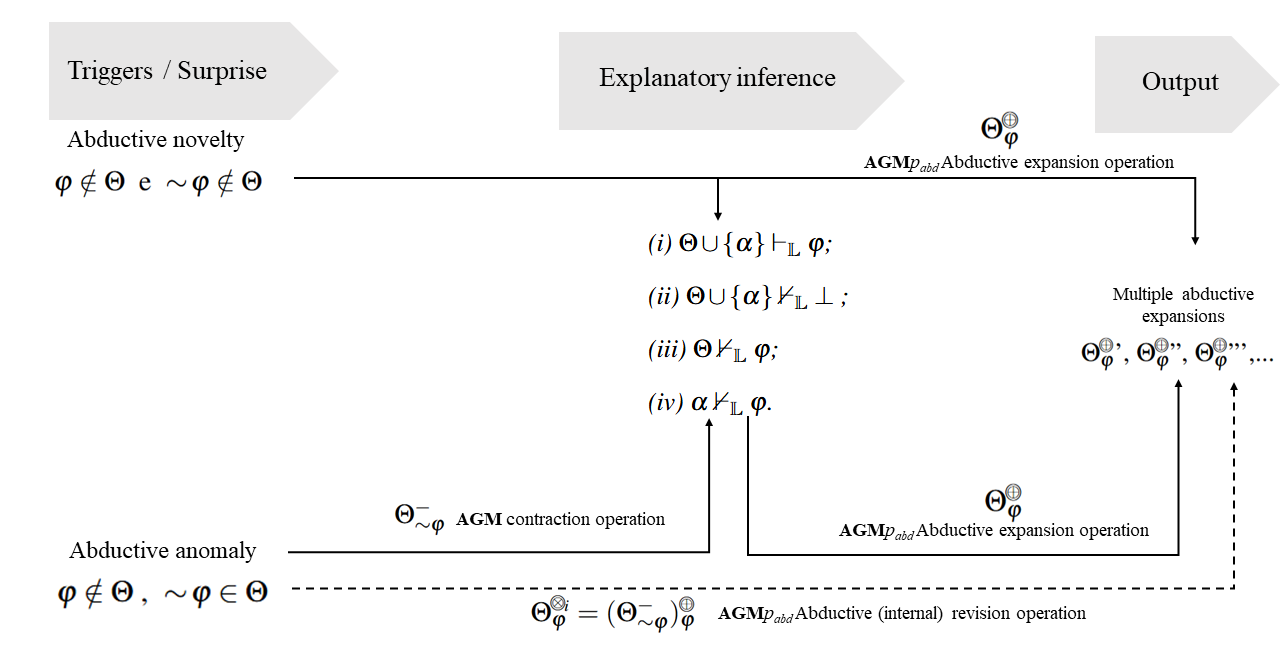}
\caption{The classical abductive process and the AGM$p_{abd}$ operations.}
\label{figura_processo_abdutivo_classico_agmpabd}
\end{center}
\end{figure}
\FloatBarrier

In order to make the exposition and the formal proofs ahead more simplified, as we shall see, the relation between each of these triggers and their respective operations is established by means of \textit{appropriate pairs}. Thus, in the classical case, we have the following definitions.

\begin{definition}[Appropriate pair for classical \textbf{AGM}$p_{abd}$ abductive expansion] \label{definição_par_apropriado_expansao_classica_agmpabd}
Let $\mathbb{L} = \{\textbf{CPL}\}$, $\Theta \cup \{\varphi\} \subseteq \mathcal{L}_{\Sigma_*}$ be a set of formulas and $\Theta \in Th(\mathbb{L})$. We say that $(\Theta, \varphi)$---and its substitutions---is an appropriate pair for classical \textbf{AGM}$p_{abd}$ abductive expansion---notation $\papfi$---if $\varphi$ is an abductive novelty trigger (definition \ref{gatilhos_processo_abdutivo_classico_agmpabd} (i)), i.e., if $\varphi \notin \Theta$ and $\til \varphi \notin \Theta$.
\end{definition}

Let us now consider the paraconsistent context. When $\mathbb{L} = \{\textbf{RCbr}\}$, the richness of the language allows us to consider, combinatorially, for $\varphi \notin \Theta$, four triggers, according to the following definition.

\begin{definition}[Paraconsistent abductive process triggers] \label{gatilhos_paraconsistentes_agmpabd}
Let $\mathbb{L} = \{\textbf{RCbr}\}$, $\Theta \cup \{\varphi\} \subseteq \mathcal{L}_{\Sigma_*}$ be a set of formulas and $\Theta \in Th(\mathbb{L})$. $\varphi$ is a trigger of the abductive process with respect to $\Theta$ if:
\end{definition}

\begin{description}
\leftskip 40pt
    \item (i) \textit{Strong abductive novelty}: $\varphi \notin \Theta$, $\circ \varphi \notin \Theta$ and $\neg \varphi \notin \Theta$;
    \item (ii) \textit{Weak abductive novelty}: $\varphi \notin \Theta$, $\circ \varphi \in \Theta$ and $\neg \varphi \notin \Theta$;
     \item (iii) \textit{Weak abductive anomaly}: $\varphi \notin \Theta$, $\circ \varphi \notin \Theta$ and $\neg \varphi \in \Theta$;
     \item (iv) \textit{Strong abductive anomaly}: $\varphi \notin \Theta$, $\circ \varphi \in \Theta$ and $\neg \varphi \in \Theta$.  
\end{description}

Thus, analogously to the classical perspective, Figure \ref{figura_processo_abdutivo_paraconsistente_agmpabd} details the abductive process in light of the new triggers and their respective \textbf{AGM}$p_{abd}$ operations.

\begin{figure}[h!]
\begin{center}
\includegraphics[width=\textwidth, keepaspectratio=true]{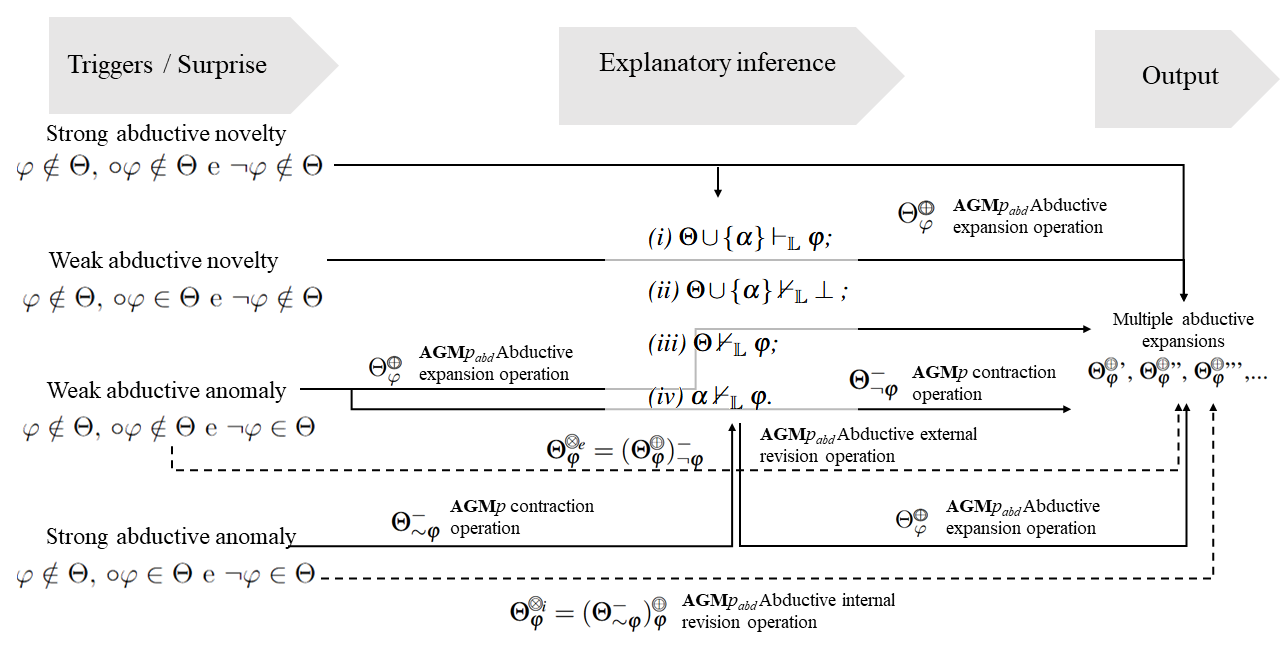}
\caption{The paraconsistent abductive process and the AGM$p_{abd}$ operations.}
\label{figura_processo_abdutivo_paraconsistente_agmpabd}
\end{center}
\end{figure}
\FloatBarrier

Unlike the classical case, therefore, the increased expressive power of the language of \textbf{RCbr}---and of any \textbf{LFI}, in fact---allows us to consider four more sophisticated and interesting interpretations of the initial surprise when faced with a surprising fact. The definitions of \textit{appropriate pairs} for the respective operations, in the paraconsistent case, introduce new elements and require some care. Note that the first three triggers---strong and weak abductive novelties and weak abductive anomaly---can initiate an abductive expansion operation $\expfip$. So, we may define:

\begin{definition}[Appropriate pair for paraconsistent \textbf{AGM}$p_{abd}$ abductive expansion] \label{definição_par_apropriado_expansao_paraconsistente_agmpabd}
Let $\mathbb{L} = \{\textbf{RCbr}\}$, $\Theta \cup \{\varphi\} \subseteq \mathcal{L}_{\Sigma_*}$ be a set of formulas and $\Theta \in Th(\mathbb{L})$. We say that $(\Theta, \varphi)$---and its substitutions---is an appropriate pair for paraconsistent abductive expansion---notation $\papfip$---if (i) $\varphi$ is a strong abductive novelty trigger for $\Theta$ or (ii) $\varphi$ is a weak abductive novelty trigger for $\Theta$ or (iii) $\varphi$ is a weak abductive anomaly trigger for $\Theta$ (definition \ref{gatilhos_paraconsistentes_agmpabd}).
\end{definition}

However, since the logic \textbf{RCbr} is an extension of \textbf{mbCciw}, as we have seen, and $\til\varphi \equiv_{\textbf{mbCciw}} \circ \varphi \land \neg \varphi$ (footnote \ref{rodape_mbcciw_properties}), as well as $\Theta \in Th(\mathbb{L})$, then the following observation conveniently shows that, when using strong negation, the strong abductive novelty, weak abductive novelty and weak abductive anomaly triggers in the \textbf{AGM}$p_{abd}$ system ``collapse'' into the classical case\footnote{The differentiated treatment for each of these cases, within the paraconsistent abductive expansion operation, is precisely the proposal of the \textbf{AGM}$\circ_{abd}$ system to be presented in the second paper.}.

\begin{remark}\label{remark_par_apropriado_expansao_negacao_forte_agmpabd}
It is the case that $\papfip$ (definition \ref{definição_par_apropriado_expansao_paraconsistente_agmpabd}) if and only if $\varphi \notin \Theta$ and $\til\varphi \notin \Theta$.
\end{remark}

This means that $\papfi$ denotes an appropriate pair for abductive expansion in the classical case---that is, when $\mathbb{L} = \{\textbf{CPL}\}$ is the underlying logic---and $\papfip$ denotes an appropriate pair for paraconsistent abductive expansion---namely, when $\mathbb{L} = \{\textbf{RCbr}\}$ is the underlying logic. However, since both are defined by (i) $\varphi \notin \Theta$ and (ii) $\til\varphi \notin \Theta$, they can be used interchangeably in statements and formal proofs. By convention and for notational simplification, therefore, the following more general definition will be adopted.

\begin{definition}[Appropriate pair for \textbf{AGM}$p_{abd}$ abductive expansion]
Let $\mathbb{L} \in \{\textbf{CPL}, \textbf{RCbr}\}$, $\Theta \cup \{\varphi\} \subseteq \mathcal{L}_{\Sigma_{*}}$ be a set of formulas and $\Theta \in Th(\mathbb{L})$. $(\Theta, \varphi)$ is an appropriate pair for \textbf{AGM}$p_{abd}$ abductive expansion (classical or paraconsistent)---notation $\papfig$---if (i) $\varphi \notin \Theta$ and (ii) $\til \varphi \notin \Theta$.
\label{definicao_par_apropriado_expansao_LPC_RCbr_agmpabd}
\end{definition}
\section{AGM\texorpdfstring{$p_{abd}$}{} abductive expansion  - postulates}
\label{agmpabd_expansion_postulates}

Before presenting the postulates of the operation, as a first step, I propose to formalize the following taxonomic component, the adequate explanatory inference, described in Section \ref{taxogeral}, accordingly to the rationality criterion of the \textit{explanatory adequacy}, with its respective requirements, as follows:

\begin{definition} \label{conj_hip_abdutivas_agmpabd}
Let $\mathbb{L} \in \{\textbf{CPL}, \textbf{RCbr}\}$, $\Theta \cup \{\alpha\} \cup \{\varphi\} \subseteq \mathcal{L}_{\Sigma_*}$ and $\varphi \notin \Theta$. The set $H(\Theta, \varphi)$, given by: $$H(\Theta,\varphi)=\{\alpha \in \mathcal{L}_{\Sigma_*} \ : (i) \ \Theta \cup \{\alpha\} \vdash_{\mathbb{L}} \varphi, \ (ii) \ \Theta \cup \{\alpha\} \nvdash_{\mathbb{L}} \bot \ \mbox{ and } \ (iii) \ \{\alpha\} \nvdash_{\mathbb{L}} \varphi\}$$
is the set of \textit{abductive hypotheses} for $(\Theta,\varphi)$.
\end{definition}

The following (desirable) results, in line with the rationality criterion of the \textit{surprising fact}, can be easily demonstrated.

\begin{lemma} [Partial self-explanation]  \label{lema_sempre_existe_auto_explicacao_agmpabd}
    If $\papfig$ is the case (definition \ref{definicao_par_apropriado_expansao_LPC_RCbr_agmpabd}), then there always exists some $\alpha \in \mathcal{L}_{\Sigma_*}$ that satisfies criteria (i)-(iii) of definition \ref{conj_hip_abdutivas_agmpabd}. Hence, $H(\Theta, \varphi) \neq \emptyset$.
\end{lemma}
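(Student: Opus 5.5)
The plan is to exhibit an explicit witness built from $\varphi$ and a belief already in $\Theta$, of the form $\alpha := \theta \to \varphi$ with $\theta \in \Theta$. This is what the name ``partial self-explanation'' suggests: the hypothesis mentions $\varphi$ but explains it only together with $\Theta$. Everything will be checked using only properties the paper says both $\mathbb{L} \in \{\textbf{CPL}, \textbf{RCbr}\}$ share: Tarskian closure, \textit{modus ponens}, the deduction theorem and proof by cases (Appendix \ref{apendiceA}). The same argument then works for both instances of $\mathbb{L}$.

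First, condition (i) of definition \ref{conj_hip_abdutivas_agmpabd}. Since $\theta \in \Theta$ and $\theta \to \varphi$ is the hypothesis, one application of \textit{modus ponens} gives $\Theta \cup \{\theta\to\varphi\} \vdash_{\mathbb{L}} \varphi$.

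Second, condition (ii), non-triviality. Suppose $\Theta \cup \{\theta \to \varphi\} \vdash_{\mathbb{L}} \bot$. Because $\theta \in \Theta$, the sets $\Theta \cup \{\theta\to\varphi\}$ and $\Theta \cup \{\varphi\}$ have the same consequences: each of $\theta \to \varphi$ and $\varphi$ is derivable from $\theta$ together with the other. So $\Theta \cup \{\varphi\} \vdash_{\mathbb{L}} \bot$. The deduction theorem then gives $\varphi \to \bot \in Cn_{\mathbb{L}}(\Theta) = \Theta$, i.e. $\til\varphi \in \Theta$. In the \textbf{RCbr} case this uses $\til_\psi \varphi =_{\text{def}} \varphi \to \bot_\psi$, with the choice of $\psi$ immaterial by replacement. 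This contradicts clause (ii) of $\papfig$ (definition \ref{definicao_par_apropriado_expansao_LPC_RCbr_agmpabd}), via remark \ref{remark_par_apropriado_expansao_negacao_forte_agmpabd} in the paraconsistent case. Clause (i) of the pair, $\varphi \notin \Theta$, is what makes $H(\Theta,\varphi)$ well defined at all.

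Third, condition (iii), $\{\theta \to \varphi\} \nvdash_{\mathbb{L}} \varphi$, which I expect to be the main obstacle. By proof by cases and the deduction theorem, $\theta\to\varphi \vdash_{\mathbb{L}} \varphi$ holds exactly when $\vdash_{\mathbb{L}} \theta \lor \varphi$. So the task reduces to choosing $\theta \in \Theta$ with $\nvdash_{\mathbb{L}} \theta \lor \varphi$. My first attempt is a contradiction-style argument. Suppose every $\theta \in \Theta$ has $\vdash_{\mathbb{L}} \theta \lor \varphi$; then $\Theta$ sits inside $Cn_{\mathbb{L}}(\{\til\varphi\}) \cup \{\theta : \vdash \theta\lor\varphi\}$, and I would try to combine this with $\varphi \notin \Theta$ and $\til\varphi\notin\Theta$ to force a contradiction. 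I should flag honestly that this step can only succeed when $\Theta$ contains a genuinely non-logical belief independent of $\varphi$. For $\Theta = Cn_{\mathbb{L}}(\emptyset)$, every $\theta\in\Theta$ is a theorem, so $\theta\lor\varphi$ is a theorem, and indeed any $\alpha$ meeting (i) must already satisfy $\alpha\vdash\varphi$, violating (iii). So in writing this step I would make explicit the assumption that $\Theta$ contains some $\theta$ with $\nvdash_{\mathbb{L}}\theta\lor\varphi$, which I take to be implicitly intended by the ``background theories'' reading in Section \ref{taxogeral}. With such a $\theta$, (iii) follows directly, and the three conditions together give $\theta\to\varphi\in H(\Theta,\varphi)$, hence $H(\Theta,\varphi)\neq\emptyset$.
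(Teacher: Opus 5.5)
Your witness $\theta\to\varphi$ with $\theta\in\Theta$ is the one the paper uses, and conditions (i) and (ii) go through as there. Your reduction of (ii) to $\Theta\cup\{\varphi\}\nvdash_{\mathbb{L}}\bot$ fills in the step the paper leaves implicit.

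At (iii), your objection is correct and exposes an error in the paper. Its proof takes an \emph{arbitrary} $\delta\in\Theta$ and simply asserts $\delta\to\varphi\nvdash_{\mathbb{L}}\varphi$. That fails whenever $\vdash_{\mathbb{L}}\delta\lor\varphi$, in particular for every theorem, and every $\Theta\in Th(\mathbb{L})$ contains theorems. Your example $\Theta=Cn_{\mathbb{L}}(\emptyset)$, with $\varphi$ a propositional variable, is an appropriate pair with $H(\Theta,\varphi)=\emptyset$, in \textbf{CPL} and \textbf{RCbr} alike. So the lemma as stated is false, and an extra hypothesis is unavoidable. Your contradiction-style first attempt at (iii) cannot succeed, as you anticipate.

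Two refinements of your fix. First, your hypothesis is also necessary, so the corrected lemma is an equivalence. If $\alpha\in H(\Theta,\varphi)$, compactness gives a conjunction $\theta\in\Theta$ of finitely many premises with $\alpha\vdash_{\mathbb{L}}\theta\to\varphi$. If $\vdash_{\mathbb{L}}\theta\lor\varphi$, disjunction of premises would then give $\alpha\vdash_{\mathbb{L}}\varphi$, violating (iii). In both logics $\{\theta : \vdash_{\mathbb{L}}\theta\lor\varphi\}=Cn_{\mathbb{L}}(\{\til\varphi\})$, so the union in your draft is redundant. Hence, for an appropriate pair, $H(\Theta,\varphi)=\emptyset$ holds exactly when $\Theta\subseteq Cn_{\mathbb{L}}(\{\til\varphi\})$.

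This covers theories with real content too, e.g.\ $\Theta=Cn_{\mathbb{L}}(\{q\to p\})$ with $\varphi=q$. So the condition is not implicit in a ``background theories'' reading. It must be stated explicitly, in the lemma or in the definition of appropriate pair. Every later result that relies on this lemma inherits it, such as the non-emptiness of $\Theta\top\varphi$ and the very satisfiability of the success postulate.

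Second, in \textbf{RCbr} the direction you actually need is from $\theta\to\varphi\vdash_{\mathbb{L}}\varphi$ to $\vdash_{\mathbb{L}}\theta\lor\varphi$. It rests on the classical positive theorem $\theta\lor(\theta\to\varphi)$, or equivalently on cases on $\theta$ versus $\til\theta$. Proof by cases on the paraconsistent $\neg\theta$ does not suffice, since in general $\neg\theta\nvdash_{\textbf{RCbr}}\theta\to\varphi$ (otherwise $\theta,\neg\theta\vdash_{\textbf{RCbr}}\varphi$).
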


\begin{remark} \label{alpha_notin_theta_agmpabd}
    Let $H(\Theta, \varphi)$ be the set of abductive hypotheses for the pair $(\Theta,\varphi)$ (definition \ref{conj_hip_abdutivas_agmpabd}). It is the case that \mbox{$\Theta \cap H(\Theta,\varphi)=\emptyset$}. 
\end{remark}

Thanks to the logic \textbf{RCbr}, the following result can be obtained, whether in the classical or the paraconsistent case:

\begin{lemma}  \label{conj_hipoteses_abdutivas_iguais_varphi1_varphi2_agmpabd}
    Let $\papfig$ and $\pappsig$. Let $H(\Theta, \varphi)$ and $H(\Theta, \psi)$ be the sets of abductive hypotheses for $\varphi$ and $\psi$, respectively. It is the case that if $\vdash_{\mathbb{L}} \varphi \leftrightarrow \psi$, then $H(\Theta, \varphi) = H(\Theta, \psi)$.    
\end{lemma}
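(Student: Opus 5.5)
The plan is a double inclusion. By symmetry of the hypothesis $\vdash_{\mathbb{L}} \varphi \leftrightarrow \psi$, it suffices to prove $H(\Theta, \varphi) \subseteq H(\Theta, \psi)$; the reverse inclusion follows by exchanging the roles of $\varphi$ and $\psi$.

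Both $H(\Theta,\varphi)$ and $H(\Theta,\psi)$ are well defined, since $\papfig$ and $\pappsig$ give $\varphi \notin \Theta$ and $\psi \notin \Theta$. Fix $\alpha \in H(\Theta, \varphi)$ and check the three clauses of definition \ref{conj_hip_abdutivas_agmpabd} for $\psi$.

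Clause (ii), $\Theta \cup \{\alpha\} \nvdash_{\mathbb{L}} \bot$, does not mention the explanandum at all, so it transfers verbatim.

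For clause (i), from $\Theta \cup \{\alpha\} \vdash_{\mathbb{L}} \varphi$ and the theorem $\varphi \leftrightarrow \psi$, I obtain $\varphi \to \psi$ by the $\land$-elimination axioms of \textbf{CPL}$^+$. By monotonicity this is available from $\Theta \cup \{\alpha\}$, and modus ponens gives $\Theta \cup \{\alpha\} \vdash_{\mathbb{L}} \psi$. The only point needing care is that, for \textbf{RCbr}, derivations from premises follow definition \ref{definicao_derivacao_RmbC}. There one has some $\gamma_1, \dots, \gamma_n \in \Theta \cup \{\alpha\}$ with $\vdash (\gamma_1 \land \dots \land \gamma_n) \to \varphi$ (or $\vdash \varphi$ outright). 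Composing with the theorem $\varphi \to \psi$ by positive-classical reasoning yields $\vdash (\gamma_1 \land \dots \land \gamma_n) \to \psi$ (respectively $\vdash \psi$), which is exactly $\Theta \cup \{\alpha\} \vdash_{\mathbb{L}} \psi$. Equivalently, one may simply invoke that $\mathbb{L}$ is a standard Tarskian logic satisfying the deduction theorem and modus ponens (Appendix \ref{apendiceA}).

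For clause (iii), I argue by contraposition. Suppose $\{\alpha\} \vdash_{\mathbb{L}} \psi$. Using the theorem $\psi \to \varphi$ in the same way as above, this gives $\{\alpha\} \vdash_{\mathbb{L}} \varphi$, contradicting $\alpha \in H(\Theta, \varphi)$. Hence $\{\alpha\} \nvdash_{\mathbb{L}} \psi$, and so $\alpha \in H(\Theta, \psi)$.

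I expect no real obstacle. The one delicate step is confirming that the premise-based derivability of \textbf{RCbr}, with its global rule $(R_\neg)$, still allows closing consequences of $\Theta \cup \{\alpha\}$ under a theorem-level implication. This holds because $\vdash_{\text{R}\mathbb{L}}$ is transitive (cut) and monotone, i.e.\ Tarskian, as stated for \textbf{RCbr} in Section \ref{rcbr_logic}. Replacement (property \ref{propriedades_relacao_consistencia_A_e_neg_A_e_preservacao_equiv_de_o_Cie_Cbr} (iii)) is not strictly needed here, though it could alternatively justify substituting $\psi$ for $\varphi$ directly.
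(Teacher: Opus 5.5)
Your proof is correct, but it justifies the key step differently from the paper. The paper also argues by double inclusion over clauses (i)--(iii). It transfers all three clauses at once, saying that since $\vdash_{\mathbb{L}} \varphi \leftrightarrow \psi$, the two formulas ``can be substituted in any context'', and it explicitly appeals to the \emph{replacement} property of \textbf{RCbr}. You instead use only detachment along the theorems $\varphi \to \psi$ and $\psi \to \varphi$. You apply it directly for clause (i) and contrapositively for clause (iii). Clause (iii) is the one where a \emph{non}-derivability has to be transferred, and the paper's one-line substitution remark leaves that step implicit.

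Your route is more elementary and more general. In the definition of $H(\Theta,\varphi)$, $\varphi$ occurs only as the conclusion of a derivation, never under $\neg$ or $\circ$, so self-extensionality does no work here. Your argument therefore shows the lemma already holds in \textbf{mbC}, and indeed in any Tarskian logic with modus ponens and the \textbf{CPL}$^+$ conjunction axioms. The paper's framing of the lemma as obtained ``thanks to the logic \textbf{RCbr}'' thus claims more dependence than there is.

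The paper's appeal to replacement is quicker to state and matches how it treats extensionality elsewhere. Replacement only becomes genuinely necessary where $\varphi$ sits under $\neg$ or $\circ$, as in the paraconsistent triggers of definition \ref{gatilhos_paraconsistentes_agmpabd}.
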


Having presented the preliminary formal notions of the system, the postulates that characterize the \textbf{AGM}$p_{abd}$ abductive expansion operation are given below.

\begin{definition}[Postulates for \textbf{AGM}$p_{abd}$ abductive expansion] \label{definicao_postulados_expansao_abdutiva_agmpabd}
Let $\papfig$ (definition \ref{definicao_par_apropriado_expansao_LPC_RCbr_agmpabd}) and $H(\Theta, \varphi)$ be the set of abductive hypotheses for $(\Theta, \varphi)$ (definition \ref{conj_hip_abdutivas_agmpabd}). An \textbf{AGM}$p_{abd}$ abductive expansion of $\Theta$ by $\varphi$---denoted $\expfip$---is a function $\;\ooplusg:Th(\mathbb{L}) \times \mathcal{L}_{\Sigma_*} \to Th(\mathbb{L})$ from pairs of belief sets and sentences of the language to belief sets that satisfies the following postulates:
\end{definition}

\begin{description}
    \item[$\postp1$] $\expfip$ is a belief set \hfill (\textit{closure})
    \item[$\postp2$] $\Theta \subset \expfip$ \hfill (\textit{inclusion})
    \item[$\postp3$] $\expfip \cap H(\Theta,\varphi) \neq \emptyset$ \hfill (\textit{success})
    \item[$\postp4$] $\expfip \nvdash_{\mathbb{L}}\bot$ \hfill (\textit{non-triviality})
    \item[$\postp5$] If $\vdash_{\mathbb{L}} \varphi \leftrightarrow \psi$, then $\expfip = \exppsip$ \hfill (\textit{extensionality})
    \item[$\postp6$] $\expfip \subseteq Cn_{\mathbb{L}}(\expfilorp \cup \{\varphi\})$ \hfill (\textit{supplementary 1})
    \item[$\postp7$] If $\til \varphi \notin \expfilorp$, then $\expfilorp \subseteq \expfip$ \hfill (\textit{supplementary 2})
\end{description}

The first feature to be noted is the requirement that $\papfig$ be an appropriate pair for abductive expansion---that is, $\varphi \notin \Theta$ and $\til \varphi \notin \Theta$ and their substitutions---applied to the definition \ref{definicao_postulados_expansao_abdutiva_agmpabd} of the operation itself, as a prerequisite to the postulates. This is very relevant, because, in accordance with the rationality criteria and with the proposed taxonomy, the function $\ooplusg$ underlying the operation $\expfip$ cannot even be invoked in contexts where it is notably the case that $\varphi \in \Theta$ or $\til \varphi \in \Theta$. For, in that case, we have an ill-defined function and, therefore, inappropriate for any purpose. Postulate $\postp1$ is well known and requires no further consideration. Postulate $\postp2$ tells us that $\Theta$ is a \textit{proper} subset of $\expfip$. In other words, the operation does not consider $\expfip = \Theta$ as a legitimate expansion, which is fully justified by the rationality criterion of the \textit{surprising fact}. Postulate $\postp3$ of success, in turn, guarantees that there is at least one abductive hypothesis $\alpha \in H(\Theta, \varphi)$ in the expanded belief set. Notably, this postulate authorizes multiple possible expansions, according to the number of hypotheses present in the set $H(\Theta, \varphi)$\footnote{It is important to note the many distinctions, up to this point, in relation to the postulates proposed by Pagnucco \cite[p.103-105]{Pagnucco}. First, the author allows $\varphi \in \Theta$ and, therefore, that $\expfi = \Theta$. As discussed earlier, this is a situation that I believe is not desirable, whether in the context of abduction or in the context of explanation. According to observation \ref{varphi_in_anel_agmpabd} below, as in Pagnucco's system, it is the case that $\varphi \in \expfip$. However, in the \textbf{AGM}$p_{abd}$ abductive expansion operation, $\varphi$, the phenomenon to be explained, is found exclusively in the ``ring'' $\expfip \setminus \Theta$, effectively characterizing an expansion of the initial epistemic state. Moreover, unlike definition \ref{conj_hip_abdutivas_agmpabd} of $H(\Theta, \varphi)$, Pagnucco's notion of explanation \cite[p.79]{Pagnucco} does not block $\{\alpha\} \vdash \varphi$, something which, as discussed in Section \ref{taxogeral}, I consider inappropriate. Finally, Pagnucco's \textit{limited success} postulate, despite being quite elegant, depends on a very specific condition being satisfied: the adopted language must be \textit{finite}---see definition of abductive expansion \cite[p.102]{Pagnucco} and theorem 5.2.1 \cite[p.105]{Pagnucco}---a requirement that will not be adopted here.}. With these three postulates and, evidently, considering $\papfig$, as the definition requires, we obtain the following observation:

\begin{remark} \label{varphi_in_anel_agmpabd}
   If $\expfip$ satisfies postulates $\postp1$ - $\postp3$ from definition \ref{definicao_postulados_expansao_abdutiva_agmpabd}, then $\varphi \in \expfip\setminus\Theta$. 
\end{remark}

In accordance, once again, with the rationality criterion of the \textit{surprising fact}---and unlike Pagnucco's system---a \textit{failure} postulate is not necessary, as the following results indicate. This is, therefore, an always successful operation, given $\papfig$.

\begin{lemma} \label{lema_alpha_ou_fi_in_hip_abdutivas_agmpabd}
    Let $\papfig$ and $H(\Theta, \varphi)$ be the set of abductive hypotheses for $(\Theta, \varphi)$. If $\alpha \in H(\Theta,\varphi)$, then it is the case that $\alpha \lor \varphi \in H(\Theta, \varphi)$.
\end{lemma}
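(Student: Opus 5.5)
The plan is to verify directly that $\alpha \lor \varphi$ satisfies each of the three clauses (i)--(iii) of definition \ref{conj_hip_abdutivas_agmpabd}, using only the assumption that $\alpha$ satisfies them. I will use only structural properties that hold uniformly for $\mathbb{L} \in \{\textbf{CPL}, \textbf{RCbr}\}$: monotonicity, transitivity (cut), $\lor$-introduction, and \emph{disjunction of premises} (proof by cases). The last of these is available for \textbf{RCbr} because it is a standard, supraclassical Tarskian logic extending \textbf{mbC} (properties \ref{propriedades_teorema_da_deducao_disjuncao_das_premissas} in Appendix \ref{apendiceA}). The hypothesis $\papfig$, that is $\varphi \notin \Theta$, is only needed so that $H(\Theta,\varphi)$ is defined.

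\textbf{Clause (i).} I need $\Theta \cup \{\alpha \lor \varphi\} \vdash_{\mathbb{L}} \varphi$. By assumption, $\Theta \cup \{\alpha\} \vdash_{\mathbb{L}} \varphi$. By reflexivity and monotonicity, $\Theta \cup \{\varphi\} \vdash_{\mathbb{L}} \varphi$. Disjunction of premises then gives the claim.

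\textbf{Clause (ii).} I argue by contraposition. Suppose $\Theta \cup \{\alpha \lor \varphi\} \vdash_{\mathbb{L}} \bot$. Since $\alpha \vdash_{\mathbb{L}} \alpha \lor \varphi$ by $\lor$-introduction, cut and monotonicity give $\Theta \cup \{\alpha\} \vdash_{\mathbb{L}} \bot$. This contradicts clause (ii) for $\alpha$. In the \textbf{RCbr} case, $\bot$ is read as $\bot_\psi$; the argument does not depend on the choice of $\psi$.

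\textbf{Clause (iii).} Again by contraposition. If $\{\alpha \lor \varphi\} \vdash_{\mathbb{L}} \varphi$, then $\alpha \vdash_{\mathbb{L}} \alpha \lor \varphi$ together with transitivity yields $\{\alpha\} \vdash_{\mathbb{L}} \varphi$. This contradicts clause (iii) for $\alpha$.

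Combining the three clauses, $\alpha \lor \varphi \in H(\Theta,\varphi)$. I do not expect a genuine obstacle. The only point needing care is checking that disjunction of premises, used in clause (i), is legitimately available in \textbf{RCbr}, given its nonstandard derivation notion (definition \ref{definicao_derivacao_RmbC}). For this I will cite the appendix properties. Alternatively, I can reduce to theoremhood: from $\vdash (\bigwedge\Gamma \land \alpha) \to \varphi$, positive classical axioms yield $\vdash (\bigwedge\Gamma \land (\alpha \lor \varphi)) \to \varphi$, since $\textbf{CPL}^+$ is contained in \textbf{RCbr}.
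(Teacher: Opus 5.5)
Your proof is correct and follows the paper's route: disjunction of premises for clause (i), and cut with $\alpha \vdash_{\mathbb{L}} \alpha \lor \varphi$ for clause (ii). For clause (iii) the paper only asserts $\{\alpha \lor \varphi\} \nvdash_{\mathbb{L}} \varphi$, so your contrapositive argument through $\alpha \vdash_{\mathbb{L}} \alpha \lor \varphi$ supplies the justification the paper omits.
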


\begin{lemma} \label{lema_alpha_ou_fi_in_expansao_agmpabd}
If $\expfip$ satisfies postulates $\postp1$ - $\postp3$ from definition \ref{definicao_postulados_expansao_abdutiva_agmpabd}, then, for $\alpha \in H(\Theta, \varphi)$, it is the case that $\alpha \lor \varphi \in \expfip$.
\end{lemma}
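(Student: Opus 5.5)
By \postp3 (\emph{success}), there is some $\alpha \in \expfip \cap H(\Theta,\varphi)$. By Remark \ref{varphi_in_anel_agmpabd}, $\varphi \in \expfip$. The natural route is therefore not to work with an arbitrary $\alpha \in H(\Theta,\varphi)$ directly. Instead, I would show that $\alpha \lor \varphi$ follows from $\varphi$ alone, and then use the closure of the expanded set.

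\textbf{Step 1.} Since $\mathbb{L} \in \{\textbf{CPL},\textbf{RCbr}\}$ is a standard supraclassical Tarskian logic, it validates the positive axiom $\varphi \to (\alpha \lor \varphi)$. In \textbf{RCbr} this is among the \textbf{CPL}$^+$ axioms for $\lor$ included in \textbf{mbC}. Hence $\{\varphi\} \vdash_{\mathbb{L}} \alpha \lor \varphi$ by \emph{modus ponens}.

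\textbf{Step 2.} By \postp1, $\expfip$ is a belief set, so $\expfip = Cn_{\mathbb{L}}(\expfip)$. Remark \ref{varphi_in_anel_agmpabd}, which relies on \postp1--\postp3 and on the standing assumption $\papfig$, gives $\varphi \in \expfip$. By monotonicity of $Cn_{\mathbb{L}}$, $\alpha \lor \varphi \in Cn_{\mathbb{L}}(\expfip) = \expfip$.

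This holds for every $\alpha \in \mathcal{L}_{\Sigma_*}$, and in particular for every $\alpha \in H(\Theta,\varphi)$. It is worth remarking, via Lemma \ref{lema_alpha_ou_fi_in_hip_abdutivas_agmpabd}, that such disjunctions are themselves abductive hypotheses. So the lemma says that the expansion always contains all these weakened hypotheses, although they are not needed for the membership argument.

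\textbf{Main obstacle.} There is essentially none beyond justifying $\varphi \in \expfip$, which is exactly what Remark \ref{varphi_in_anel_agmpabd} supplies. If one wanted to avoid that remark, I would derive it directly. Take $\beta \in \expfip \cap H(\Theta,\varphi)$ from \postp3. From $\Theta \subset \expfip$ (\postp2) and condition (i) of Definition \ref{conj_hip_abdutivas_agmpabd}, $\Theta \cup \{\beta\} \vdash_{\mathbb{L}} \varphi$ with $\Theta \cup \{\beta\} \subseteq \expfip$. Closure then gives $\varphi \in \expfip$.

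The only care needed is to note that the argument uses only positive connectives. It therefore goes through uniformly for $\mathbb{L}$ instantiated as either \textbf{CPL} or \textbf{RCbr}.
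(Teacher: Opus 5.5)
Your proof is correct and is essentially the paper's own argument: obtain $\varphi \in \expfip$ from Remark~\ref{varphi_in_anel_agmpabd} (i.e., from $\postp1$--$\postp3$), then use $\varphi \vdash_{\mathbb{L}} \alpha \lor \varphi$ together with the closure given by $\postp1$. The paper additionally cites Lemma~\ref{lema_sempre_existe_auto_explicacao_agmpabd} to get $H(\Theta,\varphi) \neq \emptyset$, which your observation that the argument works for every $\alpha$ shows is not needed; your opening sentence also reuses $\alpha$ for the success witness, clashing with the statement's quantified $\alpha$, and the fresh $\beta$ you use later is the cleaner way to write it.
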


\begin{corollary} \label{corolario_alpha_ou_fi_sempre_expande_agmpabd}
    On the one hand, if $\expfip$ satisfies postulates $\postp1$ - $\postp3$ from definition \ref{definicao_postulados_expansao_abdutiva_agmpabd}, then, by lemma \ref{lema_alpha_ou_fi_in_expansao_agmpabd}, for $\alpha \in H(\Theta, \varphi)$, it is the case that $\alpha \lor \varphi \in \expfip$. On the other hand, by lemma \ref{lema_alpha_ou_fi_in_hip_abdutivas_agmpabd}, if $\papfig$ is the case---a requirement of definition \ref{definicao_postulados_expansao_abdutiva_agmpabd} of the operation $\expfip$---then it is also the case that $\alpha \lor \varphi \in H(\Theta, \varphi)$. Thus, it is always the case that $\alpha \lor \varphi \in \expfip \cap H(\Theta, \varphi)$. Hence, these conditions are sufficient for a failure postulate not to be necessary.
\end{corollary}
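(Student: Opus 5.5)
The plan is to assemble the corollary from Lemma \ref{lema_alpha_ou_fi_in_expansao_agmpabd} and Lemma \ref{lema_alpha_ou_fi_in_hip_abdutivas_agmpabd}. Because the claim ``for $\alpha \in H(\Theta,\varphi)$'' could hold vacuously, I first make sure it is not. Since $\papfig$ holds, Lemma \ref{lema_sempre_existe_auto_explicacao_agmpabd} gives $H(\Theta,\varphi)\neq\emptyset$, and postulate $\postp3$ even supplies some $\alpha \in \expfip \cap H(\Theta,\varphi)$. I then fix an arbitrary $\alpha \in H(\Theta,\varphi)$ and show separately that $\alpha\lor\varphi \in \expfip$ and that $\alpha \lor \varphi \in H(\Theta,\varphi)$.

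For membership in $\expfip$, I use Remark \ref{varphi_in_anel_agmpabd}, which gives $\varphi \in \expfip$. In both $\mathbb{L} = \textbf{CPL}$ and $\mathbb{L} = \textbf{RCbr}$ (via the positive axioms of $\textbf{CPL}^+$) we have $\varphi \vdash_{\mathbb{L}} \alpha\lor\varphi$. Closure $\postp1$ then gives $\alpha\lor\varphi\in\expfip$. This is Lemma \ref{lema_alpha_ou_fi_in_expansao_agmpabd}, and I take it as given.

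For membership in $H(\Theta,\varphi)$, I check the three clauses of Definition \ref{conj_hip_abdutivas_agmpabd}.
\begin{description}
\item (i) We have $\Theta\cup\{\alpha\}\vdash_{\mathbb{L}}\varphi$ and $\Theta\cup\{\varphi\}\vdash_{\mathbb{L}}\varphi$. Proof by cases (Properties \ref{propriedades_teorema_da_deducao_disjuncao_das_premissas}) yields $\Theta\cup\{\alpha\lor\varphi\}\vdash_{\mathbb{L}}\varphi$.
\item (ii) Suppose $\Theta\cup\{\alpha\lor\varphi\}\vdash_{\mathbb{L}}\bot$. Disjunction of premises and monotonicity give $\Theta\cup\{\varphi\}\vdash_{\mathbb{L}}\bot$. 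The deduction theorem then gives $\Theta\vdash_{\mathbb{L}}\varphi\to\bot$, i.e.\ $\til\varphi\in\Theta$, since $\Theta$ is closed. This contradicts clause (ii) of $\papfig$.
\item (iii) If $\{\alpha\lor\varphi\}\vdash_{\mathbb{L}}\varphi$, then $\{\alpha\}\vdash_{\mathbb{L}}\varphi$, because $\alpha\vdash_{\mathbb{L}}\alpha\lor\varphi$ and consequence is transitive. This contradicts $\alpha\in H(\Theta,\varphi)$.
\end{description}
This is Lemma \ref{lema_alpha_ou_fi_in_hip_abdutivas_agmpabd}.

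Combining the two parts gives $\alpha\lor\varphi\in\expfip\cap H(\Theta,\varphi)$ for every $\alpha \in H(\Theta,\varphi)$, and $H(\Theta,\varphi)$ is nonempty. So whenever the operation is defined, i.e.\ whenever $\papfig$ holds, success is achieved in a canonical way, and no failure postulate of Pagnucco's kind is needed.

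The only delicate step is clause (ii) in the paraconsistent case, where $\bot$ and $\til$ are the defined particles $\bot_\psi$ and $\til_\psi$. There I must make sure the deduction theorem and the reading $\til\varphi=\varphi\to\bot_\psi$ line up, so that the contradiction with $\til\varphi\notin\Theta$ goes through. Remark \ref{remark_par_apropriado_expansao_negacao_forte_agmpabd} and supraclassicality of $\textbf{RCbr}$ should settle this.
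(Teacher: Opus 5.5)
Your proposal is correct and follows the paper's plan. The corollary is Lemma \ref{lema_alpha_ou_fi_in_expansao_agmpabd} ($\varphi\in\expfip$ plus closure) combined with Lemma \ref{lema_alpha_ou_fi_in_hip_abdutivas_agmpabd}, and your expansion half and clause (i) match the paper's proofs.

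The one real divergence is clause (ii). You transfer triviality from $\Theta\cup\{\alpha\lor\varphi\}$ to $\Theta\cup\{\varphi\}$ and then contradict $\til\varphi\notin\Theta$. The paper instead transfers it to $\Theta\cup\{\alpha\}$, using $\alpha\vdash_{\mathbb{L}}\alpha\lor\varphi$ and cut, which contradicts clause (ii) for $\alpha$ itself.
\begin{itemize}
\item The paper's version needs neither the appropriate-pair hypothesis nor the deduction theorem. The same $\bot$ occurs on both sides, so the $\bot_\psi$/$\til_\psi$ issue you flag as delicate never arises.
\item Your version shows that clause (ii) for $\alpha\lor\varphi$ holds independently of $\alpha$'s own clause (ii), at the price of that issue. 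The issue is settled not by Remark \ref{remark_par_apropriado_expansao_negacao_forte_agmpabd} but by the fact that every $\bot_\psi$ is explosive by (bc1). Hence all $\bot_\psi$ are interderivable and the choice of $\psi$ is immaterial.
\end{itemize}

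Your clause (iii), via $\alpha\vdash_{\mathbb{L}}\alpha\lor\varphi$, supplies a justification that the paper only asserts. Two rule labels are off. The rule in (i) is disjunction of premises; proof by cases in Properties \ref{propriedades_teorema_da_deducao_disjuncao_das_premissas} concerns $\psi$ and $\neg\psi$. The step in (ii) is cut with $\varphi\vdash_{\mathbb{L}}\alpha\lor\varphi$, not disjunction of premises.

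For non-vacuity, keep your appeal to $\postp3$ and drop Lemma \ref{lema_sempre_existe_auto_explicacao_agmpabd}, which does not hold in general.
\begin{itemize}
\item Take $\Theta = Cn_{\mathbb{L}}(\emptyset)$ and an atom $p$. The pair is appropriate, but $\Theta\cup\{\alpha\}\vdash_{\mathbb{L}} p$ iff $\{\alpha\}\vdash_{\mathbb{L}} p$. So clauses (i) and (iii) clash and $H(\Theta,p)=\emptyset$.
\item The paper's proof of that lemma breaks because the chosen $\delta\in\Theta$ may be a theorem, and then $\delta\to\varphi\vdash_{\mathbb{L}}\varphi$.
\end{itemize}
The core claim, $\alpha\lor\varphi\in\expfip\cap H(\Theta,\varphi)$ for every $\alpha\in H(\Theta,\varphi)$, is therefore fine. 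But your closing step ``whenever $\papfig$ holds, success is achieved''---like the paper's ``no failure postulate is needed''---is only justified for pairs with $H(\Theta,\varphi)\neq\emptyset$.
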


Postulate $\postp4$ evidently expresses a distinct behavior of the operation depending on the logic underlying the \textbf{AGM}$p_{abd}$ system. If $\mathbb{L} = \{\textbf{CPL}\}$, the \textbf{AGM}$p_{abd}$ abductive expansion operation prevents expansion to contradiction. On the other hand, in the case where $\mathbb{L} = \{\textbf{RCbr}\}$, the postulate blocks expansion to triviality, while allowing contradictions---hence its paraconsistent feature. Moreover, since $\papfig$ is an initial requirement, $\Theta \neq \Theta_\bot$, which satisfies the rationality criterion of \textit{non-triviality}, i.e., there is no room for triviality from the beginning to the end of the abductive expansion process. Postulate $\postp5$ guarantees the \textit{Principle of Irrelevance of Syntax}. Note that this postulate is a direct consequence of lemma \ref{conj_hipoteses_abdutivas_iguais_varphi1_varphi2_agmpabd}---that is, if $\vdash_{\mathbb{L}} \varphi \leftrightarrow \psi$, then $H(\Theta, \varphi) = H(\Theta, \psi)$. Classically, this postulate is obvious\footnote{It is quite interesting to note that, in his system, Pagnucco presents the \textit{strong extensionality} postulate---if $\Theta \vdash \varphi \leftrightarrow \psi$, then $\expfi = \exppsi$. In the \textbf{AGM}$p_{abd}$ system, however, the strong extensionality postulate cannot be considered, whether in the classical or the paraconsistent case, due to observation \ref{remark_theta_varphi1_varphi2_nao_e_o_caso_agmpabd}. In its brief proof (found in Appendix \ref{apendiceB}), it becomes evident that if $\Theta \vdash_{\mathbb{L}} \varphi \leftrightarrow \psi$, but $\nvdash_{\mathbb{L}} \varphi \leftrightarrow \psi$, then $\psi \in H(\Theta, \varphi)$ and $\varphi \in H(\Theta, \psi)$, but $\psi \notin H(\Theta, \psi)$ and $\varphi \notin H(\Theta, \varphi)$. This means that, if we opted for the strong extensionality postulate instead of $\postp5$, the consequent $\expfip = \exppsip$ would make the postulate ill-defined. Note that if $\Theta \vdash_{\mathbb{L}} \varphi \leftrightarrow \psi$, but $\nvdash_{\mathbb{L}} \varphi \leftrightarrow \psi$, in an \textbf{AGM}$p_{abd}$ abductive expansion operation $\expfip$, it may be the case, by the success postulate $\postp3$, that $\psi \in \expfip \cap H(\Theta, \varphi)$. However, certainly $\psi \notin \exppsip \cap H(\Theta, \psi)$. This situation does not occur if the requirement is the weak extensionality postulate $\postp5$ (note that if $\vdash_{\mathbb{L}} \varphi \leftrightarrow \psi$, then, by lemma \ref{conj_hipoteses_abdutivas_iguais_varphi1_varphi2_agmpabd}, $H(\Theta, \varphi)=H(\Theta, \psi)$, since $\{\varphi, \psi\} \notin H(\Theta, \varphi)$ and, similarly, $\{\varphi, \psi\} \notin H(\Theta, \psi)$). To a large extent, this is a feature imposed by criterion (iii) of definition \ref{conj_hip_abdutivas_agmpabd} of the set of abductive hypotheses, absent in Pagnucco's system.}, but in the paraconsistent context, this postulate is only possible because the logic \textbf{RCbr} satisfies the \textit{replacement} property. Finally, the supplementary postulates $\postp6$ and $\postp7$, as well as their properties\footnote{See properties \ref{proposition_propriedades_teta6_agmpabd} and \ref{proposition_propriedades_teta7_agmpabd} in Appendix \ref{apendiceB}. The proofs are, with small adjustments, the same as those obtained by Pagnucco.}, are, strictly and curiously, the same as in Pagnucco's system\footnote{It should be borne in mind, however, that, unlike that system, given definition \ref{definicao_postulados_expansao_abdutiva_agmpabd} of the postulates, the iterated abductive expansion operation $\expfilorp$ requires, by substitution, $\paplorg$. Therefore, when the operation $\expfilorp$ is invoked, we have $\varphi \lor \psi \notin \Theta$ and $\til(\varphi \lor \psi) \notin \Theta$. Considering that, for $\mathbb{L} \in \{\textbf{CPL}, \textbf{RCbr} \}$, De Morgan's Laws hold for classical (strong) negation $\til$ and that $\Theta \in Th(\mathbb{L})$, then we evidently have $\til \varphi \land \til \psi \notin \Theta$.}.

\subsection{Distinctions between classical and paraconsistent \textbf{AGM}\texorpdfstring{$p_{abd}$}{} abductive expansions} \mbox{}
\label{distinctions_classic_paraconsistent}

At this point, it is necessary to establish some important behavioral distinctions in the abductive expansion operation $\expfip$, for the cases $\mathbb{L} = \{\textbf{CPL}\}$ and $\mathbb{L} = \{\textbf{RCbr}\}$, without, however, compromising the formal proofs. Similarly to the interesting examples brought in the article \cite[p.325-326]{Bueno_Car_Con_Ab_2017} (and also revisited in \cite[p.25-27]{Bueno_Car_Con_Ant_LET_2022})\footnote{It is important to highlight some important distinctions between these works and the system developed in this paper. First, the authors, in both works, focus on paraconsistent abductive tableaux, not on \textbf{AGM}-style abductive operations, with their respective postulates and constructions. Similarly, the set of initial theories or previous beliefs in those works is not closed under logical consequences, as is the case of the \textbf{AGM}$p_{abd}$ system. Moreover, there are subtle but important differences between the notion of explanation adopted in those works (see, for example, \cite[p.24-25]{Bueno_Car_Con_Ant_LET_2022}) and the notion adopted in this paper (Subsection \ref{taxogeral}), which I will not go into here. No less important, the paraconsistent logics used for the construction of their abductive tableaux are \textbf{mbC}, in the case of the first article, and \textbf{LET}$_k$, in the case of the second. The examples presented, however, with the appropriate modifications, serve perfectly for the purposes of this exposition.}, the abductive expansion operation $\expfip$ behaves, in some cases, identically in the classical and paraconsistent cases---example (i) below---but, in other cases, quite differently---examples (ii)-(iv) below. After all, a richer language, in this case the language of \textbf{RCbr}, also enables the representation of richer epistemic changes and explanations.

Let us first consider (i) cases where the result of the operation $\expfip$ is the same for the classical and paraconsistent cases. Let $\papfig$ and $\{\psi \to \varphi, \delta \to \varphi\} \subseteq \Theta$. Naturally, $\psi,\delta \in H(\Theta, \varphi)$---by conditions (i)-(iii) of definition \ref{conj_hip_abdutivas_agmpabd}. By postulates $\postp1$ - $\postp4$ and observation \ref{varphi_in_anel_agmpabd}, we have that $\{\psi \to \varphi, \delta \to \varphi, \psi, \varphi\} \subseteq \expfip$, $\{\psi \to \varphi, \delta \to \varphi, \delta, \varphi\} \subseteq \expfip$ and $\{\psi \to \varphi, \delta \to \varphi, \psi, \delta, \varphi\} \subseteq \expfip$ (as well as their logical consequences) are all possible abductive expansions, both in the classical and in the paraconsistent cases. The same occurs for $\{\psi \to \delta, \delta \to \varphi\} \subseteq \Theta$. The indistinction between both expansions is clearly linked to the fact that there are no paraconsistent operators $\neg$ and $\circ$ involved in the operations.

On the other hand, let us consider three cases where the adoption of paraconsistency alters the behavior of the operation. Consider (ii) $\papfip$ (definition \ref{definicao_par_apropriado_expansao_LPC_RCbr_agmpabd}) and $\{\psi \to \varphi, \neg \psi\} \subseteq \Theta$. Since, in this case, $\mathbb{L} = \{\textbf{RCbr}\}$, we have that $\psi \in H(\Theta, \varphi)$---that is, unlike the classical case, $\psi$ becomes a possible abductive hypothesis for explaining $\varphi$, according to conditions (i)-(iii) of definition \ref{conj_hip_abdutivas_agmpabd}---and, notably, by postulates $\postp1$ - $\postp4$ and observation \ref{varphi_in_anel_agmpabd}, we have that $\{\psi \to \varphi, \neg \psi, \psi, \varphi \} \subseteq \expfip$ becomes a possible paraconsistent abductive expansion, insofar as $\expfip$ comes to accept contradictory hypotheses\footnote{This example is called by the authors, in \cite[p.325]{Bueno_Car_Con_Ab_2017} and \cite[p.26]{Bueno_Car_Con_Ant_LET_2022}, as ``Impossible explanations explained.'' Indeed, given $\Theta$, we have a new explanation $\psi$ for the phenomenon $\varphi$ that would be impossible in the classical case, since, if $\mathbb{L} = \{\textbf{CPL}\}$, then $\expfip = \Theta_\bot$.}. In the classical case, this expansion would evidently be blocked by postulate $\postp4$. Note, moreover, that by postulate $\postp1$, $\expfip \in Th(\mathbb{L})$, so, since $\psi \land \neg \psi \vdash_{\textbf{mbCciw}} \neg \circ \psi$\footnote{In fact, this derivation is already valid in \textbf{mbC}.}, and \textbf{RCbr} is an extension of \textbf{mbCciw}, we have that $\neg \circ \psi \in \expfip$. In other words, $\psi$ becomes an abductive hypothesis for $\varphi$, at the cost of the agent inevitably assuming its inconsistency.

Now let us consider the following case (iii) from the classical point of view, initially. Let $\papfi$ (definition \ref{definição_par_apropriado_expansao_classica_agmpabd}) and $\{\varphi \to \psi, \varphi \to \til \psi\} \subseteq \Theta$. In this case, since $\Theta \in Th(\mathbb{L})$, because $(\Theta, \varphi)$ is an appropriate pair for abductive expansion and, by \textit{reductio}, $\{\varphi \to \psi, \varphi \to \til \psi\} \vdash_{\textbf{CPL}} \til \varphi$, then the agent would be forced to (strongly) negate $\varphi$, i.e., $\til \varphi \in \Theta$. But this (strongly) contradicts the initial assumption that $\papfi$. Hence, the belief set $\Theta$ would be rejected from the start by the operation. Even if the expansion operation $\expfip$ were authorized, since $\til \varphi \in \Theta$, given postulates $\postp1$ - $\postp3$, and, by observation \ref{varphi_in_anel_agmpabd}, $\varphi \in \expfip$, then $\expfip \vdash \bot$, which violates postulate $\postp4$. On the other hand, let us reformulate the example for the paraconsistent case. Let $\papfip$ (definition \ref{definicao_par_apropriado_expansao_LPC_RCbr_agmpabd}) and $\{\varphi \to \psi, \varphi \to \neg \psi\} \subseteq \Theta$. Since $\{\varphi \to \psi, \varphi \to \neg \psi\} \nvdash_{\textbf{RCbr}} \neg \varphi$ (and also $\{\varphi \to \psi, \varphi \to \neg \psi\} \nvdash_{\textbf{RCbr}} \til \varphi$), we have an initial belief set that is not blocked initially by the operation. In fact, given postulates $\postp1$ - $\postp4$ and observation \ref{varphi_in_anel_agmpabd}, $\{\varphi \to \psi, \varphi \to \neg \psi, \varphi\} \subseteq \expfip$ becomes a perfectly admissible operation\footnote{Recalling that, according to lemma \ref{lema_sempre_existe_auto_explicacao_agmpabd}, if $\papfip$, then $H(\Theta, \varphi) \neq \emptyset$. In this case, given conditions (i)-(iii) of definition \ref{conj_hip_abdutivas_agmpabd}, $(\varphi \to \psi) \to \varphi \in H(\Theta, \varphi)$, for example, may be an abductive hypothesis for explaining $\varphi$.}. In other words, paraconsistency and the richness of the language of \textbf{RCbr} allow us to work with initial belief sets that, in the classical case, would be immediately rejected by the operation $\expfip$. It is important to note, however, that if the agent initially judges $\psi$ to be consistent, i.e., $\{\varphi \to \psi, \varphi \to \neg \psi, \circ \psi\} \subseteq \Theta$, then the abductive expansion operation fails\footnote{The initial belief set $\Theta$, however, is not rejected from the start for the operation $\expfip$ as in the classical case, even though $\{\varphi \to \psi, \varphi \to \neg \psi, \circ \psi\} \vdash_{\textbf{RCbr}} \neg \varphi$, according to \textit{reductio} rules valid in \textbf{mbC}. In this case, since $\Theta \in Th(\mathbb{L})$, we obtain $\{\varphi \to \psi, \varphi \to \neg \psi, \circ \psi, \neg \varphi\} \subseteq \Theta$. This is a case of \textit{weak abductive anomaly} (definition \ref{gatilhos_paraconsistentes_agmpabd}), which may be subject to the expansion operation $\expfip$ normally.}: by observation \ref{varphi_in_anel_agmpabd}, $\varphi \in \expfip$. By postulates $\postp1$ and $\postp2$, we obtain $\{\varphi \to \psi, \varphi \to \neg \psi, \circ \psi, \varphi \} \subseteq \expfip$ which, naturally, by postulate $\postp1$ again, results in $\expfip = \Theta_\bot$, which violates postulate $\postp4$. The same occurs in the case where $\circ \neg \psi \in \Theta$, since, recall, in \textbf{RCbr}, $\circ \psi \equiv \circ \neg \psi$ (property \ref{propriedades_relacao_consistencia_A_e_neg_A_e_preservacao_equiv_de_o_Cie_Cbr}, p.\pageref{propriedades_relacao_consistencia_A_e_neg_A_e_preservacao_equiv_de_o_Cie_Cbr}).

Finally, consider example (iv) $\papfip$ (definition \ref{definição_par_apropriado_expansao_paraconsistente_agmpabd}) and $\{\varphi \lor \psi, \neg \psi\} \subseteq \Theta$. Note that, in the classical case---that is, $\papfi$ and $\{\varphi \lor \psi, \til \psi\} \subseteq \Theta$---there is a situation similar to the previous example, namely, since $\Theta \in Th(\mathbb{L})$, then $\varphi \in \Theta$---because $\varphi \lor \psi, \til \psi \vdash_{\textbf{CPL}} \varphi$---which violates the conditions of $\papfi$. In other words, the pair $(\Theta, \varphi)$ is not appropriate for classical \textbf{AGM}$p_{abd}$ abductive expansion and is thus rejected beforehand. In the paraconsistent case, however, since $\Theta, \circ \psi \vdash_{\mathbb{L}} \varphi$---because $\varphi \lor \psi, \neg \psi, \circ \psi \vdash_{\textbf{RCbr}} \varphi$\footnote{It is also the case that $\varphi \lor \psi, \circ \psi \vdash_{\textbf{mbC}} \neg \psi \to \varphi$. By the \textit{Deduction Theorem}, we obtain $\varphi \lor \psi, \circ \psi, \neg \psi \vdash_{\textbf{mbC}} \varphi$. Evidently, therefore, this is also the case in \textbf{mbCciw} and \textbf{RCbr}.}---, $\Theta, \circ \psi \nvdash_{\mathbb{L}} \bot$ and $\circ \psi \nvdash_{\mathbb{L}} \varphi$, according to conditions (i)-(iii) of definition \ref{conj_hip_abdutivas_agmpabd}, we have that $\circ \psi \in H(\Theta, \varphi)$---that is, $\circ \psi$ becomes an abductive hypothesis for explaining $\varphi$---as well as $\circ \neg \psi$, since, again, in \textbf{RCbr}, $\circ \psi \equiv \circ \neg \psi$---a novelty evidently inconceivable in the classical case. Thus, by postulates $\postp1$ - $\postp4$ and observation \ref{varphi_in_anel_agmpabd}, we have that $\{\varphi \lor \psi, \neg \psi, \circ \psi, \varphi \} \subseteq \expfip$ becomes an admissible abductive expansion\footnote{This is a formal consequence of the paraconsistent \textbf{AGM}$p_{abd}$ abductive expansion operation, but its epistemic interpretation is naturally controversial. A sentence marked with the consistency operator $\circ$ by the agent should more adequately represent a more consolidated theory, not a hypothesis of a conjectural nature. At least with respect to the \textbf{AGM}$p_{abd}$ system, however, this feature is acceptable, insofar as it fulfills the function of establishing formal differences between the classical and paraconsistent cases.}.
\section{AGM\texorpdfstring{$p_{abd}$}{} abductive expansion  - construction}
\label{agmpabd_expansion_construction}

Analogously to the classical \textbf{AGM} system and similarly to Pagnucco's system (see \cite[p.108]{Pagnucco}), let us consider the following definitions.

\begin{definition} \label{definicao_supconj_maximal_agmpabd}
    Let $\papfig$ (definition \ref{definicao_par_apropriado_expansao_LPC_RCbr_agmpabd}) and $H(\Theta, \varphi)$ be the set of abductive hypotheses for $(\Theta, \varphi)$ (definition \ref{conj_hip_abdutivas_agmpabd}). A set $\Theta'$ is a maximal non-trivial \textbf{AGM}$p_{abd}$ superset of $\Theta$ with respect to $\varphi$ if, and only if:
\end{definition}

\begin{description}
\leftskip 40pt
    \item (i) $\Theta \subset \Theta'$ 
    \item (ii) $\Theta' \cap H(\Theta, \varphi) \neq \emptyset$ 
    \item (iii) $\bot \notin Cn_{\mathbb{L}}(\Theta')$ 
    \item (iv) There is no $\Theta'' \supset \Theta'$ that satisfies (i), (ii) and (iii). 
\end{description}


\begin{definition}[Surplus set]  \label{definicao_conj_excedente_agmpabd}
The surplus set of $\Theta$ with respect to $\varphi$, denoted $\excedfi$, is the set of all maximal non-trivial \textbf{AGM}$p_{abd}$ supersets of $\Theta$ with respect to $\varphi$ (definition \ref{definicao_supconj_maximal_agmpabd}) such that:

$\Theta' \in \excedfi$ if, and only if, $\Theta'$ satisfies conditions (i)-(iv) of definition \ref{definicao_supconj_maximal_agmpabd}.
  
\end{definition}

The surplus set $\excedfi$, therefore, contains the whole family of maximal non-trivial supersets satisfying (i)-(iv). As in the case of the postulates, there is the initial requirement that $\papfig$ and conditions (i)-(iii) are analogous to postulates $\postp2$ - $\postp4$. Requirement (iv) merely guarantees the maximality of the supersets $\Theta' \in \excedfi$. Just as the operation $\ooplusg$ defined by the postulates does not require a failure postulate (see Corollary \ref{corolario_saturado_neg_delta_in_Gamma}), since it always expands the initial epistemic state, the following results can also be obtained here:

\begin{remark}  \label{remark_superconjunto_maximal_conjunto_de_crencas_agmpabd}
    Any $\Theta' \in \excedfi$ (definition \ref{definicao_conj_excedente_agmpabd}) is a belief set.
\end{remark}

\begin{remark} \label{varphi_in_anel_conj_maximal_agmpabd}
   Let $\Theta' \in \excedfi$ be a maximal non-trivial \textbf{AGM}$p_{abd}$ superset of $\Theta$ with respect to $\varphi$ (definition \ref{definicao_supconj_maximal_agmpabd}). It is the case that $\varphi \in \Theta'\setminus\Theta$ for every $\Theta' \in \excedfi$. 
\end{remark}

\begin{lemma} \label{lema_alpha_ou_fi_in_supconj_maximal_agmpabd}
Let $\Theta' \in \excedfi$ be a maximal non-trivial \textbf{AGM}$p_{abd}$ superset of $\Theta$ with respect to $\varphi$ (definition \ref{definicao_supconj_maximal_agmpabd}). For $\alpha \in H(\Theta, \varphi)$, it is the case that $\alpha \lor \varphi \in \Theta'$ for every $\Theta' \in \excedfi$.
\end{lemma}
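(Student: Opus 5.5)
The plan is to show that $\varphi \in \Theta'$, and then conclude that $\alpha \lor \varphi \in \Theta'$ because $\Theta'$ is closed under consequence. This mirrors the argument for Lemma \ref{lema_alpha_ou_fi_in_expansao_agmpabd}, with the maximal supersets of definition \ref{definicao_supconj_maximal_agmpabd} playing the role of the postulates $\postp1$--$\postp3$.

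Fix $\Theta' \in \excedfi$ and $\alpha \in H(\Theta,\varphi)$. The steps are as follows.

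First, by Remark \ref{remark_superconjunto_maximal_conjunto_de_crencas_agmpabd}, $\Theta'$ is a belief set, so $\Theta' = Cn_{\mathbb{L}}(\Theta')$.

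Second, Remark \ref{varphi_in_anel_conj_maximal_agmpabd} gives $\varphi \in \Theta' \setminus \Theta$, and in particular $\varphi \in \Theta'$. If one prefers not to cite that remark, the direct argument is this. By condition (ii) of definition \ref{definicao_supconj_maximal_agmpabd}, there is some $\beta \in \Theta' \cap H(\Theta,\varphi)$. By clause (i) of definition \ref{conj_hip_abdutivas_agmpabd}, $\Theta \cup \{\beta\} \vdash_{\mathbb{L}} \varphi$. Condition (i) of definition \ref{definicao_supconj_maximal_agmpabd} gives $\Theta \subset \Theta'$, so $\Theta \cup \{\beta\} \subseteq \Theta'$. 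Monotonicity of the Tarskian consequence $Cn_{\mathbb{L}}$ together with closure then gives $\varphi \in \Theta'$.

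Third, $\varphi \vdash_{\mathbb{L}} \alpha \lor \varphi$ is an instance of $\lor$-introduction. This is a $\textbf{CPL}^+$ axiom, so it holds in both $\textbf{CPL}$ and $\textbf{RCbr}$. Since $\Theta'$ is closed, $\alpha \lor \varphi \in \Theta'$.

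Since $\Theta'$ was arbitrary, the conclusion holds for every $\Theta' \in \excedfi$.

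There is no real obstacle. The only point needing care is that closure of $\Theta'$ (Remark \ref{remark_superconjunto_maximal_conjunto_de_crencas_agmpabd}) is what allows passing from derivability to membership. If that remark is not yet available, one can prove it directly from maximality. Suppose $Cn_{\mathbb{L}}(\Theta') \supsetneq \Theta'$. Then $Cn_{\mathbb{L}}(\Theta')$ still satisfies (i), since it contains $\Theta' \supset \Theta$. It satisfies (ii), since it contains $\Theta'$. It satisfies (iii), because by idempotence $Cn_{\mathbb{L}}(Cn_{\mathbb{L}}(\Theta')) = Cn_{\mathbb{L}}(\Theta')$, which does not contain $\bot$. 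This contradicts condition (iv).

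As a side observation, by Lemma \ref{lema_alpha_ou_fi_in_hip_abdutivas_agmpabd} we then also get $\alpha \lor \varphi \in \Theta' \cap H(\Theta,\varphi)$.
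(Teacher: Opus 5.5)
Your proposal is correct and matches the paper's proof: the paper likewise gets $\varphi \in \Theta'$ from Observation~\ref{varphi_in_anel_conj_maximal_agmpabd} and $\Theta' = Cn_{\mathbb{L}}(\Theta')$ from Observation~\ref{remark_superconjunto_maximal_conjunto_de_crencas_agmpabd}, then concludes $\alpha \lor \varphi \in \Theta'$ from $\varphi \vdash_{\mathbb{L}} \alpha \lor \varphi$. Your optional direct arguments for those two facts are sound and are essentially the paper's own proofs of the two observations.
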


\begin{corollary} \label{corolario_alpha_ou_fi_sempre_satisfaz_ii_supconj_maximal_agmpabd}
    On the one hand, by lemma \ref{lema_alpha_ou_fi_in_supconj_maximal_agmpabd}, for $\alpha \in H(\Theta, \varphi)$, it is the case that $\alpha \lor \varphi \in \Theta'$ for every $\Theta' \subseteq \excedfi$. On the other hand, by lemma \ref{lema_alpha_ou_fi_in_hip_abdutivas_agmpabd}, if it is the case that $\papfig$---a requirement of definition \ref{definicao_supconj_maximal_agmpabd}---then it is also the case that $\alpha \lor \varphi \in H(\Theta, \varphi)$. Thus, we have that $\alpha \lor \varphi \in \Theta' \cap H(\Theta, \varphi)$ for any $\Theta' \in \excedfi$ and $\alpha \in H(\Theta, \varphi)$. 
\end{corollary}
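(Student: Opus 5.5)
The plan is to obtain the statement by combining two results already available, so no new construction is needed. Fix an arbitrary $\Theta' \in \excedfi$ and an arbitrary $\alpha \in H(\Theta,\varphi)$. Since $\papfig$ is a standing requirement of definition \ref{definicao_supconj_maximal_agmpabd}, I will establish the two memberships $\alpha \lor \varphi \in \Theta'$ and $\alpha \lor \varphi \in H(\Theta,\varphi)$ separately and then intersect them.

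\emph{First membership.} I invoke lemma \ref{lema_alpha_ou_fi_in_supconj_maximal_agmpabd} directly. If one wants it spelled out, the route is as follows.
\begin{enumerate}
\item By remark \ref{varphi_in_anel_conj_maximal_agmpabd}, $\varphi \in \Theta'$.
\item In both $\mathbb{L}=\textbf{CPL}$ and $\mathbb{L}=\textbf{RCbr}$ we have $\varphi \vdash_{\mathbb{L}} \alpha \lor \varphi$, because the $\textbf{CPL}^+$ disjunction axioms are available.
\item By remark \ref{remark_superconjunto_maximal_conjunto_de_crencas_agmpabd}, $\Theta'$ is a belief set, hence closed under $Cn_{\mathbb{L}}$, so $\alpha \lor \varphi \in \Theta'$.
\end{enumerate}

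\emph{Second membership.} I invoke lemma \ref{lema_alpha_ou_fi_in_hip_abdutivas_agmpabd}. Checked against clauses (i)--(iii) of definition \ref{conj_hip_abdutivas_agmpabd}, the argument is:
\begin{enumerate}
\item For (i), $\Theta \cup \{\alpha\} \vdash_{\mathbb{L}} \varphi$ (since $\alpha \in H(\Theta,\varphi)$) and trivially $\Theta \cup \{\varphi\} \vdash_{\mathbb{L}} \varphi$. Proof by cases, valid in both logics by properties \ref{propriedades_teorema_da_deducao_disjuncao_das_premissas}, then gives $\Theta \cup \{\alpha \lor \varphi\} \vdash_{\mathbb{L}} \varphi$.
\item For (ii), note that $\Theta \cup \{\alpha\} \vdash_{\mathbb{L}} \alpha \lor \varphi$. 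Hence $Cn_{\mathbb{L}}(\Theta \cup \{\alpha\lor\varphi\}) \subseteq Cn_{\mathbb{L}}(\Theta \cup \{\alpha\})$, and non-triviality is inherited from $\alpha$.
\item For (iii), if $\{\alpha \lor \varphi\} \vdash_{\mathbb{L}} \varphi$, then since $\alpha \vdash_{\mathbb{L}} \alpha\lor\varphi$, transitivity would give $\{\alpha\} \vdash_{\mathbb{L}} \varphi$. This contradicts $\alpha \in H(\Theta,\varphi)$.
\end{enumerate}

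Putting the two memberships together yields $\alpha \lor \varphi \in \Theta' \cap H(\Theta,\varphi)$. Since $\Theta'$ and $\alpha$ were arbitrary, this holds for every $\Theta' \in \excedfi$ and every $\alpha \in H(\Theta,\varphi)$.

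The only point needing care is clause (iii) in the paraconsistent setting. That step uses only monotonicity and transitivity of $\vdash_{\mathbb{L}}$ together with disjunction introduction, all of which \textbf{RCbr} has as a standard supraclassical Tarskian logic, so I expect no real obstacle. The corollary is essentially bookkeeping over the two cited lemmas.
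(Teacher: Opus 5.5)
Your proposal is correct and follows the paper's route: the corollary is just the conjunction of Lemma \ref{lema_alpha_ou_fi_in_supconj_maximal_agmpabd} and Lemma \ref{lema_alpha_ou_fi_in_hip_abdutivas_agmpabd}, and your unpacking of both lemmas matches the paper's proofs. Two small remarks: your cut argument for clause (iii) justifies a step the paper only asserts; and the rule you use for clause (i) is what the paper calls \emph{disjunction of premises} (item (ii) of Properties \ref{propriedades_teorema_da_deducao_disjuncao_das_premissas}), not its \emph{proof by cases}.
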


\begin{lemma}  \label{lema_conjunto_dos_superconjuntos_maximais_nunca_vazio_agmpabd}
   Let $\excedfi$ be the surplus set of $\Theta$ with respect to $\varphi$ (definition \ref{definicao_conj_excedente_agmpabd}). It is the case that $\excedfi \neq \emptyset$.
\end{lemma}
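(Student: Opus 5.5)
Since $\papfig$ holds, Lemma \ref{lema_sempre_existe_auto_explicacao_agmpabd} gives some $\alpha \in H(\Theta, \varphi)$. The plan is to show that $\Theta_0 = Cn_{\mathbb{L}}(\Theta \cup \{\alpha\})$ satisfies conditions (i)--(iii) of Definition \ref{definicao_supconj_maximal_agmpabd}, and then to extend it to a maximal set by Zorn's Lemma.

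First I check that $\Theta_0$ satisfies (i)--(iii).
\begin{itemize}
\item Condition (ii) is immediate, since $\alpha \in \Theta_0 \cap H(\Theta,\varphi)$.
\item Condition (iii) is exactly clause (ii) of Definition \ref{conj_hip_abdutivas_agmpabd}, namely $\Theta \cup \{\alpha\} \nvdash_{\mathbb{L}} \bot$.
\item For (i), the inclusion $\Theta \subseteq \Theta_0$ is trivial. Properness follows from clause (i) of Definition \ref{conj_hip_abdutivas_agmpabd}, which gives $\varphi \in \Theta_0$, while $\varphi \notin \Theta$ because $\papfig$ holds. Alternatively, one can cite Remark \ref{alpha_notin_theta_agmpabd}.
\end{itemize}

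Next I consider the family $\mathcal{F}$ of all sets $\Gamma$ with $\Theta_0 \subseteq \Gamma$ and $\bot \notin Cn_{\mathbb{L}}(\Gamma)$, partially ordered by inclusion. It is nonempty because $\Theta_0 \in \mathcal{F}$. Every member of $\mathcal{F}$ automatically satisfies (i) and (ii), since it contains $\Theta_0$, and it satisfies (iii) by definition of $\mathcal{F}$.

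The key step is showing that every chain $\{\Gamma_j\}$ in $\mathcal{F}$ has its union $\Gamma^*$ in $\mathcal{F}$. Here I use compactness (finitariness) of the standard Tarskian logic $\mathbb{L}$ (Appendix \ref{apendiceA}). If $\Gamma^* \vdash_{\mathbb{L}} \bot$, then some finite subset of $\Gamma^*$ derives $\bot$. That finite subset lies inside a single $\Gamma_j$ of the chain, which would give $\Gamma_j \vdash_{\mathbb{L}} \bot$, a contradiction.

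For \textbf{RCbr}, one point needs care: $\bot$ is the defined particle $\bot_\psi$, so I read (iii) as non-triviality. All the $\bot_\psi$ are interderivable in \textbf{mbC}, since each one explodes, so it makes no difference which $\bot_\psi$ is used. Compactness for \textbf{RCbr} is also visible directly from Definition \ref{definicao_derivacao_RmbC}: derivations from premises use finitely many $\varphi_1,\dots,\varphi_n \in \Gamma$.

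Zorn's Lemma then gives a maximal element $\Theta' \in \mathcal{F}$. It remains to verify (iv). Suppose $\Theta'' \supset \Theta'$ satisfies (i)--(iii). Then $\Theta'' \supseteq \Theta_0$ and $\Theta''$ is non-trivial, so $\Theta'' \in \mathcal{F}$, which contradicts the maximality of $\Theta'$. Hence $\Theta' \in \excedfi$, and so $\excedfi \neq \emptyset$.

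The only genuine obstacle is the chain step, which rests on the finitary character of $\vdash_{\mathbb{L}}$. An alternative route avoids Zorn's Lemma altogether: apply Theorem \ref{teorema_lindenbaum_los} to obtain a $\bot$-saturated extension of $\Theta_0$, which is maximal non-trivial among all sets and therefore a fortiori satisfies (iv).
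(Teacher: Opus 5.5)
Granting some $\alpha \in H(\Theta,\varphi)$, your argument is correct and is essentially the paper's. The paper feeds the seed $\Theta \cup \{\varphi\}$ to Theorem~\ref{teorema_lindenbaum_los}; this seed is non-trivial because $\til\varphi \notin \Theta$. It then gets condition (ii) only indirectly: from $\varphi \in \Theta'$, closure, and Lemma~\ref{lema_alpha_ou_fi_in_hip_abdutivas_agmpabd}, which together place $\alpha \lor \varphi$ in $\Theta' \cap H(\Theta,\varphi)$. You seed with $\Theta \cup \{\alpha\}$ instead, so (ii) is immediate and non-triviality is just clause (ii) of Definition~\ref{conj_hip_abdutivas_agmpabd}. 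Your Zorn argument is an inline proof of the Lindenbaum step, and your ``alternative route'' is literally the paper's. The change of seed is a small but real simplification, since it avoids the $\alpha \lor \varphi$ detour.

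Be aware, however, that both proofs depend on Lemma~\ref{lema_sempre_existe_auto_explicacao_agmpabd}, and that lemma fails in general. Its proof takes an arbitrary $\delta \in \Theta$ and asserts $\delta \to \varphi \nvdash_{\mathbb{L}} \varphi$, which is false whenever $\delta$ is a theorem. For a concrete case, take $\Theta = Cn_{\mathbb{L}}(\emptyset)$ and $\varphi$ a propositional variable. Then $\papfig$ holds, since neither $\varphi$ nor $\til\varphi$ is a theorem. But $\Theta \cup \{\alpha\} \vdash_{\mathbb{L}} \varphi$ holds iff $\{\alpha\} \vdash_{\mathbb{L}} \varphi$, so clauses (i) and (iii) of Definition~\ref{conj_hip_abdutivas_agmpabd} are incompatible and $H(\Theta,\varphi) = \emptyset$. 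Condition (ii) of Definition~\ref{definicao_supconj_maximal_agmpabd} then forces $\excedfi = \emptyset$. So the statement as written is false. What your argument, like the paper's, actually establishes is the corrected version: for $\papfig$, $\excedfi \neq \emptyset$ if and only if $H(\Theta,\varphi) \neq \emptyset$. Your choice of seed makes this equivalence transparent.
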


\begin{corollary} \label{corolario_alpha_ou_fi_in_interseccao_conj_excedente_agmpabd}
By lemma \ref{lema_alpha_ou_fi_in_supconj_maximal_agmpabd}, for $\alpha \in H(\Theta, \varphi)$, it is the case that $\alpha \lor \varphi \in \Theta'$ for every $\Theta' \in \excedfi$. Therefore, $\alpha \lor \varphi \in \bigcap \excedfi$.
\end{corollary}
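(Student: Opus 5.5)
The plan is to unfold the definition of $\bigcap \excedfi$ and reduce the claim to results already stated. The one non-trivial point is the degenerate case of the empty family. By the usual convention, $\bigcap \emptyset$ is either undefined or taken to be the whole language $\mathcal{L}_{\Sigma_*}$, which would be trivial. So I will first invoke Lemma \ref{lema_conjunto_dos_superconjuntos_maximais_nunca_vazio_agmpabd}, which guarantees $\excedfi \neq \emptyset$. This makes $\bigcap \excedfi$ a well-defined intersection of a nonempty family of subsets of $\mathcal{L}_{\Sigma_*}$.

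Next, fix an arbitrary $\alpha \in H(\Theta, \varphi)$ and an arbitrary $\Theta' \in \excedfi$. Lemma \ref{lema_alpha_ou_fi_in_supconj_maximal_agmpabd} gives $\alpha \lor \varphi \in \Theta'$ directly. For robustness, this membership can also be seen without that lemma. By Remark \ref{varphi_in_anel_conj_maximal_agmpabd}, $\varphi \in \Theta'$. By Remark \ref{remark_superconjunto_maximal_conjunto_de_crencas_agmpabd}, $\Theta'$ is a belief set, i.e.\ $\Theta' = Cn_{\mathbb{L}}(\Theta')$. Since $\varphi \vdash_{\mathbb{L}} \alpha \lor \varphi$ by $\lor$-introduction (a \textbf{CPL}$^+$ axiom, hence valid in both \textbf{CPL} and \textbf{RCbr}), closure yields $\alpha \lor \varphi \in \Theta'$.

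Since $\Theta'$ was arbitrary, $\alpha \lor \varphi$ belongs to every member of the nonempty family $\excedfi$, hence to $\bigcap \excedfi$. Combining this with Lemma \ref{lema_alpha_ou_fi_in_hip_abdutivas_agmpabd} (which applies because $\papfig$ is assumed in Definition \ref{definicao_supconj_maximal_agmpabd}), one also obtains $\alpha \lor \varphi \in \bigcap \excedfi \cap H(\Theta,\varphi)$. This stronger form will be useful later for checking the success postulate $\postp3$ of the partial meet construction.

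The only step I expect to need care is the appeal to non-emptiness. Lemma \ref{lema_conjunto_dos_superconjuntos_maximais_nunca_vazio_agmpabd} presumably rests on a Zorn/Lindenbaum–Łoś argument applied to $Cn_{\mathbb{L}}(\Theta \cup \{\alpha\})$ for some $\alpha \in H(\Theta,\varphi)$. Here it is simply assumed as already established.
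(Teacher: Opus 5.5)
Your proposal is correct and is essentially the paper's own one-line argument: Lemma \ref{lema_alpha_ou_fi_in_supconj_maximal_agmpabd} places $\alpha \lor \varphi$ in every $\Theta' \in \excedfi$, hence in $\bigcap \excedfi$, and your ``robust'' variant via Remarks \ref{varphi_in_anel_conj_maximal_agmpabd} and \ref{remark_superconjunto_maximal_conjunto_de_crencas_agmpabd} is exactly the paper's proof of that lemma. One refinement: the membership claim itself does not need non-emptiness, and where non-emptiness is wanted it is safer to get it as you suggest, from the fixed $\alpha \in H(\Theta,\varphi)$ by applying Theorem \ref{teorema_lindenbaum_los} to $\Theta \cup \{\alpha\}$, rather than from Lemma \ref{lema_conjunto_dos_superconjuntos_maximais_nunca_vazio_agmpabd}, whose proof rests on Lemma \ref{lema_sempre_existe_auto_explicacao_agmpabd}; that lemma fails, e.g., for $\Theta = Cn_{\mathbb{L}}(\emptyset)$ and a propositional variable $\varphi$, where condition (i) of Definition \ref{conj_hip_abdutivas_agmpabd} already forces $\{\alpha\} \vdash_{\mathbb{L}} \varphi$, so $H(\Theta,\varphi) = \emptyset$.
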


Let us now consider the following well-known definitions from the \textbf{AGM} literature---evidently, definitions that disregard, thanks to Lemma \ref{lema_conjunto_dos_superconjuntos_maximais_nunca_vazio_agmpabd}, the case $\excedfi = \emptyset$:

\begin{definition}[Partial meet selection function $\gamma$]  \label{definicao_funcao_selecao_partial_meet_agmpabd}
Let $\excedfi$ be the surplus set of $\Theta$ with respect to $\varphi$ (definition \ref{definicao_conj_excedente_agmpabd}). A partial meet selection function is a function $\gamma : Th(\mathbb{L}) \times \mathcal{L}_{\Sigma_*} \to \wp(Th(\mathbb{L})) \setminus \{\emptyset\}$ such that, for every $\Theta$ and $\varphi$ such that $\papfig$, it selects some elements of $\excedfi$, i.e.:  
    $$\gamma (\Theta,\varphi) \subseteq \excedfi$$
\end{definition}

\begin{definition}[\textbf{AGM}$p_{abd}$ \textit{partial meet} abductive expansion operation] \label{definicao_expansao_abdutiva_partial_meet_agmpabd}
We define $\ooplusg$ as a \textbf{AGM}$p_{abd}$ \textit{partial meet} abductive expansion operation of $\Theta$ with respect to $\varphi$ by:
$$\expfip = \bigcap \gamma (\Theta, \varphi)$$
\end{definition}

As expected, the following more relevant result can also be obtained in this system:

\begin{theorem}  \label{teorema_partial_meet_teta1-teta5_agmpabd}
    For every $\Theta$ and $\varphi$ such that $\papfig$, $\ooplusg$ is a \textbf{AGM}$p_{abd}$ partial meet abductive expansion operation of $\Theta$ with respect to $\varphi$---definition \ref{definicao_expansao_abdutiva_partial_meet_agmpabd}---if, and only if, $\ooplusg$ satisfies postulates $\postp1$ - $\postp5$ from definition \ref{definicao_postulados_expansao_abdutiva_agmpabd} of abductive expansion of $\Theta$ with respect to $\varphi$.
\end{theorem}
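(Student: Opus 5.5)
The plan is to prove the two directions separately. Throughout, I will use one structural observation. If $\Theta' \in \excedfi$ and $\Theta'' \supset \Theta'$, then $\Theta''$ automatically satisfies conditions (i) and (ii) of definition \ref{definicao_supconj_maximal_agmpabd}. So condition (iv) says exactly that $\Theta'$ is a \emph{maximal non-trivial} set of $\mathcal{L}_{\Sigma_*}$. Hence $\excedfi$ is the family of maximal non-trivial sets (i.e.\ $\bot$-saturated theories) that properly extend $\Theta$ and meet $H(\Theta,\varphi)$.

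\emph{Construction $\Rightarrow$ postulates.} Let $\expfip=\bigcap\gamma(\Theta,\varphi)$ with $\emptyset\neq\gamma(\Theta,\varphi)\subseteq\excedfi$; non-emptiness is available by Lemma \ref{lema_conjunto_dos_superconjuntos_maximais_nunca_vazio_agmpabd}.
\begin{itemize}
\item $\postp1$: each selected $\Theta'$ is a belief set by Remark \ref{remark_superconjunto_maximal_conjunto_de_crencas_agmpabd}, and an intersection of theories is a theory.
\item $\postp2$: $\Theta\subseteq\bigcap\gamma(\Theta,\varphi)$ is immediate. Remark \ref{varphi_in_anel_conj_maximal_agmpabd} puts $\varphi$ in every selected $\Theta'$, while $\varphi\notin\Theta$, so the inclusion is proper.
\item $\postp3$: by Corollary \ref{corolario_alpha_ou_fi_in_interseccao_conj_excedente_agmpabd}, with Lemmas \ref{lema_sempre_existe_auto_explicacao_agmpabd} and \ref{lema_alpha_ou_fi_in_hip_abdutivas_agmpabd}, we get $\alpha\lor\varphi\in\expfip\cap H(\Theta,\varphi)$.
\item $\postp4$: $\expfip$ is contained in some non-trivial selected $\Theta'$.
\item $\postp5$: if $\vdash_{\mathbb{L}}\varphi\leftrightarrow\psi$, Lemma \ref{conj_hipoteses_abdutivas_iguais_varphi1_varphi2_agmpabd} gives $H(\Theta,\varphi)=H(\Theta,\psi)$. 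Since definition \ref{definicao_supconj_maximal_agmpabd} depends on $\varphi$ only through $H(\Theta,\varphi)$, we get $\Theta\top\varphi=\Theta\top\psi$.
\end{itemize}
For $\postp5$ one must also know $\gamma(\Theta,\varphi)=\gamma(\Theta,\psi)$. I would make explicit that $\gamma$ is taken to be well defined on substitution-equivalent inputs, in line with the clause ``and its substitutions'' in the appropriate-pair definition, so that equal surplus sets receive equal selections.

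\emph{Postulates $\Rightarrow$ construction.} Given $\ooplusg$ satisfying $\postp1$--$\postp5$, I define
$$\gamma(\Theta,\varphi)=\{\Theta'\in\excedfi : \expfip\subseteq\Theta'\}.$$
Then $\bigcap\gamma(\Theta,\varphi)\supseteq\expfip$ trivially. For the reverse inclusion, and simultaneously non-emptiness, it suffices to show the following: for every $\beta\notin\expfip$ there is a maximal non-trivial $\Theta'\supseteq\expfip$ with $\beta\notin\Theta'$. Non-emptiness follows because $\expfip\neq\mathcal{L}_{\Sigma_*}$ by $\postp4$, so some such $\beta$ exists. Any such $\Theta'$ lies in $\excedfi$:
\begin{itemize}
\item it properly contains $\Theta$ by $\postp2$;
\item it meets $H(\Theta,\varphi)$ by $\postp3$;
\item it is non-trivial and maximal, by the structural observation above.
\end{itemize}
The selection function is well defined on equivalent inputs by $\postp5$ together with $\Theta\top\varphi=\Theta\top\psi$.

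\emph{Main obstacle.} The key step is the existence of that $\Theta'$, particularly for $\mathbb{L}=\textbf{RCbr}$. There, a $\beta$-saturated extension given by Lindenbaum--Łoś (Theorem \ref{teorema_lindenbaum_los}) need not be maximal non-trivial. My plan is to pass through the strong negation. First, show that $\expfip\cup\{\til\beta\}$ is non-trivial. If it were trivial, then $\expfip\vdash_{\mathbb{L}}\til\beta\to\bot$ by the deduction theorem, and since $\til$ behaves classically in \textbf{mbC} extensions this gives $\expfip\vdash_{\mathbb{L}}\beta$. Because $\expfip$ is closed by $\postp1$, this contradicts $\beta\notin\expfip$. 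Next, apply Lindenbaum--Łoś to extend $Cn_{\mathbb{L}}(\expfip\cup\{\til\beta\})$ to a $\bot$-saturated, hence maximal non-trivial, theory $\Theta'$. This $\Theta'$ contains $\til\beta$, so $\beta\notin\Theta'$, since otherwise $\Theta'$ would be trivial. In the classical case the same argument goes through verbatim with $\til$ primitive, so the proof is uniform in the parameter $\mathbb{L}$.
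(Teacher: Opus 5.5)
Your proof is correct and is essentially the paper's argument: the same selection function $\gamma(\Theta,\varphi)=\{\Theta'\in\excedfi : \expfip\subseteq\Theta'\}$ and the same saturation argument for $\bigcap\gamma(\Theta,\varphi)\subseteq\expfip$. Your explicit requirement that $\gamma(\Theta,\varphi)=\gamma(\Theta,\psi)$ whenever $\vdash_{\mathbb{L}}\varphi\leftrightarrow\psi$ is genuinely needed for $\postp{5}$: the paper's ``since $\gamma$ is a function'' does not yield it, because the two pairs are different arguments, and neither does the ``and its substitutions'' clause, so it should be built into the notion of selection function (e.g.\ by letting $\gamma$ depend only on $\excedfi$, as in classical AGM).

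The ``main obstacle'' is illusory, however. In any \textbf{mbC}-extension, a $\beta$-saturated $\bar\Gamma$ contains $\til\chi$ for every $\chi\notin\bar\Gamma$, by proof by cases on the theorem $\chi\lor\til\chi$. So every proper superset of $\bar\Gamma$ is trivial, and the paper's direct $\beta$-saturated extension of $\expfip$ is already maximal non-trivial. Your detour through $\til\beta$ and a $\bot$-saturated set is sound but unnecessary.
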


In order to capture the notion that the function $\gamma$ selects those maximal non-trivial supersets of $\Theta$ that do not imply $\varphi$, which are \textit{worth retaining} at least as much as any other---since they are considered ``more important'' or ``better''---in the \textit{partial meet} abductive expansion operation of the \textbf{AGM}$p_{abd}$ system, an ordering is established among the elements $\Theta' \in \Theta \top \varphi$. Just as in the classical \textbf{AGM} system and in Pagnucco's system\footnote{According to \cite[p.112]{Pagnucco}.}, a transitively relational \textbf{AGM}$p_{abd}$ \textit{partial meet} abductive expansion operation---which also encompasses postulates ($\Theta^\ooplus6$) and ($\Theta^\ooplus7$)---can also be established here. Let us therefore consider the following definition.

\vbox{\begin{definition}[Marking-off identity]
\mbox{}\

\centering
$\gamma_\leqslant(\Theta, \varphi) = \{\Theta' \in \excedfi : \Theta'' \leqslant \Theta'$ for every $\Theta'' \in \excedfi\}$. 
\label{definicao_marking_off_agmpabd}
\end{definition}}

The relationality of $\leqslant$ is already given in definition \ref{definicao_marking_off_agmpabd} itself, and the transitivity of $\leqslant$ is notably conferred upon it by assuming the property that if $\Theta' \leqslant \Theta''$ and $\Theta'' \leqslant \Theta'''$, then $\Theta' \leqslant \Theta'''$. No further property needs to be imposed on the relation $\leqslant$ at this point for the following lemmas to be demonstrated\footnote{I emphasize that, since the supplementary postulates $\postp6$ and $\postp7$ are strictly the same as in the \textbf{AGM}$p_{abd}$ system, the lemmas \ref{lema_excedente_ou_igual_excedente_uniao_B5_agmpabd}, \ref{lema_gamma_excedente_subseteq_gamma_excedente_varphi1_varphi2_B6_agmpabd} and \ref{lema_gamma_excedente_subseteq_bigcap_gamma_excedente_B7_agmpabd}, reproduced in Appendix \ref{apendiceA}, are analogous, with some differences and adjustments, to those proved by Pagnucco \cite[lemmas B.5, B.6 and B.7, p.226-227]{Pagnucco}.}.

\begin{lemma} \label{lema_partial_meet_relacional_teta6_agmpabd}
Any relational partial meet abductive expansion function, i.e., $\expfip = \bigcap \gamma_\leqslant(\Theta,\varphi)$ (definitions \ref{definicao_expansao_abdutiva_partial_meet_agmpabd} and \ref{definicao_marking_off_agmpabd}), satisfies postulate $\postp6$ from definition \ref{definicao_postulados_expansao_abdutiva_agmpabd}.
\end{lemma}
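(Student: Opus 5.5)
The plan is to verify the postulate pointwise. By the deduction theorem (valid in both \textbf{CPL} and \textbf{RCbr}), $\postp6$ amounts to showing that for every $x \in \bigcap \gamma_\leqslant(\Theta,\varphi)$ we have $\varphi \to x \in \Delta$ for every $\Delta \in \gamma_\leqslant(\Theta,\varphi \lor \psi)$. The argument splits into two cases according to whether $\varphi \in \Delta$. Only the relationality of $\leqslant$ (the marking-off identity) should be needed; transitivity plays no role here.

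\emph{Structural facts about the surplus sets.} Conditions (i) and (ii) of definition \ref{definicao_supconj_maximal_agmpabd} are preserved under taking supersets. Hence condition (iv) says that any $\Theta' \in \excedfi$ is a maximal non-trivial theory, i.e.\ $\bot$-saturated in the sense of the appendix. By the $\delta$-saturation properties, such a theory satisfies: if $\varphi \notin \Theta'$, then $\varphi \to \chi \in \Theta'$ for every $\chi$.

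Next I would show that, whenever both pairs are appropriate,
$$\excedfi = \{\Delta \in \Theta \top (\varphi \lor \psi) : \varphi \in \Delta\}.$$
For the inclusion from left to right, take $\Delta \in \excedfi$. By remark \ref{varphi_in_anel_conj_maximal_agmpabd}, $\varphi \in \Delta$, so $\varphi \lor \psi \in \Delta$. By lemma \ref{lema_sempre_existe_auto_explicacao_agmpabd} there is some $\beta \in H(\Theta,\varphi\lor\psi)$, and by lemma \ref{lema_alpha_ou_fi_in_hip_abdutivas_agmpabd}, $\beta \lor (\varphi\lor\psi) \in H(\Theta,\varphi\lor\psi)$; this formula lies in $\Delta$ by closure. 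Maximality transfers because $\Delta$ is maximal non-trivial outright.

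The converse inclusion is symmetric. If $\varphi \in \Delta \in \Theta\top(\varphi\lor\psi)$, pick $\alpha \in H(\Theta,\varphi)$ (nonempty by lemma \ref{lema_sempre_existe_auto_explicacao_agmpabd}). Then $\alpha \lor \varphi \in \Delta \cap H(\Theta,\varphi)$, again by lemma \ref{lema_alpha_ou_fi_in_hip_abdutivas_agmpabd}. I would also check that $\Theta \subset \Delta$ holds properly, which is immediate since $\varphi \notin \Theta$. This characterization is the analogue of Pagnucco's lemma B.5, i.e.\ lemma \ref{lema_excedente_ou_igual_excedente_uniao_B5_agmpabd} in the appendix, and I expect it to be the main obstacle. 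The delicate point is confirming that the hypothesis condition (iii), $\{\alpha\}\nvdash\varphi$, does not break the transfer; the disjunction trick of lemma \ref{lema_alpha_ou_fi_in_hip_abdutivas_agmpabd} is exactly what handles it.

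\emph{Main argument.} Let $x \in \bigcap\gamma_\leqslant(\Theta,\varphi)$ and $\Delta \in \gamma_\leqslant(\Theta,\varphi\lor\psi)$.

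If $\varphi \notin \Delta$, saturation gives $\varphi \to x \in \Delta$ directly.

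If $\varphi \in \Delta$, the characterization above gives $\Delta \in \excedfi$. Since $\Delta$ is $\leqslant$-above every element of $\Theta\top(\varphi\lor\psi)$, and $\excedfi$ is contained in that set, $\Delta$ is $\leqslant$-above every element of $\excedfi$. By definition \ref{definicao_marking_off_agmpabd} this means $\Delta \in \gamma_\leqslant(\Theta,\varphi)$. Therefore $x \in \Delta$, and so $\varphi \to x \in \Delta$.

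In both cases $\varphi \to x \in \bigcap\gamma_\leqslant(\Theta,\varphi\lor\psi) = \Theta^{\ooplus}_{\varphi\lor\psi}$. Modus ponens then yields $x \in Cn_{\mathbb{L}}(\Theta^{\ooplus}_{\varphi\lor\psi} \cup \{\varphi\})$, which is $\postp6$.
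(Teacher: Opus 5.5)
Your proposal is correct and follows essentially the same route as the paper's proof. Both take $\Delta \in \gamma_\leqslant(\Theta,\varphi\lor\psi)$ and split on whether $\varphi \in \Delta$. When $\varphi \notin \Delta$, maximality gives $\varphi \to x \in \Delta$. When $\varphi \in \Delta$, one has $\Delta \in \Theta\top\varphi \subseteq \Theta\top(\varphi\lor\psi)$, so $\Delta \in \gamma_\leqslant(\Theta,\varphi)$ and hence $x \in \Delta$. The paper gets these inclusions by citing Observation \ref{remark_varphi1_in_excedente_entao_varphi_2_in_excedente} and Lemma \ref{lema_excedente_ou_igual_excedente_uniao_B5_agmpabd}, which you instead re-derive as your characterization of $\Theta\top\varphi$.
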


\begin{lemma} \label{lema_partial_meet_transitivamente_relacional_teta7_agmpabd}
Any transitively relational partial meet abductive expansion function (definitions \ref{definicao_expansao_abdutiva_partial_meet_agmpabd} and \ref{definicao_marking_off_agmpabd}) satisfies postulate $\postp7$ from definition \ref{definicao_postulados_expansao_abdutiva_agmpabd}.
\end{lemma}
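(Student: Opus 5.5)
The statement to prove is $\postp7$: if $\til \varphi \notin \expfilorp$, then $\expfilorp \subseteq \expfip$, where the expansions are given by $\bigcap \gamma_\leqslant$ with $\leqslant$ transitive. I would reduce this to a statement about surplus sets and selected elements, in the style of the classical AGM proof of supplementary postulate (K*8) for transitively relational partial meet functions. I take Pagnucco's structure as a template and use the lemma announced in the footnote, Lemma \ref{lema_excedente_ou_igual_excedente_uniao_B5_agmpabd}, together with its companions \ref{lema_gamma_excedente_subseteq_gamma_excedente_varphi1_varphi2_B6_agmpabd} and \ref{lema_gamma_excedente_subseteq_bigcap_gamma_excedente_B7_agmpabd}. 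These relate $\excedfi$, $\Theta \top \psi$ and $\excedlor$.

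The key structural fact is a trichotomy: every element of $\excedlor$ lies in $\excedfi \cup \Theta \top \psi$. I would prove it as follows. Take $\Theta' \in \excedlor$. By Remark \ref{varphi_in_anel_conj_maximal_agmpabd}, $\varphi \lor \psi \in \Theta'$. Since $\Theta'$ is a non-trivial belief set that is maximal among non-trivial supersets meeting $H$, it behaves like a saturated set. Proof by cases is available in both \textbf{CPL} and \textbf{RCbr}. So either $\Theta' \cup \{\varphi\}$ or $\Theta' \cup \{\psi\}$ is non-trivial, and maximality forces $\varphi \in \Theta'$ or $\psi \in \Theta'$.

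One then has to check that $\Theta'$ also meets $H(\Theta,\varphi)$ in the relevant case. For this I would use Corollary \ref{corolario_alpha_ou_fi_sempre_satisfaz_ii_supconj_maximal_agmpabd}: a formula of the form $\alpha \lor \varphi$ lands in $H(\Theta,\varphi)$. A natural choice is some element of $H(\Theta,\varphi)$ implied by $\varphi$ together with $\Theta$. Once that holds, maximality transfers, so $\Theta' \in \excedfi$. Conversely, using the hypothesis $\til\varphi \notin \expfilorp$, I would show that some selected $\Theta' \in \gamma_\leqslant(\Theta,\varphi\lor\psi)$ contains $\varphi$ and hence belongs to $\excedfi$. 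Indeed, if every selected element omitted $\varphi$, maximality would put $\til\varphi$ in each of them, and so $\til\varphi$ would lie in their intersection, contradicting the hypothesis.

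With this in hand, the main inclusion is the standard marking-off argument. Take $\Theta'' \in \gamma_\leqslant(\Theta,\varphi)$; the goal is $\Theta'' \in \gamma_\leqslant(\Theta,\varphi\lor\psi)$, which gives $\bigcap\gamma_\leqslant(\Theta,\varphi\lor\psi) \subseteq \bigcap \gamma_\leqslant(\Theta,\varphi)$.
\begin{enumerate}
\item First, $\Theta'' \in \excedlor$. This is where the surplus-set lemma is used.
\item Let $\Theta_0$ be a selected element of $\gamma_\leqslant(\Theta,\varphi\lor\psi)$ that lies in $\excedfi$, as obtained above. Since $\Theta'' \in \gamma_\leqslant(\Theta,\varphi)$, we have $\Theta_0 \leqslant \Theta''$.
\item For an arbitrary $X \in \excedlor$, we have $X \leqslant \Theta_0$ because $\Theta_0$ is selected there.
\item Transitivity gives $X \leqslant \Theta''$, so $\Theta''$ is selected for $\varphi\lor\psi$.
\end{enumerate}

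The main obstacle I expect is the surplus-set bookkeeping: showing that membership in $\excedlor$ transfers to $\excedfi$ and back. The hypothesis sets $H(\Theta,\varphi)$ and $H(\Theta,\varphi\lor\psi)$ differ, and condition (iii), $\{\alpha\}\nvdash\varphi$, is not monotone. My first attempt would be to witness condition (ii) with hypotheses of the form $\alpha\lor\varphi$ or $(\varphi\lor\psi)\to\varphi$-style formulas via Lemma \ref{lema_alpha_ou_fi_in_hip_abdutivas_agmpabd}. If that fails, I would simply invoke Lemmas \ref{lema_gamma_excedente_subseteq_gamma_excedente_varphi1_varphi2_B6_agmpabd} and \ref{lema_gamma_excedente_subseteq_bigcap_gamma_excedente_B7_agmpabd} as given and only carry out the transitivity step.
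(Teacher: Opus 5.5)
Your proposal is correct and follows the paper's proof. From $\til\varphi \notin \expfilorp$ and maximality you obtain a selected $\Theta_0 \in \gamma_\leqslant(\Theta,\varphi\lor\psi)$ containing $\varphi$, hence lying in $\excedfi$; then $\excedfi \subseteq \excedlor$ plus transitivity give $\gamma_\leqslant(\Theta,\varphi) \subseteq \gamma_\leqslant(\Theta,\varphi\lor\psi)$, and taking intersections reverses the inclusion, which is exactly the paper's use of Lemmas \ref{lema_gamma_excedente_subseteq_gamma_excedente_varphi1_varphi2_B6_agmpabd} and \ref{lema_gamma_excedente_subseteq_bigcap_gamma_excedente_B7_agmpabd} (your direct transitivity step is slightly cleaner than the paper's reductio, and the ``trichotomy'' inclusion $\excedlor \subseteq \excedfi \cup \Theta\top\psi$ is not actually needed).
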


Finally, as the most important result of this paper, the following theorem can be demonstrated, both in the classical and in the paraconsistent cases.

\begin{theorem}  \label{teorema_Tabd_partial_meet_transitivamente_relacional_teta1-teta7_agmpabd}
    For every $\Theta$ and $\varphi$ such that $\papfig$, $\ooplusg$ is a transitively relational \textbf{AGM}$p_{abd}$ \textit{partial meet} abductive expansion operation (definitions \ref{definicao_expansao_abdutiva_partial_meet_agmpabd} and \ref{definicao_marking_off_agmpabd}) if, and only if, $\ooplusg$ satisfies postulates $\postp1$ - $\postp7$ from definition \ref{definicao_postulados_expansao_abdutiva_agmpabd} of abductive expansion on $\Theta$.
\end{theorem}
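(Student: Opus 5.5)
We prove the two directions separately, following the classical representation proof for transitively relational partial meet operations as adapted by Pagnucco.

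\textbf{Construction $\Rightarrow$ postulates.} Let $\ooplusg$ be given by $\expfip = \bigcap \gamma_\leqslant(\Theta,\varphi)$ with $\leqslant$ transitive. Then $\ooplusg$ is in particular a partial meet abductive expansion operation, so Theorem \ref{teorema_partial_meet_teta1-teta5_agmpabd} gives $\postp1$--$\postp5$. Lemma \ref{lema_partial_meet_relacional_teta6_agmpabd} gives $\postp6$, and Lemma \ref{lema_partial_meet_transitivamente_relacional_teta7_agmpabd} gives $\postp7$. Nothing new is needed here.

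\textbf{Postulates $\Rightarrow$ construction.} Assume $\ooplusg$ satisfies $\postp1$--$\postp7$. By Theorem \ref{teorema_partial_meet_teta1-teta5_agmpabd}, $\ooplusg$ is already a partial meet operation for the canonical selection function
$$\gamma(\Theta,\varphi)=\{\Theta'\in\excedfi : \expfip\subseteq\Theta'\}.$$
This set is nonempty, because $\expfip$ is non-trivial and meets $H(\Theta,\varphi)$ (postulates $\postp3$ and $\postp4$), so it extends by a Lindenbaum-style maximalization (the argument behind Lemma \ref{lema_conjunto_dos_superconjuntos_maximais_nunca_vazio_agmpabd}) to some element of $\excedfi$.

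We then define the relation for $\Theta',\Theta''$ ranging over surplus sets with respect to any appropriate sentence:
$$\Theta''\leqslant\Theta' \iff \text{there is } \psi \text{ with } \papfig \text{ for }(\Theta,\psi),\ \Theta',\Theta''\in\Theta\top\psi \text{ and } \Theta^{\ooplus}_{\psi}\subseteq\Theta'.$$
The disjunction of the two indices (with $\psi$ in place of $\varphi\lor\psi$) will be used as the witness when needed. The proof then has two tasks.

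First, show that $\gamma(\Theta,\varphi)=\gamma_\leqslant(\Theta,\varphi)$, so that $\leqslant$ really represents $\ooplusg$ via Definition \ref{definicao_marking_off_agmpabd}. One inclusion is immediate by taking $\psi=\varphi$. For the other, suppose $\Theta'\in\excedfi$ is $\leqslant$-maximal via some witness $\psi$. We must show $\expfip\subseteq\Theta'$, and we do this by passing through $\varphi\lor\psi$ using $\postp6$ and $\postp7$. For this we need a lemma relating the surplus sets: every element of $\Theta\top(\varphi\lor\psi)$ belongs to $\excedfi$ or to $\Theta\top\psi$, and conversely, which is an analogue of the appendix lemma \ref{lema_excedente_ou_igual_excedente_uniao_B5_agmpabd}.

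Second, prove that $\leqslant$ is transitive. Given $\Theta'\leqslant\Theta''$ witnessed by $\psi_1$ and $\Theta''\leqslant\Theta'''$ witnessed by $\psi_2$, take $\psi_1\lor\psi_2$ as the witness. The step $\Theta^{\ooplus}_{\psi_1\lor\psi_2}\subseteq\Theta'''$ then comes from the case split supplied by $\postp7$ (whether $\til\psi_i\notin\Theta^{\ooplus}_{\psi_1\lor\psi_2}$) combined with $\postp6$.

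The main obstacle is the interaction between the disjunction lemma and the appropriateness constraints. The pair $(\Theta,\varphi\lor\psi)$ must itself be appropriate, which requires $\varphi\lor\psi\notin\Theta$ and $\til(\varphi\lor\psi)\notin\Theta$. Moreover, $H(\Theta,\varphi\lor\psi)$ is not simply related to $H(\Theta,\varphi)$, because condition (iii) ($\{\alpha\}\nvdash\varphi$) is not monotone under disjunction. To handle this, we will replace hypotheses by disjunctive forms $\alpha\lor\varphi$, as in Lemma \ref{lema_alpha_ou_fi_in_hip_abdutivas_agmpabd}, and use Remark \ref{varphi_in_anel_conj_maximal_agmpabd} (every maximal superset contains $\varphi$) to reduce membership in $H$ to the conditions $\varphi\in\Theta'$ and non-triviality. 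With that reduction, the argument transfers verbatim to both $\mathbb{L}=\textbf{CPL}$ and $\mathbb{L}=\textbf{RCbr}$. This relies only on the deduction theorem, proof by cases, De Morgan's laws for $\til$, and replacement; the last is needed for $\postp5$ and holds in \textbf{RCbr} by self-extensionality.
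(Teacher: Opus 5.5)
\paragraph{Comparison with the paper.} Your relation is not the paper's. The paper, following AGM and Pagnucco, defines $\Theta''\leqslant\Theta'$ by three clauses. The third is universally quantified: for every $\varphi$, if $\Theta',\Theta''\in\excedfi$ and $\expfip\subseteq\Theta''$, then $\expfip\subseteq\Theta'$. With that relation, case (1B) of the marking-off identity is immediate, while (1A) and transitivity require pushing clause (iii) through $\varphi\lor\psi$.

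You use the revealed-preference relation, with an existential witness, so $\gamma\subseteq\gamma_\leqslant$ comes for free. For $\gamma_\leqslant\subseteq\gamma$ you must take the witness $\psi$ of $\Theta''\leqslant\Theta'$ for some $\Theta''\in\gamma(\Theta,\varphi)$; an arbitrary $\Theta''$ gives nothing. With that choice the argument runs as follows. Either $\expfilorp\subseteq\exppsip\subseteq\Theta'$, or $\expfilorp\subseteq\expfip\subseteq\Theta''$. In the second case, since $\psi\in\Theta''$, $\postp{7}$ together with $\postp{5}$ again gives $\expfilorp\subseteq\exppsip\subseteq\Theta'$. Then $\postp{6}$ and $\varphi\in\Theta'$ give $\expfip\subseteq\Theta'$. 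Transitivity is the same case split with $\psi_1,\psi_2$ and does not need $\postp{6}$. So, granted the disjunctions, your route is correct and somewhat shorter than the paper's.

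\paragraph{The gap.} The gap is the obstacle you name yourself, and your remedy does not remove it.
\begin{itemize}
\item Rewriting hypotheses as $\alpha\lor\varphi$ only controls $H(\Theta,\varphi\lor\psi)$ and the surplus sets. That is exactly the job of the paper's remark that $\Theta'\in\excedfi$ and $\psi\in\Theta'$ imply $\Theta'\in\excedpsi$, and of the union lemma $\excedlor=(\excedfi)\cup(\excedpsi)$.
\item $\til(\varphi\lor\psi)\notin\Theta$ does follow from $\til\varphi\notin\Theta$.
\item But nothing yields $\varphi\lor\psi\notin\Theta$. Take $p\lor q\in\Theta$ with $p,q\notin\Theta$. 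Then $\Theta^{\ooplus}_{p\lor q}$ and $\Theta\top(p\lor q)$ are undefined, so $\postp{6}$ and $\postp{7}$ cannot be invoked.
\item In your transitivity step, $\psi_1\lor\psi_2$ is then not even an admissible witness under your own definition of $\leqslant$.
\end{itemize}

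The marking-off step can be repaired by routing through $\varphi\land\psi$. That sentence is appropriate whenever $(\excedfi)\cap(\excedpsi)\neq\emptyset$. $\postp{7}$ applied to the disjunction $(\varphi\land\psi)\lor\varphi\equiv\varphi$ gives $\expfip\subseteq\Theta^{\ooplus}_{\varphi\land\psi}$, because $\Theta''\supseteq\expfip$ contains $\varphi\land\psi$. $\postp{6}$ applied to $(\varphi\land\psi)\lor\psi\equiv\psi$ gives $\Theta^{\ooplus}_{\varphi\land\psi}\subseteq Cn_{\mathbb{L}}(\exppsip\cup\{\varphi\land\psi\})\subseteq\Theta'$. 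Transitivity cannot be repaired this way.

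\paragraph{A counterexample.} With $\postp{6}$ and $\postp{7}$ restricted to appropriate disjunctions, the converse direction is false.
\begin{itemize}
\item Take $\mathbb{L}=\textbf{CPL}$ over the variables $p,q,r_1,r_2,\ldots$ and $\Theta=Cn_{\mathbb{L}}(\{p\lor q\}\cup\{r_i : i\geq 1\})$.
\item Its only maximal consistent extensions are $W_a,W_b,W_c$, in which $(p,q)$ takes the values $(1,1),(1,0),(0,1)$.
\item For appropriate $\varphi$, $\excedfi$ is the nonempty proper subset of those $W_x$ that contain $\varphi$.
\item Let $\expfip=W_x$ when $\excedfi=\{W_x\}$. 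Let $\expfip$ be $W_a$, $W_b$, $W_c$ when $\excedfi$ is $\{W_a,W_b\}$, $\{W_b,W_c\}$, $\{W_a,W_c\}$, respectively.
\item $\postp{1}$--$\postp{5}$ hold; for instance, $\varphi\lor\til r_k\in\expfip\cap H(\Theta,\varphi)$ for any $r_k$ not occurring in $\varphi$.
\item Every instance of $\postp{6}$, $\postp{7}$ with $(\Theta,\varphi\lor\psi)$ appropriate holds by inspection. In that case $\excedlor$ has at most two elements, so either it equals $\excedfi$ or $\excedfi$ is a singleton.
\item Since $\bigcap\gamma_\leqslant(\Theta,\varphi)$ must be a single $W_x$, any marking-off relation needs $W_x\leqslant W_x$, $W_b\leqslant W_a$, $W_a\nleqslant W_b$, $W_c\leqslant W_b$ and $W_a\leqslant W_c$. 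By transitivity, $W_a\leqslant W_c\leqslant W_b$ gives $W_a\leqslant W_b$, a contradiction.
\end{itemize}

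The paper's proof has the same silent dependence: its steps (1A) and (2) apply ``$\expfilorp\subseteq\expfip$ or $\expfilorp\subseteq\exppsip$'' for arbitrary $\psi$. So neither argument goes through without an extra hypothesis making $\postp{6}$ and $\postp{7}$ available for every disjunction used.
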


\section{Conclusion and Future Work}
\label{conclusion}

In this paper, the paraconsistent \textbf{AGM}-style abductive expansion operation was developed---to the best of my knowledge, the first of its kind in the \textbf{AGM} literature. Specifically, its postulates and its transitively relational \textit{partial meet} construction. This operation was elaborated from rationality criteria specifically designed to satisfy some important philosophical demands of both abductive reasoning, especially those found in the thought of C.S. Peirce, its creator, and explanation; both commonly neglected by the formal logic literature. A general taxonomy, inspired by Aliseda \cite{Aliseda}, which attempts to capture the processual aspect of abductive reasoning---from the initial surprise in the face of a surprising phenomenon, which demands adequately explanatory hypotheses, to the expanded epistemic states---was designed accordingly. Thus, the abductive expansion operation developed in this paper, even in its classical aspect, although formally based on the abductive expansion operation originally created by Pagnucco \cite{Pagnucco}, implements a different philosophical perspective: for the author, abduction, as a mere logical form\footnote{Definition found in \cite[79]{Pagnucco}.}, adds interesting features to the \textbf{AGM} system\footnote{This view is in accordance with Pagnucco's proposal and can be easily noted in the very term ``within'' in the title of his thesis: ``The role of abductive reasoning within the process of belief revision''.}. On the other hand, for me, it is the \textbf{AGM} system that adds some interesting formal aspects to abduction---or rather, abductive reasoning is a philosophically complex inferential process, and some of these aspects can be formally captured by the interaction between the taxonomy and the \textbf{AGM} operation. Moreover, and as the most relevant formal development, thanks to the logic \textbf{RCbr}, the abductive expansion operation developed in this paper has a paraconsistent aspect, capable of providing unique capabilities to the operation, as seen in Subsection \ref{distinctions_classic_paraconsistent}.

The content presented in this paper is part of an ongoing PhD thesis. Thus, the paraconsistent abductive expansion operation is part of a broader system called \textbf{AGM}$p_{abd}$. Other operations, such as internal and external revision---the latter being possible only due to paraconsistency---can be obtained, with many limitations\footnote{The same limitations were obtained by Pagnucco, in the sense that only a few simple postulates can be obtained, since the non-monotonicity of the abductive expansion operation entails that an adequate inclusion relation between the expansion and revision operations, as occurs in classical \textbf{AGM} revision, cannot be fully obtained. See \cite[p.165-166]{Pagnucco}.}. Moreover, a construction based on an \textit{abductive entrenchment} order, also with its paraconsistent aspect, thanks to the underlying logic \textbf{RCbr}, can be implemented. A new system called \textbf{AGM}$\circ_{abd}$---which will be the subject of a second paper---is also being developed to deal, in a more granular and particular way, with the triggers of abductive strong novelty, weak novelty and weak anomaly (definition \ref{gatilhos_paraconsistentes_agmpabd} and figure \ref{figura_processo_abdutivo_paraconsistente_agmpabd}), assigning differentiated roles to the paraconsistent operators $\neg$ and $\circ$. Two other future works, associated with these systems, will also be developed. They concern the dynamics of preferences among hypotheses, contradictory or not---specifically in the association between the logic \textbf{RCbr} and the modal logic \textbf{DBL} (\textit{Dynamic Betterness Logic} of Fenrong Liu \cite{FEIRONG_BOOK_2011})---and the non-monotonicity of the paraconsistent abductive expansion operation---similarly to the results obtained by Makinson and Gärdenfors, \cite{MAK_NOMON_PATTERN} and \cite{GARD_MAK_NONMON_EXPEC}, for classical \textbf{AGM} revision.

\begin{appendices}

\section{Supplementary Materials}
\label{apendiceA}

\subsection{Tarskian Logic Properties}

Some general properties about Tarskian logic that will be useful in many proofs. As usually, we should also assume that $\varphi \leftrightarrow \psi$ is the same as \mbox{$(\varphi \to \psi) \land (\psi \to \varphi)$} and $\varphi \equiv \psi$ is the same as $\vdash \varphi \leftrightarrow \psi$. 

\begin{definition}[Tarskian Logic] \label{definicao_logica_tarskiana}
    A logic $\mathbb{L} = \langle \mathcal{L}_\Sigma, Cn \rangle$ defined over a language $\mathcal{L}_\Sigma$ and a consequence relation $Cn$ - or $\vdash$ - is Tarskian if it satisfies the following three properties\protect\footnotemark, for all $\Gamma \cup \Delta \cup \{\varphi\} \subseteq \mathcal{L}_\Sigma$:

        \begin{description}
            \item (i) $\Gamma \subseteq Cn(\Gamma)$  (\textit{inclusion})             
            \item (ii) $Cn(Cn(\Gamma)) \subseteq Cn(\Gamma)$ (\textit{iteration})             
            \item (iii) If $\Gamma \subseteq \Delta$, then $Cn(\Gamma) \subseteq Cn(\Delta)$  (\textit{monotonicity)}
            
        \end{description}
        or
        \begin{description}
            \item (i) If $\varphi \in \Gamma$, then $\Gamma \vdash \varphi$ (\textit{reflexivity})
            \item (ii) If $\Delta \vdash \varphi$ and $\Gamma \vdash \psi$ for any $\psi \in \Delta$, then $\Gamma \vdash \varphi$ (\textit{cut})
            \item (iii) If $\Gamma \vdash \varphi$ and $\Gamma \subseteq \Delta$, then $\Delta \vdash \varphi$ (\textit{monotonicity)}            
        \end{description}
 \end{definition}

\footnotetext{I believe it is important to maintain both notations, since they can be frequently found in \textbf{AGM} literature and they will be used interchangeably in this paper, considering that the Tarskian consequence operator $Cn: \wp(\mathcal{L}_\Sigma) \to \wp(\mathcal{L}_\Sigma)$ is defined as follows: $Cn(\Gamma) = \{\varphi \in \mathcal{L}_\Sigma : \Gamma \vdash \varphi\}$. In particular, therefore, $\vdash \varphi$ is the same as $\varphi \in Cn(\emptyset)$.}

\begin{definition}[Standard logic]
A logic $\mathbb{L}$ is considered standard if it is Tarskian, finitary, and structural, that is, if it satisfies, in addition to the three properties described in Definition \ref{definicao_logica_tarskiana}, two other properties, respectively:
\begin{description}

    \item (iv) If $\Gamma \vdash \varphi$, then there exists a finite subset $\Gamma_0 \subseteq \Gamma$ such that $\Gamma_0 \vdash \varphi$ (\textit{compactness});
    \item (v) If $\Gamma \vdash \varphi$, then $\sigma[\Gamma] \vdash \sigma[\varphi]$ for every substitution $\sigma$ of formulas for variables (\textit{substitution}).
\end{description}
\label{definicao_logica_tarskiana_standard}
\end{definition}

\begin{definition}[Supraclassical Tarskian logic]
A Tarskian logic \mbox{$\mathbb{L} = \langle \mathcal{L}_\Sigma, Cn \rangle$} is considered supraclassical if, given a classical consequence operator $C_{CPL}$, it is the case that $C_{CPL}(\Gamma) \subseteq Cn(\Gamma)$ for every set of formulas $\Gamma$. That is, for every formula $\varphi$ that can be classically derived from $\Gamma$, i.e., $\Gamma \vdash_{CPL} \varphi$, then $\varphi \in Cn(\Gamma)$.
\label{definicao_logica_tarskiana_supraclassuca}
\end{definition}

\begin{properties}[Valid properties in a supraclassical Tarskian logic $\mathbb{L}$\footnote{Including \textbf{CPL} and \textbf{mbC} (and its extensions). Of course, in a classical notation, the operator $\neg$ found in (iii) must be read as the strong classical negation, but, in the case os \textbf{mbC} and its extensions, $\neg$ is the paraconsistent negation. Property (iii) can be obtained, since $\psi \lor \neg \psi$ is an axiom in the \textbf{mbc} axiomatic scheme.}] \mbox{}
\begin{description}
    \item (i) $\Gamma \cup \{\psi\} \vdash \varphi$ if and only if $\Gamma \vdash \psi \to \varphi$ (\textit{Deduction Theorem});
    \item (ii) If $\Gamma \cup \{\psi\} \vdash \varphi$ and $\Gamma \cup \{\delta\} \vdash \varphi$, then $\Gamma \cup \{\psi \lor \delta\} \vdash \varphi$  (\textit{Disjunction of Premises});
    \item (iii) If $\Gamma \cup \{\psi\} \vdash \varphi$ and $\Gamma \cup \{\neg \psi\} \vdash \varphi$, then $\Gamma \vdash \varphi$ (\textit{Proof by Cases}).
\end{description}
\label{propriedades_teorema_da_deducao_disjuncao_das_premissas}
\end{properties}

The following definitions and properties are well-known in literature.

\begin{definition}[$\delta$-saturated set] \label{definicao_delta_saturado}
For some Tarskian logic $\mathbb{L}$ over the language $\mathcal{L}_\Sigma$, let $\Gamma \cup \{\delta\} \subseteq \mathcal{L}_\Sigma$. We say that $\bar\Gamma$ is a maximal consistent set with respect to $\delta$ in $\mathbb{L}$ - or $\delta$-saturated - if it satisfies the following properties:
\begin{description}
\item (i) $\bar\Gamma \nvdash \delta$; 
\item (ii) if $\psi \not\in \bar\Gamma$ then $\bar\Gamma,\psi \vdash \delta$.
\end{description}
\end{definition}

\begin{properties}
\label{proposicao_propriedade_delta_saturado}
Let $\bar\Gamma$ be a $\delta$-saturated set (Definition \ref{definicao_delta_saturado}). Then, $\bar\Gamma$ satisfies the following properties\footnote{Again, including \textbf{CPL} and \textbf{mbC} (and its extensions). When considering \textbf{CPL}, the operator $\neg$ must be interpreted as classical strong negation. In the case of \textbf{mbC} and its extensions, on the other hand, $\neg$ must be interpreted as paraconsistent negation. It is important to emphasize that, in the case of \textbf{CPL}, property (ii) can be verified in both directions, i.e., $\psi \notin \bar\Gamma$ if and only if $\neg \psi \in \bar\Gamma$. To verify the \textit{only if} direction, it suffices to suppose $\neg \psi \in \bar\Gamma$ and, by \textit{reductio}, suppose $\psi \in \bar\Gamma$. Since, by \textit{ex falso}, $\psi, \neg \psi \vdash \delta$, then we have $\bar\Gamma \vdash \delta$, a contradiction with the very definition of $\delta$-saturated. Hence, $\psi \notin \bar\Gamma$. I choose, however, to present property (ii) in this way, with only one direction of the conditional, because, thus described, it (and the corollary \ref{corolario_saturado_neg_delta_in_Gamma} below, by the way) is valid in the paraconsistent logic \textbf{mbC}---and, therefore, in its extensions---even with the paraconsistent negation $\neg$. The proof is identical, since the rule of \textit{proof by cases}---property \ref{propriedades_teorema_da_deducao_disjuncao_das_premissas} (iii)---holds in \textbf{mbC}.\label{nota_rodape_saturado_vale_mbC}}:
\begin{description}

    \item (i) $\bar\Gamma$ is a set closed under logical consequences; 
    \item (ii) if $\psi \notin \bar\Gamma$, then $\neg \psi \in \bar\Gamma$.
\end{description}
\end{properties}

\begin{proof}\mbox{}\\
    (i) We need to show that $\bar\Gamma \vdash \psi$ if and only if $\psi \in \bar\Gamma$.\\ 
    (If:) If $\psi \in \bar\Gamma$, since $\mathbb{L}$ is a Tarskian logic, by \textit{inclusion}, we directly have that $\bar\Gamma \vdash \psi$.\\
    (Only if:) We need to show that if $\bar\Gamma \vdash \psi$, then $\psi \in \bar\Gamma$. By \textit{contraposition}, this is equivalent to showing that if $\psi \notin \bar\Gamma$, then $\bar\Gamma \nvdash \psi$. Suppose, therefore, that $\psi \notin \bar\Gamma$. By Definition \ref{definicao_delta_saturado} (ii), $\bar\Gamma,\psi \vdash \delta$. By \textit{reductio}, let us assume that $\bar\Gamma \vdash \psi$. In this case, $\bar\Gamma \vdash \delta$, a contradiction with Definition \ref{definicao_delta_saturado} (i). Therefore, $\bar\Gamma \nvdash \psi$.\\[1mm]
    (ii) Suppose that $\psi \notin \bar\Gamma$. By Definition \ref{definicao_delta_saturado} (ii), $\bar\Gamma,\psi \vdash \delta$. In this case, suppose, by \textit{reductio}, that $\neg \psi \notin \bar\Gamma$. Again, by Definition \ref{definicao_delta_saturado} (ii), $\bar\Gamma,\neg \psi \vdash \delta$. We therefore have, by \textit{proof by cases}, that $\bar\Gamma \vdash \delta$, a contradiction with Definition \ref{definicao_delta_saturado} (i). Hence, $\neg \psi \in \bar\Gamma$.
\end{proof}

\begin{corollary} \label{corolario_saturado_neg_delta_in_Gamma}
    Let $\bar\Gamma$ be a $\delta$-saturated set. Since $\delta \notin \bar\Gamma$, by Proposition \ref{proposicao_propriedade_delta_saturado}, then $\neg \delta \in \bar\Gamma$.
\end{corollary}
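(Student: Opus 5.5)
The corollary follows almost immediately from Properties~\ref{proposicao_propriedade_delta_saturado}, so the plan is a two-step instantiation. First I establish $\delta \notin \bar\Gamma$. By Definition~\ref{definicao_delta_saturado}~(i), $\bar\Gamma \nvdash \delta$. If $\delta$ were in $\bar\Gamma$, then $\bar\Gamma \vdash \delta$ by \textit{reflexivity} of the Tarskian consequence relation (Definition~\ref{definicao_logica_tarskiana}), which is a contradiction. Hence $\delta \notin \bar\Gamma$. Equivalently, this follows from item~(i) of Properties~\ref{proposicao_propriedade_delta_saturado}: since $\bar\Gamma$ is closed under consequence, membership and derivability coincide.

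Second, I apply item~(ii) of Properties~\ref{proposicao_propriedade_delta_saturado} with $\psi := \delta$. From $\delta \notin \bar\Gamma$ it gives $\neg\delta \in \bar\Gamma$ directly.

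To be explicit about why that item is available here, the argument behind it goes as follows. Suppose $\neg\delta \notin \bar\Gamma$. Saturation (ii) gives both $\bar\Gamma, \delta \vdash \delta$ and $\bar\Gamma, \neg\delta \vdash \delta$. \textit{Proof by cases} (Properties~\ref{propriedades_teorema_da_deducao_disjuncao_das_premissas}~(iii)) then yields $\bar\Gamma \vdash \delta$, contradicting saturation (i).

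This argument is uniform across the choice of logic. Proof by cases holds both in \textbf{CPL}, where $\neg$ is read as $\til$, and in \textbf{mbC} and its extensions, including \textbf{RCbr}, where it rests on the axiom $\varphi \lor \neg\varphi$. Footnote~\ref{nota_rodape_saturado_vale_mbC} already records this.

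The only point that needs care is this uniformity across $\mathbb{L} \in \{\textbf{CPL}, \textbf{RCbr}\}$: the claim must not be read as $\til\delta \in \bar\Gamma$ in the paraconsistent case. I would add a remark that, for paraconsistent $\neg$, the converse direction (that $\neg\delta \in \bar\Gamma$ forces $\delta \notin \bar\Gamma$) is not being claimed and is not needed. Beyond that, there is no real obstacle.
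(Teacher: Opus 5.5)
Your proposal is correct and follows the paper's own argument. You obtain $\delta \notin \bar\Gamma$ from clause (i) of Definition~\ref{definicao_delta_saturado} via reflexivity, and then instantiate item (ii) of Properties~\ref{proposicao_propriedade_delta_saturado} at $\psi := \delta$; your proof-by-cases unpacking of that item repeats the paper's proof of it.
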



\begin{theorem}[Lindenbaum-Łoś Theorem\protect\footnotemark] \label{teorema_lindenbaum_los}
Let $\mathbb{L}$ be a finitary Tarskian logic over the language $\mathcal{L}_\Sigma$. Let $\Gamma \cup \{\delta\} \subseteq \mathcal{L}_\Sigma$ such that $\Gamma \nvdash \delta$. There exists a set $\Delta$ such that $\Gamma \subseteq \Delta \subset \mathcal{L}_\Sigma$ with $\Delta$ maximally consistent (non-trivial) with respect to $\delta \in \mathcal{L}_\Sigma$ - that is, $\delta$-saturated (Definition \ref{definicao_delta_saturado}).    
\end{theorem}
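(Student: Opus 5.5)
We prove the Lindenbaum--Łoś Theorem \ref{teorema_lindenbaum_los} by the standard enumeration-and-chain construction, relying only on the Tarskian properties (inclusion, monotonicity, cut) and finitariness. The one interpretive point is that in a finitary Tarskian logic the set $\mathcal{L}_\Sigma$ of formulas is countable, since it is generated from a countable set of variables by finitely many connectives. We therefore fix an enumeration $\psi_0, \psi_1, \psi_2, \dots$ of $\mathcal{L}_\Sigma$. If one wishes to avoid the countability assumption, the same argument goes through with Zorn's Lemma applied to the family $\{\Delta : \Gamma \subseteq \Delta,\ \Delta \nvdash \delta\}$, ordered by inclusion.

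First, we define a chain by recursion. Set $\Delta_0 = \Gamma$. Then put $\Delta_{n+1} = \Delta_n \cup \{\psi_n\}$ if $\Delta_n \cup \{\psi_n\} \nvdash \delta$, and $\Delta_{n+1} = \Delta_n$ otherwise. By induction, each $\Delta_n$ satisfies $\Delta_n \nvdash \delta$; the base case is the hypothesis $\Gamma \nvdash \delta$, and the recursive step only ever adds formulas that preserve this. Finally, let $\Delta = \bigcup_n \Delta_n$, so that $\Gamma \subseteq \Delta$ holds trivially.

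Second, we verify Definition \ref{definicao_delta_saturado} (i), namely $\Delta \nvdash \delta$. This is the step that genuinely requires compactness, and it is the main obstacle. Suppose $\Delta \vdash \delta$. By finitariness there is a finite $\Delta' \subseteq \Delta$ with $\Delta' \vdash \delta$. Since $\Delta'$ is finite and the $\Delta_n$ form an increasing chain, $\Delta' \subseteq \Delta_k$ for some $k$. Monotonicity then gives $\Delta_k \vdash \delta$, which contradicts the invariant established in the first step.

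Third, we verify Definition \ref{definicao_delta_saturado} (ii). Let $\psi \notin \Delta$, and say $\psi = \psi_n$. Since $\psi_n$ was not added at stage $n$, we must have $\Delta_n \cup \{\psi_n\} \vdash \delta$, and monotonicity yields $\Delta \cup \{\psi\} \vdash \delta$. As for $\Delta \subset \mathcal{L}_\Sigma$ being proper, this is immediate because $\delta \notin \Delta$: otherwise reflexivity would give $\Delta \vdash \delta$, contradicting the second step. Hence $\Delta$ is $\delta$-saturated and extends $\Gamma$, as required.
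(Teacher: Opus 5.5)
Your argument is correct, but your main route differs from the one the paper relies on. The paper gives no proof of its own. It defers to Wójcicki and to Carnielli--Coniglio (Thm.~2.2.6), and the latter proves the theorem for an arbitrary finitary Tarskian logic by applying Zorn's Lemma to $\{\Omega : \Gamma\subseteq\Omega,\ \Omega\nvdash\delta\}$. In that proof, finitariness shows that the union of a chain stays in the family, since a finite subset deriving $\delta$ would already lie inside one member of the chain. A maximal element then satisfies clause (ii) of Definition~\ref{definicao_delta_saturado} at once: if $\psi\notin\Delta$ and $\Delta\cup\{\psi\}\nvdash\delta$, then $\Delta\cup\{\psi\}$ would be a strictly larger member of the family.

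Your enumeration uses exactly the same compactness step but replaces Zorn by an explicit $\omega$-stage recursion. This gives you a choice-free, concrete $\Delta$ fixed by the enumeration, at the price of needing $\mathcal{L}_\Sigma$ to be countable. Be careful about how you justify that. Finitariness is a property of $\vdash$ and says nothing about the cardinality of the language, so countability does not follow from the hypotheses of the theorem as stated. It holds for \textbf{CPL} and \textbf{RCbr} only because they are built from countably many variables over a finite signature. For the statement in full generality, the Zorn variant you mention in passing is the one that actually does the work, and it goes through with the verification above.

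Incidentally, cut is never used in your argument; reflexivity, monotonicity and finitariness suffice.
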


\footnotetext{The proof of this theorem can be found in \cite[p.54, Theorem 22.2]{Wojcicki1984} and, in an adapted version, \cite[p.37, Theorem 2.2.6]{CONCAR_2016}.}

\section{Sections \ref{agmpabd_preliminaries}, \ref{agmpabd_expansion_postulates} and \ref{agmpabd_expansion_construction} proofs}
\label{apendiceB}

\noindent\textbf{Observation \ref{remark_par_apropriado_expansao_negacao_forte_agmpabd}}:
\textit{It is the case that $\papfig$ (definition \ref{definicao_par_apropriado_expansao_LPC_RCbr_agmpabd}) if and only if $\varphi \notin \Theta$ and $\til\varphi \notin \Theta$}.

\begin{proof}
    (If:) By definition \ref{definicao_par_apropriado_expansao_LPC_RCbr_agmpabd}, $\papfig$ is the case, equivalently, if: (i) $\varphi \notin \Theta$ and (ii) ($\circ \varphi \notin \Theta$ and $\neg \varphi \notin \Theta$) or ($\circ \varphi \in \Theta$ and $\neg \varphi \notin \Theta$) or ($\circ \varphi \notin \Theta$ and $\neg \varphi \in \Theta$). From this, it follows that $\papfig$ is the case if (i) $\varphi \notin \Theta$ and (ii) it is not the case that $\circ \varphi \in \Theta$ and $\neg \varphi \in \Theta$. Since $\Theta \in Th(\mathbb{L})$, then $(\circ \varphi \land \neg \varphi) \notin \Theta$. Since $\til\varphi \equiv_{\mathbb{L}} \circ \varphi \land \neg \varphi$, then $\til\varphi \notin \Theta$.\\[1mm]
    (Only if:) Consider $\varphi \notin \Theta$ and $\til\varphi \notin \Theta$. Since $\til\varphi \equiv_{\mathbb{L}} \circ \varphi \land \neg \varphi$, then $(\circ \varphi \land \neg \varphi) \notin \Theta$. Since $\Theta \in Th(\mathbb{L})$, then we have three possibilities: $\circ \varphi \notin \Theta$ or $\neg \varphi \notin \Theta$ or ($\circ \varphi \notin \Theta$ and $\neg \varphi \notin \Theta$), which exactly reflects the conditions of definition \ref{definicao_par_apropriado_expansao_LPC_RCbr_agmpabd}.    
\end{proof}

\noindent\textbf{Lemma \ref{lema_sempre_existe_auto_explicacao_agmpabd}} (Partial self-explanation):  
    \textit{If $\papfig$ is the case (definition \ref{definicao_par_apropriado_expansao_LPC_RCbr_agmpabd}), then there always exists some $\alpha \in \mathcal{L}_{\Sigma_*}$ that satisfies criteria (i)-(iii) of definition \ref{conj_hip_abdutivas_agmpabd}. Hence, $H(\Theta, \varphi) \neq \emptyset$.}
\begin{proof}\mbox{}\\
    \noindent
        Suppose $\papfig$ is the case---that is, $\varphi \notin \Theta$ and $\til \varphi \notin \Theta$ (definition \ref{definicao_par_apropriado_expansao_LPC_RCbr_agmpabd}). Since $\Theta \in Th(\mathbb{L})$, then $\Theta \neq \emptyset$. Consider, therefore, arbitrary $\delta, \alpha \in \mathcal{L}_{\Sigma_*}$ such that $\delta \in \Theta$ and $\alpha = \delta \to \varphi$. We need to show that $\alpha$ satisfies criteria (i)-(iii) of definition \ref{lema_sempre_existe_auto_explicacao_agmpabd}. Since, by \textit{modus ponens}, $\Theta \cup \{\delta \to \varphi\} \vdash_{\mathbb{L}} \varphi$, then $\alpha$ satisfies criterion (i). Since $\papfig$ is the case, then $\til \varphi \notin \Theta$. Thus, $\Theta \cup \{\delta \to \varphi\} \nvdash_{\mathbb{L}} \bot$. So $\alpha$ satisfies criterion (ii). Finally, $\delta \to \varphi \nvdash_{\mathbb{L}} \varphi$ and condition (iii) is satisfied. Therefore, if $\papfig$ is the case, then there always exists an $\alpha \in \mathcal{L}_{\Sigma_\circ}$ such that $\alpha \in H(\Theta, \varphi)$. Hence, $H(\Theta, \varphi) \neq \emptyset$.
\end{proof}

\

\noindent\textbf{Observation \ref{alpha_notin_theta_agmpabd}}: \textit{Let $H(\Theta, \varphi)$ be the set of abductive hypotheses for the pair $(\Theta,\varphi)$ (definition \ref{conj_hip_abdutivas_agmpabd}). It is the case that \mbox{$\Theta \cap H(\Theta,\varphi)=\emptyset$}. }
\begin{proof}\mbox{}\\
        From definition \ref{conj_hip_abdutivas_agmpabd}, we have that $\varphi \notin \Theta$. Suppose, by \textit{reductio}, that $\Theta \cap H(\Theta,\varphi) \neq \emptyset$. In this case, from condition (i) $\Theta, \alpha \vdash_{\mathbb{L}} \varphi$, we infer that $\Theta \vdash_{\mathbb{L}} \varphi$. Since $\Theta \in Th(\mathbb{L})$, then $\varphi \in \Theta$, a contradiction. Hence, as required, $\Theta \cap H(\Theta,\varphi)=\emptyset$.
\end{proof}

\

\noindent\textbf{Lemma \ref{conj_hipoteses_abdutivas_iguais_varphi1_varphi2_agmpabd}}: \textit{ Let $\papfig$ and $\pappsig$. Let $H(\Theta, \varphi)$ and $H(\Theta, \psi)$ be the sets of abductive hypotheses for $\varphi$ and $\psi$, respectively. It is the case that if $\vdash_{\mathbb{L}} \varphi \leftrightarrow \psi$, then $H(\Theta, \varphi) = H(\Theta, \psi)$.}  

\begin{proof}\mbox{}\\
    \noindent
    Since $\papfig$ and $\pappsig$ are the case, then, by lemma \ref{lema_sempre_existe_auto_explicacao_agmpabd}, $H(\Theta, \varphi) \neq \emptyset$ and $H(\Theta, \psi) \neq \emptyset$. This condition, therefore, does not need to be considered.\\[1mm]
    Consider, therefore, $\vdash_{\mathbb{L}} \varphi \leftrightarrow \psi$. We need to show that (1) $H(\Theta, \varphi) \subseteq H(\Theta, \psi)$ and (2) $H(\Theta, \psi) \subseteq H(\Theta, \varphi)$:

    {\leftskip=1.5cm \noindent
    (1) $H(\Theta, \varphi) \subseteq H(\Theta, \psi)$. We need to show that if $\alpha \in H(\Theta, \varphi)$, then $\alpha \in H(\Theta, \psi)$. If $\alpha \in H(\Theta, \varphi)$, then, by the three conditions of definition \ref{conj_hip_abdutivas_agmpabd}, $(i) \ \Theta, \alpha \vdash_{\mathbb{L}} \varphi, \ (ii) \ \Theta, \alpha \nvdash_{\mathbb{L}} \bot \ \mbox{ and } \ (iii) \ \alpha \nvdash_{\mathbb{L}} \varphi$. Since $\vdash_{\mathbb{L}} \varphi \leftrightarrow \psi$, then $\varphi$ and $\psi$ can be substituted in any context---including in the paraconsistent case, since \textbf{RCbr} satisfies the \textit{replacement} property. In particular, therefore, it is also the case that $(i) \ \Theta, \alpha \vdash_{\mathbb{L}} \psi, \ (ii) \ \Theta, \alpha \nvdash_{\mathbb{L}} \bot \ \mbox{ and } \ (iii) \ \alpha \nvdash_{\mathbb{L}} \psi$ for any $\varphi, \psi \in \mathcal{L}_{\Sigma_*}$. Hence, $\alpha \in H(\Theta, \psi)$.

    \noindent
    (2) $H(\Theta, \psi) \subseteq H(\Theta, \varphi)$. Exactly the same reasoning as in (1).
     \par }
        
    Therefore, $H(\Theta, \varphi) = H(\Theta, \psi)$.   
\end{proof}

\begin{remark} \label{remark_theta_varphi1_varphi2_nao_e_o_caso_agmpabd}
Let $\papfig$ and $\pappsig$. Let $H(\Theta, \varphi)$ and $H(\Theta, \psi)$ be the sets of abductive hypotheses for $\varphi$ and $\psi$, respectively. If $\Theta \vdash_{\mathbb{L}} \varphi \leftrightarrow \psi$, but $\nvdash_{\mathbb{L}} \varphi \leftrightarrow \psi$, then $H(\Theta, \varphi) \neq H(\Theta, \psi)$.
\end{remark}
\begin{proof}\mbox{}\\
Consider $\Theta \vdash_{\mathbb{L}} \varphi \leftrightarrow \psi$ and $\nvdash_{\mathbb{L}} \varphi \leftrightarrow \psi$. Since $\Theta \in Th(\mathbb{L})$, then $\varphi \leftrightarrow \psi \in \Theta$. Thus, by definition \ref{conj_hip_abdutivas_agmpabd}, $(i) \ \Theta, \psi \vdash_{\mathbb{L}} \varphi, \ (ii) \ \Theta, \psi \nvdash_{\mathbb{L}} \bot \ \mbox{ and } \ (iii) \ \psi \nvdash_{\mathbb{L}} \varphi$. Hence, $\psi \in H(\Theta, \varphi)$. But $\varphi \notin H(\Theta, \varphi)$ since $\varphi \vdash_{\mathbb{L}} \varphi$, which violates criterion (iii). The same reasoning can be employed to conclude that $\varphi \in H(\Theta, \psi)$, but $\psi \notin H(\Theta, \psi)$. Therefore, $H(\Theta, \varphi) \neq H(\Theta, \psi)$.      
\end{proof}

\noindent\textbf{Observation \ref{varphi_in_anel_agmpabd}}: \textit{If $\expfip$ satisfies postulates $\postp1$ - $\postp3$ from definition \ref{definicao_postulados_expansao_abdutiva_agmpabd}, then $\varphi \in \expfip\setminus\Theta$.}

 \begin{proof}\mbox{}\\
    Since, by definition \ref{definicao_postulados_expansao_abdutiva_agmpabd}, $\papfig$ is the case, by lemma \ref{lema_sempre_existe_auto_explicacao_agmpabd}, $H(\Theta, \varphi) \neq \emptyset$. By postulate $\postp3$, therefore, there must be at least one abductive hypothesis $\alpha \in \expfip$. Since, by postulate $\postp2$, $\Theta \subset \expfip$ and by postulate $\postp1$ $\expfip \in Th(\mathbb{L})$, then $\varphi \in \expfip$ (since, by condition (i) of definition \ref{conj_hip_abdutivas_agmpabd} of $H(\Theta, \varphi)$, it is the case that $\Theta, \alpha \vdash_{\mathbb{L}} \varphi$). Since, again, $\papfig$ is the case, then $\varphi \notin \Theta$. Hence, $\varphi \in \expfip\setminus\Theta$.
    \end{proof}

\

\noindent\textbf{Lemma \ref{lema_alpha_ou_fi_in_hip_abdutivas_agmpabd}} \textit{Let $\papfig$ and $H(\Theta, \varphi)$ be the set of abductive hypotheses for $(\Theta, \varphi)$. If $\alpha \in H(\Theta,\varphi)$, then it is the case that $\alpha \lor \varphi \in H(\Theta, \varphi)$.}   
\begin{proof} \mbox{}\\
    By lemma \ref{lema_sempre_existe_auto_explicacao_agmpabd}, $H(\Theta, \varphi) \neq \emptyset$. Therefore, there is an $\alpha \in \mathcal{L}_{\Sigma_*}$ such that $\alpha \in H(\Theta,\varphi)$. In these terms, we need to show that $\alpha \lor \varphi$ satisfies conditions (i), (ii) and (iii) of definition \ref{conj_hip_abdutivas_agmpabd}. \\[1mm] 
    Given that, by condition (i) of definition \ref{conj_hip_abdutivas_agmpabd}, $\Theta, \alpha \vdash_{\mathbb{L}} \varphi$, and it is the case that $\Theta, \varphi \vdash_{\mathbb{L}} \varphi$, by \textit{disjunction of premises}---a rule valid in \textbf{mbC} and, therefore, in \textbf{RCbr}---we have that $\Theta \cup \{\alpha \lor \varphi\} \vdash_{\mathbb{L}} \varphi$ (i). Suppose now, by \textit{reductio}, that $\Theta \cup \{\alpha \lor \varphi\} \vdash_{\mathbb{L}} \bot$. Since $\{\alpha\} \vdash_{\mathbb{L}} \alpha \lor \varphi$, then (considering, evidently, that \textbf{RCbr}, like any \textbf{LFI}, is a standard Tarskian logic, according to definition \ref{definicao_logica_tarskiana_standard}, p. \pageref{definicao_logica_tarskiana_standard}), by \textit{cut}, we infer that $\Theta, \alpha \vdash_{\mathbb{L}} \bot$, a contradiction with condition (ii). Hence, $\Theta \cup \{\alpha \lor \varphi\} \nvdash_{\mathbb{L}} \bot$ (ii). Finally, it is the case that $\{\alpha \lor \varphi\} \nvdash_{\mathbb{L}} \varphi$ (iii). Therefore, $\alpha \lor \varphi \in H(\Theta, \varphi)$, as required.
\end{proof}

\

\noindent\textbf{Lemma \ref{lema_alpha_ou_fi_in_expansao_agmpabd}} \textit{If $\expfip$ satisfies postulates $\postp1$ - $\postp3$ from definition \ref{definicao_postulados_expansao_abdutiva_agmpabd}, then, for $\alpha \in H(\Theta, \varphi)$, it is the case that $\alpha \lor \varphi \in \expfip$.}

\begin{proof} \mbox{}\\
    Since, by definition \ref{definicao_postulados_expansao_abdutiva_agmpabd}, $\papfig$ is the case, then, by lemma \ref{lema_sempre_existe_auto_explicacao_agmpabd}, $H(\Theta, \varphi) \neq \emptyset$. Therefore, there is an $\alpha \in \mathcal{L}_{\Sigma_*}$ such that $\alpha \in H(\Theta,\varphi)$. By lemma \ref{varphi_in_anel_agmpabd}, we have that $\varphi \in \expfip$. Since, by postulate $\postp1$, $\expfip \in Th(\mathbb{L})$ and it is the case that $\varphi \vdash_{\mathbb{L}} \alpha \lor \varphi$, then, for $\alpha \in H(\Theta, \varphi)$, it is the case that $\alpha \lor \varphi \in \expfip$.    
\end{proof}

\

\begin{properties} \label{proposition_propriedades_teta6_agmpabd}
The following properties are consequences of postulate $\postp6$ in the presence of $\postp1$ - $\postp5$:
\begin{description}

\leftskip 40pt
    \item (i) $\expfip \cap \exppsip \subseteq \expfilorp$ 
    \item (ii) If $\varphi \in \expfilorp$, then $\expfip \subseteq \expfilorp$
\end{description}
\end{properties}
\begin{proof} \mbox{}\\
(i) $\expfip \cap \exppsip \subseteq \expfilorp$:

{\leftskip=1.5cm \noindent
 We need to show that if $\delta \in \expfip \cap \exppsip$, then $\delta \in \expfilorp$. If $\delta \in \expfip \cap \exppsip$, then $\delta \in \expfip$ and $\delta \in \exppsip$. Consider the first case. By postulate $\postp6$, $\delta \in Cn_{\mathbb{L}}(\expfilorp \cup \{\varphi\})$ and, by the \textit{deduction theorem}, we obtain $\varphi \to \delta \in Cn_{\mathbb{L}}(\expfilorp)$. By postulate $\postp1$, we have (a) $\varphi \to \delta \in \expfilorp$. Analogously, from the second case where $\delta \in \exppsip$, we have, by $\postp6$, $\delta \in Cn_{\mathbb{L}}(\expfilorp \cup \{\psi\})$ and, with the same reasoning, we obtain (b) $\psi \to \delta \in \expfilorp$. From (a) and (b) and, again, $\postp1$, we obtain $(\varphi \to \delta) \land (\psi \to \delta) \in Cn_{\mathbb{L}}(\expfilorp)$. By \textit{disjunction of premises}, we have $(\varphi \lor \psi) \to \delta \in Cn_{\mathbb{L}}(\expfilorp)$. By observation \ref{varphi_in_anel_agmpabd}, it is the case that $\varphi \lor \psi \in \expfilorp$. Therefore, by $\postp1$ and \textit{modus ponens}, we have $\delta \in \expfilorp$, as desired.
\par }

\noindent (ii) If $\varphi \in \expfilorp$, then $\expfip \subseteq \expfilorp$:

{\leftskip=1.5cm \noindent
Suppose $\varphi \in \expfilorp$. From this, we have that $\expfilorp \cup \{\varphi\} = \expfilorp$. By \textit{monotonicity} and $\postp1$, we have $Cn_{\mathbb{L}}(\expfilorp \cup \{\varphi\}) = Cn_{\mathbb{L}}(\expfilorp) = \expfilorp$. Since $\expfip \subseteq Cn_{\mathbb{L}}(\expfilorp \cup \{\varphi\})$, by $\postp6$, then, as required, $\expfip \subseteq \expfilorp$.
\par }
\end{proof}

\begin{properties} \label{proposition_propriedades_teta7_agmpabd}
The following properties are consequences of postulate $\postp7$ in the presence of $\postp1$ - $\postp5$:
\begin{description}

\leftskip 40pt
    \item (i) $\expfilorp \subseteq \expfip$ or $\expfilorp \subseteq \exppsip$
    \item (ii) If $\varphi \notin \expfilorp$, then $\expfilorp \in \exppsip$
\end{description}
\end{properties}
\begin{proof} \mbox{}\\
(i) $\expfilorp \subseteq \expfip$ or $\expfilorp \subseteq \exppsip$:

{\leftskip=1.5cm \noindent
By observation \ref{varphi_in_anel_agmpabd}, we have that $\varphi \lor \psi \in \expfilorp$. By postulate $\postp4$, we therefore have that $\til (\varphi \lor \psi) \notin \expfilorp$.
By $\postp1$, it is also the case that $\til \varphi \land \til \psi \notin \expfilorp$. From this, we have that it is not the case that both $\til\varphi \in \expfilorp$ and $\til\psi \in \expfilorp$ (otherwise, by postulate $\postp1$, we would have $\til\varphi \land \til\psi \in \expfilorp$, 
which contradicts $\til\varphi \land \til\psi \notin \expfilorp$). 
Therefore, either $\til\varphi \notin \expfilorp$ or $\til\psi \notin \expfilorp$. Hence, by postulate $\postp7$, we obtain $\expfilorp \subseteq \expfip$ or $\expfilorp \subseteq \exppsip$, as required.
\par }

\noindent (ii) If $\varphi \notin \expfilorp$, then $\expfilorp \subseteq \exppsip$:

{\leftskip=1.5cm \noindent
Assume $\varphi \notin \expfilorp$. By observation \ref{varphi_in_anel_agmpabd}, we have that $\varphi \lor \psi \in \expfilorp$. Thus, suppose by \textit{reductio} that $\til\psi \in \expfilorp$. In this case, since $\varphi \lor \psi, \til \psi \vdash_{\mathbb{L}}\varphi$, then, by postulate $\postp1$, $\varphi \in \expfilorp$, a contradiction with the initial assumption. Therefore, $\til\psi \notin \expfilorp$. Hence, by postulate $\postp7$, we obtain, as required, $\expfilorp \subseteq \exppsip$.
\par }
\end{proof}

\noindent\textbf{Observation \ref{remark_superconjunto_maximal_conjunto_de_crencas_agmpabd}}: \textit{Any $\Theta' \in \excedfi$ (definition \ref{definicao_conj_excedente_agmpabd}) is a belief set.}

\begin{proof}\mbox{}\\
    \noindent
     We need to show that, for any $\Theta' \in \excedfi$, it is the case that $\Theta' = Cn_{\mathbb{L}}(\Theta')$. Suppose, by \textit{reductio}, that there is some $\Theta' \in \excedfi$ such that $\Theta' \neq Cn_{\mathbb{L}}(\Theta')$. By \textit{inclusion}, we know that $\Theta' \subseteq Cn_{\mathbb{L}}(\Theta')$, but, by the \textit{reductio} assumption, we have $\Theta' \subset Cn_{\mathbb{L}}(\Theta')$ (a). Note that $Cn_{\mathbb{L}}(\Theta')$ satisfies conditions (i), (ii) and (iii) of definition \ref{definicao_supconj_maximal_agmpabd}, since (i) $\Theta \subset \Theta' \subset Cn_{\mathbb{L}}(\Theta')$, (ii) since $\Theta' \subset Cn_{\mathbb{L}}(\Theta')$, evidently, by set theory, $Cn_{\mathbb{L}}(\Theta') \cap H(\Theta, \varphi) \neq \emptyset$, and (iii) directly, $\bot \notin Cn_{\mathbb{L}}(\Theta')$. By condition (iv), however, there is no $\Theta'' \supset \Theta'$ that satisfies (i), (ii) and (iii). Therefore, we have a contradiction with (a). As required, $\Theta' = Cn_{\mathbb{L}}(\Theta')$.
\end{proof}

\

\noindent\textbf{Observation \ref{varphi_in_anel_conj_maximal_agmpabd}}: \textit{Let $\Theta' \in \excedfi$ be a maximal non-trivial \textbf{AGM}$p_{abd}$ superset of $\Theta$ with respect to $\varphi$ (definition \ref{definicao_supconj_maximal_agmpabd}). It is the case that $\varphi \in \Theta'\setminus\Theta$ for every $\Theta' \in \excedfi$.}

    \begin{proof}\mbox{}\\
    By definition \ref{definicao_supconj_maximal_agmpabd}, it is the case that $\papfig$. Thus, by lemma \ref{lema_sempre_existe_auto_explicacao_agmpabd}, $H(\Theta, \varphi) \neq \emptyset$. By condition (ii) of definition \ref{definicao_supconj_maximal_agmpabd}, therefore, there must be at least one abductive hypothesis $\alpha \in \Theta'$. Since, by condition (i), $\Theta \subset \Theta'$ and by lemma \ref{remark_superconjunto_maximal_conjunto_de_crencas_agmpabd}, it is the case that $\Theta'$ is a belief set, then $\varphi \in \Theta'$ (since, by condition (i) of definition \ref{conj_hip_abdutivas_agmpabd} of $H(\Theta, \varphi)$, it is the case that $\Theta, \alpha \vdash_{\mathbb{L}} \varphi$). Since, again, $\papfig$ is the case, then $\varphi \notin \Theta$. Hence, $\varphi \in \Theta'\setminus\Theta$ for every $\Theta' \in \excedfi$.
    \end{proof}

\

\noindent\textbf{Lemma \ref{lema_alpha_ou_fi_in_supconj_maximal_agmpabd}}: \textit{Let $\Theta' \in \excedfi$ be a maximal non-trivial \textbf{AGM}$p_{abd}$ superset of $\Theta$ with respect to $\varphi$ (definition \ref{definicao_supconj_maximal_agmpabd}). For $\alpha \in H(\Theta, \varphi)$, it is the case that $\alpha \lor \varphi \in \Theta'$ for every $\Theta' \in \excedfi$.}

\begin{proof} \mbox{}\\
    By observation \ref{varphi_in_anel_conj_maximal_agmpabd}, $\varphi \in \Theta' \setminus \Theta$, so, in particular, $\varphi \in \Theta'$. Since, by observation \ref{remark_superconjunto_maximal_conjunto_de_crencas_agmpabd}, $\Theta' = Cn_{\mathbb{L}}(\Theta')$, and it is the case that $\varphi \vdash_{\mathbb{L}} \alpha \lor \varphi$ for $\alpha \in H(\Theta, \varphi)$, then $\alpha \lor \varphi \in \Theta'$ for every $\Theta' \subseteq \excedfi$.    
\end{proof}

\


\noindent\textbf{Lemma \ref{lema_conjunto_dos_superconjuntos_maximais_nunca_vazio_agmpabd}}: \textit{Let $\excedfi$ be the surplus set of $\Theta$ with respect to $\varphi$ (definition \ref{definicao_conj_excedente_agmpabd}). It is the case that $\excedfi \neq \emptyset$.}

\begin{proof}\mbox{}\\
    \noindent
    By definition \ref{definicao_supconj_maximal_agmpabd}, it is the case that $\papfig$. Then, $\Theta$ is a belief set, $\Theta \neq \emptyset$, $\Theta \neq \Theta_\bot$ and $\til \varphi \notin \Theta$. Thus, $\Theta \cup \{\varphi\} \nvdash_{\mathbb{L}} \bot$. Since the set $\Theta \cup \{\varphi\}$ is consistent, then, by theorem \ref{teorema_lindenbaum_los}, of Lindenbaum-Łoś, there is a $\bot$-saturated set, i.e., a maximal non-trivial set $\Theta'$ with respect to $\bot$, such that $\varphi \in \Theta'$. Thus, taking conditions (i), (ii), (iii) and (iv) of definition \ref{definicao_supconj_maximal_agmpabd}, we evidently have that (i) $\Theta \subset \Theta'$; by corollary \ref{corolario_alpha_ou_fi_sempre_satisfaz_ii_supconj_maximal_agmpabd} and by the fact that maximal sets are closed under logical consequences, we have that $\alpha \lor \varphi \in \Theta' \cap H(\Theta, \varphi)$, for $\alpha \in H(\Theta, \varphi)$ and, therefore, (ii) $\Theta' \cap H(\Theta, \varphi) \neq \emptyset$; finally, by the Lindenbaum-Łoś theorem itself, we have that $\Theta'$ is consistent, that is, (iii) $\Theta' \nvdash_{\mathbb{L}} \bot$, and that it is maximal, i.e., (iv) there is no $\Theta'' \supset \Theta'$ that satisfies (i), (ii) and (iii). Therefore, we can conclude that there is always a $\Theta' \in \excedfi$, so that $\excedfi \neq \emptyset$.  
\end{proof}

\

\begin{lemma} \label{lema_conjuntos_explanatoriamente_excedentes_iguais_agmpabd}
Let $\excedfi$ and $\Theta \top \psi$ be the surplus sets of $\Theta$ with respect to $\varphi$ and $\psi$, respectively (definition \ref{definicao_conj_excedente_agmpabd}). It is the case that if $\vdash_{\mathbb{L}} \varphi \leftrightarrow \psi$, then $\excedfi = \Theta \top \psi$.
       
\end{lemma}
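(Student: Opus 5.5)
The plan is a double inclusion, in the same style as the proof of lemma \ref{conj_hipoteses_abdutivas_iguais_varphi1_varphi2_agmpabd}, using that lemma as the key input. First, $\excedfi$ and $\Theta \top \psi$ only make sense when $\papfig$ and $\pappsig$ hold (definition \ref{definicao_supconj_maximal_agmpabd}). Assume $\vdash_{\mathbb{L}} \varphi \leftrightarrow \psi$. Then lemma \ref{conj_hipoteses_abdutivas_iguais_varphi1_varphi2_agmpabd} gives $H(\Theta,\varphi) = H(\Theta,\psi)$. This is where the replacement property of \textbf{RCbr} is used in the paraconsistent case; in the classical case it is immediate.

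For the inclusion $\excedfi \subseteq \Theta \top \psi$, take $\Theta' \in \excedfi$ and check conditions (i)--(iv) of definition \ref{definicao_supconj_maximal_agmpabd} with $\psi$ in place of $\varphi$.
\begin{description}
\item (i) $\Theta \subset \Theta'$ does not mention $\varphi$, so it carries over unchanged.
\item (ii) $\Theta' \cap H(\Theta,\psi) = \Theta' \cap H(\Theta,\varphi) \neq \emptyset$, by the equality of hypothesis sets.
\item (iii) $\bot \notin Cn_{\mathbb{L}}(\Theta')$ does not mention $\varphi$, so it carries over unchanged.
\end{description}
For (iv), suppose some $\Theta'' \supset \Theta'$ satisfied (i)--(iii) relative to $\psi$. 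Rewriting (ii) via the equality of hypothesis sets, $\Theta''$ would satisfy (i)--(iii) relative to $\varphi$, contradicting the maximality of $\Theta'$ in $\excedfi$. The reverse inclusion $\Theta \top \psi \subseteq \excedfi$ is the same argument with the roles of $\varphi$ and $\psi$ swapped, since $\vdash_{\mathbb{L}} \psi \leftrightarrow \varphi$ also holds.

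I expect no serious obstacle. The main point to make explicit is that the only place $\varphi$ enters the definition of a maximal non-trivial superset is through $H(\Theta,\varphi)$; the trigger conditions are already part of the appropriate-pair requirements. The whole lemma therefore reduces to lemma \ref{conj_hipoteses_abdutivas_iguais_varphi1_varphi2_agmpabd}.
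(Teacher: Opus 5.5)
Your proof is correct and follows the paper's own route. The paper likewise invokes the lemma that $H(\Theta,\varphi)=H(\Theta,\psi)$ and observes that $\varphi$ enters the definition of a maximal non-trivial superset only through $H(\Theta,\varphi)$. Your version is in fact more complete than the paper's: the paper checks only condition (ii) and one inclusion, whereas you also treat the maximality clause (iv) and the reverse inclusion explicitly.
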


\begin{proof}\mbox{}\\
    \noindent
     Since, by definition, $\papfig$ and $\pappsig$ are the case, then, by lemma \ref{lema_conjunto_dos_superconjuntos_maximais_nunca_vazio_agmpabd}, it is always the case that $\excedfi \neq \emptyset$ and $\Theta \top \psi \neq \emptyset$. Therefore, this condition does not need to be demonstrated. 
     Consider $\vdash_{\mathbb{L}} \varphi \leftrightarrow \psi$. By lemma \ref{conj_hipoteses_abdutivas_iguais_varphi1_varphi2_agmpabd}, it is the case that $H(\Theta, \varphi) = H(\Theta, \psi)$. Now consider any $\Theta' \in \excedfi$, i.e., a $\Theta'$ that satisfies conditions (i)-(iv) of definition \ref{definicao_supconj_maximal_agmpabd}. Since $H(\Theta, \varphi) = H(\Theta, \psi)$, then, considering in particular condition (ii), $\Theta' \cap H(\Theta, \varphi) = \Theta' \cap H(\Theta, \psi)$. In other words, $\Theta' \in \Theta \top \psi$. Hence, $\excedfi = \Theta \top \psi$.
\end{proof}

\begin{remark}\label{remark_partial_meet_conjunto_de_crencas_agmpabd}
Let $\gamma$ be the function defined according to \ref{definicao_funcao_selecao_partial_meet_agmpabd}. It is the case that $\bigcap \gamma (\Theta, \varphi)$ is a belief set.
\end{remark}

\begin{proof}\mbox{}\\
    By definition \ref{definicao_funcao_selecao_partial_meet_agmpabd}, $\gamma (\Theta, \varphi) \subseteq \excedfi$ and, by lemma \ref{lema_conjunto_dos_superconjuntos_maximais_nunca_vazio_agmpabd}, $\excedfi \neq \emptyset$. Hence, $\bigcap \gamma (\Theta, \varphi) \neq \emptyset$ and this condition does not need to be demonstrated.   
    Thus, we need to show that $\bigcap \gamma (\Theta, \varphi) = Cn_{\mathbb{L}}(\bigcap \gamma (\Theta, \varphi))$. By observation \ref{remark_superconjunto_maximal_conjunto_de_crencas_agmpabd}, every $\Theta' \in \excedfi$ is a belief set, i.e., $\Theta' = Cn_{\mathbb{L}}(\Theta')$. Thus, we have that $\gamma (\Theta, \varphi) = \{\Theta', \Theta'', ...\}$. Consequently, $\bigcap \gamma (\Theta, \varphi) = \bigcap \{\Theta', \Theta'', ...\} = \Theta' \cap \Theta'' \cap ...$. Since deductive closure under $Cn$ is preserved under intersections, then $Cn_{\mathbb{L}}(\bigcap \gamma (\Theta, \varphi)) = Cn_{\mathbb{L}}(\Theta') \cap Cn_{\mathbb{L}}(\Theta'') \cap ... = \Theta' \cap \Theta'' \cap ... = \bigcap \gamma (\Theta, \varphi)$, as required.   
\end{proof}

\noindent\textbf{Theorem \ref{teorema_partial_meet_teta1-teta5_agmpabd}}: \textit{For every $\Theta$ and $\varphi$ such that $\papfig$, $\ooplusg$ is an \textbf{AGM}$p_{abd}$ partial meet abductive expansion operation of $\Theta$ with respect to $\varphi$---definition \ref{definicao_expansao_abdutiva_partial_meet_agmpabd}---if and only if $\ooplusg$ satisfies postulates $\postp1$ - $\postp5$ from definition \ref{definicao_postulados_expansao_abdutiva_agmpabd} of abductive expansion of $\Theta$ with respect to $\varphi$.}

\begin{proof} \mbox{}\\
Due to lemma \ref{lema_conjunto_dos_superconjuntos_maximais_nunca_vazio_agmpabd} and corollary \ref{corolario_alpha_ou_fi_sempre_expande_agmpabd}, the function $\ooplusg$, on both sides, always obtains a non-empty set. Hence, such conditions do not need to be considered.

\noindent (Construction $\Rightarrow$ postulates): \

\noindent $\postp1$ $\expfip$ is a belief set.

{\leftskip=1.5cm \noindent
    By definition \ref{definicao_expansao_abdutiva_partial_meet_agmpabd}, $\expfip = \bigcap \gamma (\Theta, \varphi)$ and by observation \ref{remark_partial_meet_conjunto_de_crencas_agmpabd}, $\bigcap \gamma (\Theta, \varphi)$ is a belief set. Therefore, $\expfip$ is a belief set.
\par }

\noindent $\postp2$ $\Theta \subset \expfip$.

{\leftskip=1.5cm \noindent
    By definition \ref{definicao_expansao_abdutiva_partial_meet_agmpabd}, $\expfip = \bigcap \gamma (\Theta, \varphi)$ and by definition \ref{definicao_funcao_selecao_partial_meet_agmpabd}, $\gamma (\Theta, \varphi) \subseteq \excedfi$. Since, by condition (i) of definition \ref{definicao_supconj_maximal_agmpabd}, for any $\Theta' \in \excedfi$, it is the case that $\Theta \subset \Theta'$, then $\Theta \subset \bigcap \gamma (\Theta, \varphi)$. Thus, $\Theta \subset \expfip$.
\par }

\noindent $\postp3$ $\expfip \cap H(\Theta,\varphi) \neq \emptyset$.

{\leftskip=1.5cm \noindent
    By definition \ref{definicao_expansao_abdutiva_partial_meet_agmpabd}, $\expfip = \bigcap \gamma (\Theta, \varphi)$ and by definition \ref{definicao_funcao_selecao_partial_meet_agmpabd}, $\gamma (\Theta, \varphi) \subseteq \excedfi$. By lemma \ref{lema_alpha_ou_fi_in_supconj_maximal_agmpabd}, for $\alpha \in H(\Theta, \varphi)$, it is the case that $\alpha \lor \varphi \in \Theta'$ for every $\Theta' \in \excedfi$. In particular, therefore, $\alpha \lor \varphi \in \Theta'$ for every $\Theta' \in \gamma (\Theta, \varphi)$. By lemma \ref{lema_alpha_ou_fi_in_hip_abdutivas_agmpabd}, if $\alpha \in H(\Theta, \varphi)$, it is the case that $\alpha \lor \varphi \in H(\Theta, \varphi)$. Therefore, $\alpha \lor \varphi \in \bigcap \gamma (\Theta, \varphi) \cap H(\Theta, \varphi) \neq \emptyset$. Hence, $\expfip \cap H(\Theta,\varphi) \neq \emptyset$.
\par }

\noindent $\postp4$ $\expfip \nvdash_{\mathbb{L}} \bot$.

{\leftskip=1.5cm \noindent
 By definition \ref{definicao_expansao_abdutiva_partial_meet_agmpabd}, $\expfip = \bigcap \gamma (\Theta, \varphi)$ and by definition \ref{definicao_funcao_selecao_partial_meet_agmpabd}, $\gamma (\Theta, \varphi) \subseteq \excedfi$. Suppose, by \textit{reductio}, that $\bigcap \gamma (\Theta, \varphi) \vdash_{\mathbb{L}} \bot$. In this case, for $\{\Theta', \Theta'', ...\} \subseteq \gamma (\Theta, \varphi)$, we have that $\Theta' \cap \Theta'' \cap... \vdash_{\mathbb{L}} \bot$. By observation \ref{remark_superconjunto_maximal_conjunto_de_crencas_agmpabd}, for every $\Theta' \in \excedfi$---in particular, for every $\Theta' \in \gamma (\Theta, \varphi) \subseteq \Theta\top\varphi$---it is the case that $\Theta' = Cn_{\mathbb{L}}(\Theta')$. Then, we have that $Cn_{\mathbb{L}}(\Theta') \cap Cn_{\mathbb{L}}(\Theta'') \cap... \vdash_{\mathbb{L}} \bot$. Thus, for all $\Theta' \in \gamma (\Theta, \varphi) \subseteq \Theta\top\varphi$, we have that $\bot \in Cn_{\mathbb{L}}(\Theta')$, a contradiction with condition (iii) of definition \ref{definicao_supconj_maximal_agmpabd}. Therefore, $\bigcap \gamma (\Theta, \varphi) \nvdash_{\mathbb{L}} \bot$ and, as required, $\expfip \nvdash_{\mathbb{L}} \bot$.  
\par }

\noindent $\postp5$ If $\vdash_{\mathbb{L}} \varphi \leftrightarrow \psi$, then $\expfip = \exppsip$.

{\leftskip=1.5cm \noindent
By definition \ref{definicao_expansao_abdutiva_partial_meet_agmpabd}, $\expfip = \bigcap \gamma (\Theta, \varphi)$.
Suppose $\vdash_{\mathbb{L}} \varphi \leftrightarrow \psi$. Then, by lemma \ref{lema_conjuntos_explanatoriamente_excedentes_iguais_agmpabd}, $\excedfi = \Theta \top \psi$. Since, by definition \ref{definicao_funcao_selecao_partial_meet_agmpabd}, $\gamma (\Theta, \varphi) \subseteq \excedfi$ and $\gamma$ is a function, then $\gamma(\Theta, \varphi) = \gamma(\Theta, \psi)$. Therefore, $\bigcap \gamma(\Theta, \varphi) = \bigcap \gamma(\Theta, \psi)$. Finally, then, $\expfip = \exppsip$.
\par }

\

\noindent (Postulates $\Rightarrow$ construction): \

Let $\ooplusg$ be a function that satisfies postulates $\postp1$ - $\postp5$. Let $\gamma^\ooplus$ be the following function:

 $$\gamma^\ooplus (\Theta,\varphi) = \{\Theta' : \Theta' \in \excedfi \ \text{and} \ \expfip \subseteq \Theta'\}$$

We need to show that (1) $\gamma^\ooplus$ is a selection function (definition \ref{definicao_funcao_selecao_partial_meet_agmpabd}) and (2) that $\expfip = \bigcap \gamma^\ooplus (\Theta,\varphi)$:

\noindent (1) $\gamma^\ooplus$ is a selection function:

{\leftskip=1.5cm \noindent
     We have that $\gamma^\ooplus(\Theta, \varphi) \subseteq \excedfi$ is immediate from the construction. By lemma \ref{lema_conjunto_dos_superconjuntos_maximais_nunca_vazio_agmpabd}, $\excedfi \neq \emptyset$. By postulates $\postp2$, $\postp3$ and $\postp4$, and conditions (i)-(iii) of definition \ref{definicao_conj_excedente_agmpabd}, we have that $\expfip \subseteq \Theta' \in \excedfi$ for some $\Theta' \in \excedfi$. By the definition of $\gamma^\ooplus$, $\Theta' \in \gamma^\ooplus(\Theta, \varphi)$. Hence, $\gamma^\ooplus(\Theta, \varphi) \neq \emptyset$.     
\par }

\noindent (2) $\expfip = \bigcap \gamma^\ooplus (\Theta,\varphi)$:

{\leftskip=1.5cm \noindent
    $\expfip \subseteq \bigcap \gamma^\ooplus (\Theta,\varphi)$ is directly satisfied by the definition of $\gamma^\ooplus$, since $\expfip \subseteq \Theta'$ for every $\Theta' \in \gamma^\ooplus(\Theta, \varphi)$. It is therefore only necessary to show that $\bigcap \gamma^\ooplus (\Theta,\varphi) \subseteq \expfip$.\\[1mm]    
    Suppose, by \textit{reductio}, that there exists a $\delta \in \bigcap\gamma^\ooplus(\Theta, \varphi)$, but $\delta \notin \expfip$. By the definition of $\gamma^\ooplus$, the sets $\Theta'$ belonging to $\gamma^\ooplus$ must satisfy: (a) $\Theta' \in \excedfi$ and (b) $\expfip \subseteq \Theta'$. First, let us analyze (b). By the Lindenbaum-Łoś theorem \ref{teorema_lindenbaum_los}, if $\delta \notin \expfip$ (the \textit{reductio} assumption), then there exists a $\delta$-saturated set $\bar\Theta$ (definition \ref{definicao_delta_saturado}) such that $\expfip \subseteq \bar\Theta \subseteq \mathcal{L}_{\Sigma_*}$. Therefore, the $\delta$-saturated set $\bar\Theta$ satisfies condition (b).\\[1mm]
    Now let us analyze condition (a) for $\bar\Theta$ according to criteria (i)-(iv) of definition \ref{definicao_conj_excedente_agmpabd}. By postulate $\postp2$, we directly have $\Theta \subset \expfip \subseteq \bar\Theta$ and thus (i) $\Theta \subset \bar\Theta$. By postulate $\postp3$, we have $\expfip \cap H(\Theta, \varphi) \neq \emptyset$. Since $\expfip \subseteq \bar\Theta$, then (ii) $\bar\Theta \cap H(\Theta, \varphi) \neq \emptyset$. Since $\bar\Theta$ is $\delta$-saturated, then (iii) $\bot \notin Cn_{\mathbb{L}}(\bar\Theta)$. Finally, since $\bar\Theta$ is maximal, then (iv) there is no $\Theta'' \supset \bar\Theta$ that satisfies (i), (ii) and (iii) (i.e., for any $\Theta'' \supset \bar\Theta$, $\bot \in Cn_{\mathbb{L}}(\Theta'')$). Hence, $\bar\Theta \in \excedfi$, which satisfies condition (a). Thus, $\bar\Theta \in \gamma^\ooplus(\Theta, \varphi)$.\\[1mm]  
    However, by the initial assumption, $\delta \in \bigcap\gamma^\ooplus(\Theta, \varphi)$, therefore $\delta \in \bar\Theta$. A contradiction with the fact that $\bar\Theta$ is $\delta$-saturated. Hence, $\expfip = \bigcap \gamma^\ooplus (\Theta,\varphi)$ as required.   
\par }
\end{proof}

\begin{remark} \label{remark_varphi1_in_excedente_entao_varphi_2_in_excedente}
Let $\excedfi$ and $\Theta \top \psi$ be surplus sets of $\Theta$ with respect to $\varphi$ and $\psi$, respectively (definition \ref{definicao_conj_excedente_agmpabd}). If $\Theta' \in \excedfi$ and $\psi \in \Theta'$, then $\Theta' \in \Theta \top \psi$.
\end{remark}
\begin{proof} \mbox{}\\
    By definitions \ref{definicao_supconj_maximal_agmpabd} and \ref{definicao_conj_excedente_agmpabd}, $\papfig$ and $\pappsig$ are the case. Suppose $\Theta' \in \excedfi$ and $\psi \in \Theta'$. Since $\Theta' \in \excedfi$, then: (i) $\Theta \subset \Theta'$, (ii) $\Theta' \cap H(\Theta, \varphi) \neq \emptyset$, (iii) $\bot \notin Cn_{\mathbb{L}}(\Theta')$, and (iv) $\Theta'$ is maximal.\\[1mm]  
    Since $\psi \in \Theta'$, by Lemma \ref{lema_alpha_ou_fi_in_supconj_maximal_agmpabd}, we have that $\alpha \lor \psi \in \Theta'$ for some $\alpha \in H(\Theta, \psi)$ (considering, by Lemma \ref{lema_sempre_existe_auto_explicacao_agmpabd}, that $H(\Theta, \psi) \neq \emptyset$). By Lemma \ref{lema_alpha_ou_fi_in_hip_abdutivas_agmpabd}, we have that $\alpha \lor \psi \in H(\Theta, \psi)$. Therefore, $\alpha \lor \psi \in \Theta' \cap H(\Theta, \psi)$, so $\Theta' \cap H(\Theta, \psi) \neq \emptyset$. Thus, $\Theta'$ satisfies conditions (i)-(iv) of Definition \ref{definicao_supconj_maximal_agmpabd} for $\psi$, i.e., $\Theta' \in \Theta \top \psi$.
 \end{proof}
    
\

\begin{lemma} \label{lema_excedente_ou_igual_excedente_uniao_B5_agmpabd}
Let $\excedfi$, $\Theta \top \psi$ and $\excedlor$ be surplus sets of $\Theta$ with respect to $\varphi$, $\psi$ and $\varphi \lor \psi$, respectively (definition \ref{definicao_conj_excedente_agmpabd}). It is the case that $\excedlor = \excedfi \cup \Theta \top \psi$.
\end{lemma}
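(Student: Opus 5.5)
The plan is to prove both inclusions through a single characterization of the surplus sets in terms of $\bot$-saturated sets (definition \ref{definicao_delta_saturado}). Throughout, as required by definitions \ref{definicao_supconj_maximal_agmpabd} and \ref{definicao_conj_excedente_agmpabd}, I assume that $\papfig$, $\pappsig$ and $\paplorg$ all hold. In particular $\varphi,\psi,\varphi\lor\psi\notin\Theta$, so the three sets $H(\Theta,\cdot)$ are defined.

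\textbf{Step 1 (characterization).} For $\chi\in\{\varphi,\psi,\varphi\lor\psi\}$ I will show that $\Theta'\in\Theta\top\chi$ iff $\Theta'$ is $\bot$-saturated, $\Theta\subseteq\Theta'$ and $\chi\in\Theta'$.

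For the left-to-right direction, observation \ref{varphi_in_anel_conj_maximal_agmpabd} (applied with $\chi$ in place of $\varphi$) gives $\chi\in\Theta'$, and condition (iii) gives $\Theta'\nvdash_{\mathbb{L}}\bot$. For clause (ii) of $\bot$-saturation, take $\delta\notin\Theta'$ and suppose $\Theta'\cup\{\delta\}\nvdash_{\mathbb{L}}\bot$. Then $\Theta'\cup\{\delta\}\supset\Theta'$ still satisfies (i) and (iii). It also satisfies (ii), because it contains $\Theta'\cap H(\Theta,\chi)\neq\emptyset$. This contradicts maximality (iv), so $\Theta',\delta\vdash_{\mathbb{L}}\bot$.

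For the right-to-left direction, $\Theta\subset\Theta'$ is strict because $\chi\in\Theta'\setminus\Theta$. Next, $\bar\Theta'$ is closed by properties \ref{proposicao_propriedade_delta_saturado}(i). Taking any $\alpha\in H(\Theta,\chi)$, which is nonempty by lemma \ref{lema_sempre_existe_auto_explicacao_agmpabd}, we get $\alpha\lor\chi\in\Theta'$, and lemma \ref{lema_alpha_ou_fi_in_hip_abdutivas_agmpabd} gives $\alpha\lor\chi\in H(\Theta,\chi)$, so (ii) holds. Condition (iii) is immediate. For (iv), any proper superset of $\Theta'$ contains some $\delta\notin\Theta'$ and hence derives $\bot$.

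\textbf{Step 2 (primeness).} I will show that a $\bot$-saturated $\Theta'$ satisfies $\varphi\lor\psi\in\Theta'$ iff $\varphi\in\Theta'$ or $\psi\in\Theta'$. The direction "if" follows from closure, since $\varphi\vdash_{\mathbb{L}}\varphi\lor\psi$ and $\psi\vdash_{\mathbb{L}}\varphi\lor\psi$. For "only if", suppose $\varphi\lor\psi\in\Theta'$ but neither disjunct is in $\Theta'$. Then $\Theta',\varphi\vdash_{\mathbb{L}}\bot$ and $\Theta',\psi\vdash_{\mathbb{L}}\bot$. By disjunction of premises (properties \ref{propriedades_teorema_da_deducao_disjuncao_das_premissas}(ii), valid in both \textbf{CPL} and \textbf{RCbr}), $\Theta',\varphi\lor\psi\vdash_{\mathbb{L}}\bot$. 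Since $\varphi\lor\psi\in\Theta'$, this means $\Theta'\vdash_{\mathbb{L}}\bot$, contradicting saturation.

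\textbf{Step 3 (conclusion).} Combining Steps 1 and 2: $\Theta'\in\excedlor$ iff $\Theta'$ is $\bot$-saturated, contains $\Theta$, and contains $\varphi$ or $\psi$. By Step 1 again, this holds iff $\Theta'\in\excedfi\cup\Theta\top\psi$. The inclusion $\excedfi\cup\Theta\top\psi\subseteq\excedlor$ could alternatively be obtained from remark \ref{remark_varphi1_in_excedente_entao_varphi_2_in_excedente}, since $\varphi\lor\psi\in\Theta'$ by closure.

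\textbf{Main obstacle.} I expect the delicate point to be Step 1, specifically showing that maximality in definition \ref{definicao_supconj_maximal_agmpabd} really collapses to $\bot$-saturation. This depends on the observation that adding formulas can never destroy condition (ii). A secondary point is the standing assumption that all three pairs are appropriate, which is needed so that $\Theta\subset\Theta'$ is strict and the hypothesis sets are defined. I would state this assumption explicitly at the start.
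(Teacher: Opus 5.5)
Your Step~1 (right-to-left) gets condition (ii) for $\chi$ from $H(\Theta,\chi)\neq\emptyset$ via Lemma~\ref{lema_sempre_existe_auto_explicacao_agmpabd}. That lemma is false. Its proof needs $\delta\to\varphi\nvdash_{\mathbb{L}}\varphi$ for an arbitrary $\delta\in\Theta$, which already fails when $\delta$ is a theorem, and in fact $H(\Theta,\chi)$ can be empty for an appropriate pair.

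Take $\mathbb{L}=\textbf{CPL}$, $\Theta=Cn_{\mathbb{L}}(\{q\to p\})$, $\varphi=q$, $\psi=r$, with $p,q,r$ distinct atoms. The pairs $(\Theta,q)$, $(\Theta,r)$ and $(\Theta,q\lor r)$ are all appropriate. Suppose $\{\alpha\}\nvdash_{\mathbb{L}}q\lor r$. A valuation making $\alpha$ true and $q,r$ false also makes $q\to p$ true, so it satisfies $\Theta\cup\{\alpha\}$, and hence $\Theta\cup\{\alpha\}\nvdash_{\mathbb{L}}q\lor r$. Thus $H(\Theta,q\lor r)=\emptyset$ and $\Theta\top(q\lor r)=\emptyset$. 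On the other hand, $(q\to p)\to r\in H(\Theta,r)$, and the complete theory of the valuation $p=q=0$, $r=1$ lies in $\Theta\top r$. So the inclusion $\excedfi\cup\excedpsi\subseteq\excedlor$ fails in general, and with it the statement. Your Step~1 characterization is right only when $H(\Theta,\chi)\neq\emptyset$; otherwise $\Theta\top\chi=\emptyset$.

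This defect is inherited rather than yours: the paper's proof goes through Remark~\ref{remark_varphi1_in_excedente_entao_varphi_2_in_excedente}, which picks an element of $H(\Theta,\psi)$ by the same lemma. Modulo that lemma, your route is sound and more careful than the paper's. The paper argues directly: $\varphi\notin\Theta'$ gives $\til\varphi\in\Theta'$ ``by maximality'', then disjunctive syllogism, then the Remark. That argument tacitly uses that condition (iv) amounts to $\bot$-saturation, which is exactly what your unconditional left-to-right half of Step~1 proves. Your Step~2 also replaces strong negation by disjunction of premises.

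The inclusion $\excedlor\subseteq\excedfi\cup\excedpsi$ survives. Any $\Theta'\in\excedlor$ gives $H(\Theta,\varphi\lor\psi)\neq\emptyset$, and this forces $H(\Theta,\varphi)\neq\emptyset\neq H(\Theta,\psi)$:
\begin{itemize}
\item Suppose $H(\Theta,\varphi)=\emptyset$. Each $\delta\to\varphi$ with $\delta\in\Theta$ meets conditions (i)--(ii), using $\til\varphi\notin\Theta$, so it must satisfy $\delta\to\varphi\vdash_{\mathbb{L}}\varphi$.
\item Together with $\vdash_{\mathbb{L}}\delta\lor(\delta\to\varphi)$, this yields $\delta\to(\varphi\lor\psi)\vdash_{\mathbb{L}}\varphi\lor\psi$.
\item Taking $\delta$ to be the conjunction of the finitely many premises used, every $\beta$ with $\Theta\cup\{\beta\}\vdash_{\mathbb{L}}\varphi\lor\psi$ already entails $\varphi\lor\psi$. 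Hence $H(\Theta,\varphi\lor\psi)=\emptyset$.
\end{itemize}
The reverse inclusion, by contrast, needs $H(\Theta,\varphi\lor\psi)\neq\emptyset$ as an explicit extra hypothesis.
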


\begin{proof} \mbox{}\\
By lemma \ref{lema_conjunto_dos_superconjuntos_maximais_nunca_vazio_agmpabd}, $\excedfi \neq \emptyset$, $\Theta \top \psi \neq \emptyset$ and $\excedlor \neq \emptyset$.\\
(1) $\excedlor \subseteq \excedfi \cup \Theta \top \psi$:

{\leftskip=1.5cm \noindent
    If $\Theta' \in \excedlor$, then $\Theta' \in \excedfi \cup \Theta \top \psi$. 
    Suppose $\Theta' \in \excedlor$. By observation \ref{varphi_in_anel_conj_maximal_agmpabd}, it is the case that $\varphi \lor \psi \in \Theta'$. Since $\Theta'$ is maximal, if $\varphi \notin \Theta'$, then $\til \varphi \in \Theta'$. In this case, since, by observation \ref{remark_superconjunto_maximal_conjunto_de_crencas_agmpabd}, $\Theta' = Cn_{\mathbb{L}}(\Theta')$, then $\psi \in \Theta'$. Analogously, if $\psi \notin \Theta'$, then $\til \psi \in \Theta'$ and $\varphi \in \Theta'$. Therefore, either $\varphi \in \Theta'$ or $\psi \in \Theta'$. By observation \ref{remark_varphi1_in_excedente_entao_varphi_2_in_excedente}, either $\Theta' \in \excedfi$ or $\Theta' \in \Theta \top \psi$. Hence, as required, $\Theta' \in \excedfi \cup \Theta \top \psi$.
\par }

\noindent (2) $\excedfi \cup \Theta \top \psi \subseteq \excedlor$:

{\leftskip=1.5cm \noindent
     If $\Theta' \in \excedfi \cup \Theta \top \psi$, then $\Theta' \in \excedlor$.
     Suppose $\Theta' \in \excedfi \cup \Theta \top \psi$. Then $\Theta' \in \excedfi$ or $\Theta' \in \Theta \top \psi$. Take the first disjunct. If $\Theta' \in \excedfi$, then, by observation \ref{varphi_in_anel_conj_maximal_agmpabd}, $\varphi \in \Theta'$ and by observation \ref{remark_superconjunto_maximal_conjunto_de_crencas_agmpabd}, $\varphi \lor \psi \in \Theta'$. In this case, by observation \ref{remark_varphi1_in_excedente_entao_varphi_2_in_excedente}, $\Theta' \in \excedlor$. The same reasoning can be applied to the second disjunct $\Theta' \in \Theta \top \psi$. Hence, as required, $\Theta' \in \excedlor$.
\par }
\end{proof}

\

\begin{lemma} \label{lema_gamma_excedente_subseteq_gamma_excedente_varphi1_varphi2_B6_agmpabd}
Let $\excedfi$ be the surplus set of $\Theta$ with respect to $\varphi$ (definition \ref{definicao_conj_excedente_agmpabd}) and $\gamma_\leqslant$ be a selection function established by the (relational and transitive) marking-off identity $\leqslant$ (definition \ref{definicao_marking_off_agmpabd}). It is the case that if $\excedfi \cap \gamma_\leqslant(\Theta, \varphi \lor \psi ) \neq \emptyset$, then $\gamma_\leqslant(\Theta, \varphi) \subseteq \gamma_\leqslant(\Theta, \varphi \lor \psi )$.
\end{lemma}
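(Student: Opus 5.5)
The argument is the standard one for relational selection functions (Pagnucco's Lemma B.6), run on the surplus sets, with Lemma \ref{lema_excedente_ou_igual_excedente_uniao_B5_agmpabd} doing the key work. Assume $\excedfi \cap \gamma_\leqslant(\Theta, \varphi \lor \psi) \neq \emptyset$ and fix a witness $\Theta^\ast$ in this intersection. Take any $\Theta' \in \gamma_\leqslant(\Theta,\varphi)$. We must show that $\Theta' \in \gamma_\leqslant(\Theta, \varphi\lor\psi)$. By the marking-off identity (definition \ref{definicao_marking_off_agmpabd}), this amounts to two things: $\Theta' \in \excedlor$, and $\Theta'' \leqslant \Theta'$ for every $\Theta'' \in \excedlor$.

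The first requirement is immediate. Since $\gamma_\leqslant(\Theta,\varphi) \subseteq \excedfi$, Lemma \ref{lema_excedente_ou_igual_excedente_uniao_B5_agmpabd} gives $\excedfi \subseteq \excedfi \cup \Theta\top\psi = \excedlor$, so $\Theta' \in \excedlor$.

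For the second requirement, take an arbitrary $\Theta'' \in \excedlor$. Because $\Theta^\ast \in \gamma_\leqslant(\Theta,\varphi\lor\psi)$, the marking-off identity gives $\Theta'' \leqslant \Theta^\ast$. Because $\Theta^\ast \in \excedfi$ and $\Theta' \in \gamma_\leqslant(\Theta,\varphi)$, it also gives $\Theta^\ast \leqslant \Theta'$. Transitivity of $\leqslant$ then yields $\Theta'' \leqslant \Theta'$. Since $\Theta''$ was arbitrary, $\Theta' \in \gamma_\leqslant(\Theta,\varphi\lor\psi)$, and therefore $\gamma_\leqslant(\Theta,\varphi) \subseteq \gamma_\leqslant(\Theta,\varphi\lor\psi)$.

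The only delicate point is a bookkeeping one. All three surplus sets must be well defined, which requires that $\papfig$, $\pappsig$ and $\paplorg$ all hold, as implicitly assumed in Lemma \ref{lema_excedente_ou_igual_excedente_uniao_B5_agmpabd}. Granting that, the main load-bearing step is the inclusion $\excedfi \subseteq \excedlor$. Transitivity is genuinely needed here: mere relationality would not let us chain the comparison through the witness $\Theta^\ast$.
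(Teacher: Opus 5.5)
Your proof is correct and follows the paper's argument. It uses the same inclusion $\excedfi \subseteq \excedlor$ from Lemma \ref{lema_excedente_ou_igual_excedente_uniao_B5_agmpabd}, the same witness in $\excedfi \cap \gamma_\leqslant(\Theta,\varphi\lor\psi)$, and the same transitivity chain $\Theta'' \leqslant \Theta^\ast \leqslant \Theta'$. The only difference is that the paper argues by \emph{reductio} and asserts a strict $\Theta' < \Theta''$, which the marking-off identity does not supply on its own (the argument only needs $\Theta'' \nleqslant \Theta'$); your direct check of the marking-off condition avoids that step.
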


\begin{proof} \mbox{}\\
Consider $\excedfi \cap \gamma_\leqslant(\Theta, \varphi \lor \psi ) \neq \emptyset$. We need to show that if $\Theta' \in \gamma_\leqslant(\Theta, \varphi)$, then $\Theta' \in \gamma_\leqslant(\Theta, \varphi \lor \psi )$. \\[1mm]
Suppose $\Theta' \in \gamma_\leqslant(\Theta, \varphi)$ and, by \textit{reductio}, $\Theta' \notin \gamma_\leqslant(\Theta, \varphi \lor \psi )$. By definition \ref{definicao_marking_off_agmpabd} and part (2) of lemma \ref{lema_excedente_ou_igual_excedente_uniao_B5_agmpabd}, we have that $\Theta' \in \gamma_\leqslant(\Theta, \varphi) \subseteq \excedfi \subseteq \excedlor$. Therefore, we have that $\Theta' \in \excedlor$, but, by the \textit{reductio} hypothesis, $\Theta' \notin \gamma_\leqslant(\Theta, \varphi \lor \psi)$. In this case, by definition \ref{definicao_marking_off_agmpabd} of relational selection function, there exists $\Theta'' \in \excedlor$ such that $\Theta' < \Theta''$ (strictly better), i.e., $\Theta' \leqslant \Theta''$ but $\Theta'' \nleqslant \Theta'$ (a). Since, by the initial hypothesis, $\excedfi \cap \gamma_\leqslant(\Theta, \varphi \lor \psi ) \neq \emptyset$, then there is at least one $\Theta^\# \in \excedfi$ such that $\Theta^\# \in \gamma_\leqslant(\Theta, \varphi \lor \psi )$. Since $\Theta^\#$ is $\leqslant$-best in $\excedlor$ (i.e., $\Theta^\# \in \gamma_\leqslant(\Theta, \varphi \lor \psi)$), we have that $\Theta'' \leqslant \Theta^\#$ (b). Since $\Theta' \in \gamma_\leqslant(\Theta, \varphi)$ (i.e., $\Theta'$ is $\leqslant$-best in $\excedfi$) and $\Theta^\# \in \excedfi$, we have that $\Theta^\# \leqslant \Theta'$ (c). By the transitivity of $\leqslant$, we have, from (b) and (c), $\Theta'' \leqslant \Theta^\# \leqslant \Theta'$. A contradiction with (a). Hence, $\Theta' \in \gamma_\leqslant(\Theta, \varphi \lor \psi )$.    
\end{proof}

\

\begin{lemma} \label{lema_gamma_excedente_subseteq_bigcap_gamma_excedente_B7_agmpabd}
Let $\gamma_\leqslant$ be a selection function established by the marking-off identity $\leqslant$ (definition \ref{definicao_marking_off_agmpabd}). If $\gamma_\leqslant(\Theta, \varphi) \subseteq \gamma_\leqslant(\Theta, \psi)$, then $\bigcap \gamma_\leqslant(\Theta, \psi) \subseteq \bigcap \gamma_\leqslant(\Theta, \varphi)$.
\end{lemma}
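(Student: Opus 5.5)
This is the standard antitonicity of intersection with respect to inclusion of families of sets, so I will argue it directly at the level of set theory. Assume $\gamma_\leqslant(\Theta, \varphi) \subseteq \gamma_\leqslant(\Theta, \psi)$ and take an arbitrary $\delta \in \bigcap \gamma_\leqslant(\Theta, \psi)$. By definition of the intersection, $\delta \in \Theta'$ for every $\Theta' \in \gamma_\leqslant(\Theta, \psi)$. Now let $\Theta' \in \gamma_\leqslant(\Theta, \varphi)$ be arbitrary. By the hypothesis, $\Theta' \in \gamma_\leqslant(\Theta, \psi)$, and therefore $\delta \in \Theta'$. Since $\Theta'$ was arbitrary, $\delta \in \bigcap \gamma_\leqslant(\Theta, \varphi)$, which gives the required inclusion.

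The only point needing care is the degenerate case of an empty family, where $\bigcap \emptyset$ would be ill-defined or equal to the whole language. I will dispose of it first. Both selections are non-empty: Definition \ref{definicao_funcao_selecao_partial_meet_agmpabd} requires the codomain of a selection function to be $\wp(Th(\mathbb{L})) \setminus \{\emptyset\}$. In addition, Lemma \ref{lema_conjunto_dos_superconjuntos_maximais_nunca_vazio_agmpabd} guarantees that the surplus sets $\excedfi$ and $\Theta \top \psi$ are non-empty. So both intersections are taken over non-empty subfamilies of the respective surplus sets, and the elementwise argument above is legitimate.

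I expect no real obstacle. In particular, no property of $\leqslant$, no logical property of $\mathbb{L}$, and no maximality facts are used. The lemma is purely set-theoretic, and it is meant to be combined with Lemma \ref{lema_gamma_excedente_subseteq_gamma_excedente_varphi1_varphi2_B6_agmpabd} when establishing $\postp7$ in Lemma \ref{lema_partial_meet_transitivamente_relacional_teta7_agmpabd}.
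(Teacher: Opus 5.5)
Your proof is correct and is essentially the paper's argument: fix $\delta \in \bigcap \gamma_\leqslant(\Theta,\psi)$ and use $\gamma_\leqslant(\Theta,\varphi) \subseteq \gamma_\leqslant(\Theta,\psi)$ to conclude that $\delta$ lies in every member of $\gamma_\leqslant(\Theta,\varphi)$. Your remark on the empty-family case is harmless but not needed, and the paper omits it: antitonicity of $\bigcap$ holds anyway under the convention $\bigcap\emptyset = \mathcal{L}_{\Sigma_*}$.
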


\begin{proof} \mbox{}\\
Consider $\gamma_\leqslant(\Theta, \varphi) \subseteq \gamma_\leqslant(\Theta, \psi)$. We need to show that if $\delta \in \bigcap \gamma_\leqslant(\Theta, \psi)$, then $\delta \in \bigcap \gamma_\leqslant(\Theta, \varphi)$. \\[1mm]
Suppose $\delta \in \bigcap \gamma_\leqslant(\Theta, \psi)$. Then, $\delta \in \Theta'$ for every $\Theta' \in \gamma_\leqslant(\Theta, \psi)$. Since $\gamma_\leqslant(\Theta, \varphi) \subseteq \gamma_\leqslant(\Theta, \psi)$, then $\delta \in \Theta'$ for every $\Theta' \in \gamma_\leqslant(\Theta, \varphi)$. Therefore, $\delta \in \bigcap \gamma_\leqslant(\Theta, \varphi)$.
\end{proof}

\noindent\textbf{Lemma \ref{lema_partial_meet_relacional_teta6_agmpabd}}: \textit{Any relational partial meet abductive expansion function, i.e., $\expfip = \bigcap \gamma_\leqslant(\Theta,\varphi)$ (definitions \ref{definicao_expansao_abdutiva_partial_meet_agmpabd} and \ref{definicao_marking_off_agmpabd}), satisfies postulate $\postp6$ from definition \ref{definicao_postulados_expansao_abdutiva_agmpabd}.}

\begin{proof} \mbox{}\\
Let $\ooplusg$ be a relational \textbf{AGM}$p_{abd}$ partial meet abductive expansion function. We need to show that $\ooplusg$ satisfies postulate $\postp6 \ \expfip \subseteq Cn_{\mathbb{L}}(\expfilorp \cup \{\varphi\})$. By definition \ref{definicao_expansao_abdutiva_partial_meet_agmpabd}, it is the case that $\expfip = \bigcap \gamma_\leqslant(\Theta, \varphi)$ and that $\expfilorp = \bigcap \gamma_\leqslant(\Theta, \varphi \lor \psi)$. Therefore, according to postulate $\postp6$, we need to show that $\bigcap \gamma_\leqslant(\Theta, \varphi) \subseteq Cn_{\mathbb{L}}(\bigcap \gamma_\leqslant(\Theta, \varphi \lor \psi) \cup \{\varphi\})$. Suppose, then, that $\delta \in \bigcap \gamma_\leqslant(\Theta, \varphi)$. We need to show that $\delta \in Cn_{\mathbb{L}}(\bigcap \gamma_\leqslant(\Theta, \varphi \lor \psi) \cup \{\varphi\})$. \\[1mm]
To do this, consider any $\Theta' \in \gamma_\leqslant(\Theta, \varphi \lor \psi)$. By definition \ref{definicao_marking_off_agmpabd} of the marking-off identity, $\Theta'$ is $\leqslant$-best in $\excedlor$, i.e., $\Theta'' \leqslant \Theta'$ for every $\Theta'' \in \excedlor$ (a). It is also the case that either $\varphi \notin \Theta'$ or $\varphi \in \Theta'$. Consider the first disjunct. In this case, since $\Theta'$ is maximal, then $\til \varphi \in \Theta'$. Since, by observation \ref{remark_superconjunto_maximal_conjunto_de_crencas_agmpabd}, $\Theta' \in Th(\mathbb{L})$, then $\til \varphi \lor \delta \in \Theta'$ and, since $\vdash_{\mathbb{L}} \til \varphi \lor \delta \leftrightarrow \varphi \to \delta$, then $\varphi \to \delta \in \Theta'$ (b). \\[1mm]
In the case of the second disjunct, $\varphi \in \Theta'$. Since $\Theta' \in \gamma_\leqslant(\Theta, \varphi \lor \psi) \subseteq \excedlor$ and $\varphi \in \Theta'$, then by observation \ref{remark_varphi1_in_excedente_entao_varphi_2_in_excedente} (applied with $\varphi \lor \psi$ in the role of $\varphi$ and $\varphi$ in the role of $\psi$), we have that $\Theta' \in \excedfi$. Consider some arbitrary $\Theta^\# \in \excedfi$. Since, by lemma \ref{lema_excedente_ou_igual_excedente_uniao_B5_agmpabd}, $\excedfi \subseteq \excedlor$, then $\Theta^\# \in \excedlor$. Considering (a), we have, by definition \ref{definicao_marking_off_agmpabd}, that $\Theta^\# \leqslant \Theta'$. Therefore, since $\Theta^\# \leqslant \Theta'$ for every $\Theta^\# \in \excedfi$ (and $\Theta^\#$ was taken arbitrarily), we have that $\Theta'$ is $\leqslant$-best in $\excedfi$, i.e., $\Theta' \in \gamma_\leqslant(\Theta, \varphi)$. Since, by hypothesis, $\delta \in \bigcap \gamma_\leqslant(\Theta, \varphi)$, and $\Theta' \in \gamma_\leqslant(\Theta, \varphi)$, then $\delta \in \Theta'$. Thus, since $\Theta' \in Th(\mathbb{L})$, then $\varphi \to \delta \in \Theta'$ (c). From (b) and (c), we have that in both disjuncts, it is the case that $\varphi \to \delta \in \Theta'$ for every $\Theta' \in \gamma_\leqslant(\Theta, \varphi \lor \psi)$. Therefore, $\varphi \to \delta \in \bigcap \gamma_\leqslant(\Theta, \varphi \lor \psi)$. By \textit{inclusion}, we have that $\varphi \to \delta \in Cn_{\mathbb{L}}(\bigcap \gamma_\leqslant(\Theta, \varphi \lor \psi))$ and, by the \textit{deduction theorem}, we have, as required, that $\delta \in Cn_{\mathbb{L}}(\bigcap \gamma_\leqslant(\Theta, \varphi \lor \psi) \cup \{\varphi\})$.
\end{proof}

\

\noindent\textbf{Lemma \ref{lema_partial_meet_transitivamente_relacional_teta7_agmpabd}}: \textit{Any transitively relational partial meet abductive expansion function (definitions \ref{definicao_expansao_abdutiva_partial_meet_agmpabd} and \ref{definicao_marking_off_agmpabd}) satisfies postulate $\postp7$ from definition \ref{definicao_postulados_expansao_abdutiva_agmpabd}.}

\begin{proof} \mbox{}\\
Let $\ooplusg$ be a transitively relational partial meet abductive expansion function. We need to show that $\ooplusg$ satisfies postulate $\postp7 \ \text{If } \til \varphi \notin \expfilorp$, then $\expfilorp \subseteq \expfip$. \\[1mm]
Suppose $\til \varphi \notin \expfilorp$. By definition \ref{definicao_expansao_abdutiva_partial_meet_agmpabd}, it is the case that $\expfip = \bigcap \gamma_\leqslant(\Theta, \varphi)$ and that $\expfilorp = \bigcap \gamma_\leqslant(\Theta, \varphi \lor \psi)$. Since, by assumption, $\til \varphi \notin \expfilorp$, then $\til \varphi \notin \bigcap \gamma_\leqslant(\Theta, \varphi \lor \psi)$. Thus, there is a $\Theta' \in \gamma_\leqslant(\Theta, \varphi \lor \psi)$ such that $\til \varphi \notin \Theta'$. Since $\Theta'$ is maximal, then $\varphi \in \Theta'$. By observation \ref{remark_varphi1_in_excedente_entao_varphi_2_in_excedente}, we have that $\Theta' \in \excedfi$ and, therefore, that $\Theta' \in \excedfi \cap \gamma_\leqslant(\Theta, \varphi \lor \psi)$. Thus, $\excedfi \cap \gamma_\leqslant(\Theta, \varphi \lor \psi) \neq \emptyset$ and, by lemma \ref{lema_gamma_excedente_subseteq_gamma_excedente_varphi1_varphi2_B6_agmpabd}, we have that $\gamma_\leqslant(\Theta, \varphi) \subseteq \gamma_\leqslant(\Theta, \varphi \lor \psi )$. Therefore, by lemma \ref{lema_gamma_excedente_subseteq_bigcap_gamma_excedente_B7_agmpabd}, we have that $\bigcap \gamma_\leqslant(\Theta, \varphi \lor \psi) \subseteq \bigcap \gamma_\leqslant(\Theta, \varphi)$. Hence, as desired, $\expfilorp = \bigcap \gamma_\leqslant(\Theta, \varphi \lor \psi) \subseteq \bigcap \gamma_\leqslant(\Theta, \varphi) = \expfip$.    
\end{proof}


\noindent\textbf{Theorem \ref{teorema_Tabd_partial_meet_transitivamente_relacional_teta1-teta7_agmpabd}}: \textit{For every $\Theta$ and $\varphi$ such that $\papfig$, $\ooplusg$ is a transitively relational \textbf{AGM}$p_{abd}$ \textit{partial meet} abductive expansion operation (definitions \ref{definicao_expansao_abdutiva_partial_meet_agmpabd} and \ref{definicao_marking_off_agmpabd}) if, and only if, $\ooplusg$ satisfies postulates $\postp1$ - $\postp7$ from definition \ref{definicao_postulados_expansao_abdutiva_agmpabd} of abductive expansion on $\Theta$.}

\begin{proof} \mbox{}\\
Due to lemma \ref{lema_conjunto_dos_superconjuntos_maximais_nunca_vazio_agmpabd} and corollary \ref{corolario_alpha_ou_fi_sempre_expande_agmpabd}, the function $\ooplusg$, on both sides, always obtains a non-empty set. Hence, such conditions do not need to be considered.

\noindent (Construction $\Rightarrow$ postulates): \

Result already obtained by theorem \ref{teorema_partial_meet_teta1-teta5_agmpabd} 
and lemmas \ref{lema_partial_meet_relacional_teta6_agmpabd} and 
\ref{lema_partial_meet_transitivamente_relacional_teta7_agmpabd}.

\noindent (Postulates $\Rightarrow$ construction): \

Theorem \ref{teorema_partial_meet_teta1-teta5_agmpabd}---postulates $\Rightarrow$ construction direction---already shows that, assuming postulates $\postp1$ - $\postp5$ from definition \ref{definicao_postulados_expansao_abdutiva_agmpabd}, $\gamma^\ooplus$ is a selection function and that $\expfip = \bigcap \gamma^\ooplus(\Theta, \varphi)$. Thus, we need to show that $\gamma^\ooplus$ is transitively relational.

\noindent We define $\leqslant$ on all maximal non-trivial supersets of $\Theta$ as follows:
      
            {\leftskip=1.5cm \noindent
            For all $\Theta',\Theta'' \in Th(\mathbb{L})$, $\Theta'' \leqslant \Theta'$ if and only if the following three conditions are satisfied:
            \begin{description}
                \setlength{\parskip}{3pt}
                \leftskip 40pt
                \item (i) $\Theta'' \in \excedfi$ for some $\varphi \in \mathcal{L}_{\Sigma_*}$;
                \item (ii) $\Theta' \in \excedfi$ and $\expfip \subseteq \Theta'$ for some $\varphi \in \mathcal{L}_{\Sigma_*}$;
                \item (iii) For every $\varphi \in \mathcal{L}_{\Sigma_*}$, if $\Theta',\Theta'' \in \excedfi$ and $\expfip \subseteq \Theta''$, then $\expfip \subseteq \Theta'$.
            \end{description}
            \par }
\noindent We then define the relational selection function $\gamma^\ooplus_\leqslant$ as the completion of $\gamma^\ooplus$:
$$\gamma^\ooplus_\leqslant(\Theta, \varphi) = \{\Theta' \in \excedfi: \bigcap \gamma^\ooplus(\Theta, \varphi) \subseteq \Theta'\} \text{ for all } \varphi \in \mathcal{L}_{\Sigma_*}.$$
         
Thus defined, the function $\gamma^\ooplus_\leqslant$ is a selection function---since $\gamma^\ooplus_\leqslant(\Theta, \varphi) \subseteq \gamma^\ooplus(\Theta, \varphi) \subseteq \excedfi$ and $\gamma^\ooplus_\leqslant(\Theta, \varphi) \neq \emptyset$---and determines the same \textit{partial meet} abductive expansion operation as $\gamma^\ooplus$---since $\bigcap \gamma^\ooplus_\leqslant(\Theta, \varphi) = \bigcap \gamma^\ooplus(\Theta, \varphi) = \expfip$. With this in mind, we need to show that:
            \begin{description}
                \setlength{\parskip}{3pt}
                \leftskip 40pt
                \item (1) The relation $\leqslant$ is relational with respect to $\gamma^\ooplus_\leqslant$, i.e., it satisfies the Marking-Off Identity (definition \ref{definicao_marking_off_agmpabd});
                \item (2) The relation $\leqslant$ is transitive with respect to $\gamma^\ooplus_\leqslant$, for all $\varphi \in \mathcal{L}_{\Sigma_*}$.
            \end{description}
            
\
         
         \noindent (1) The relation $\leqslant$ is relational with respect to $\gamma^\ooplus_\leqslant$ (satisfies the Marking-Off Identity)\footnote{Recall definition \ref{definicao_marking_off_agmpabd} (p. \pageref{definicao_marking_off_agmpabd}). The Marking-Off Identity with respect to $\gamma^\ooplus_\leqslant$ is given by: $\gamma^\ooplus_\leqslant(\Theta, \varphi) = \{\Theta' \in \excedfi : \Theta'' \leqslant \Theta' \text{ for all } \Theta'' \in \excedfi\}$.}:\\[1mm]        
         To show that $\leqslant$ satisfies the Marking-Off Identity, we need to verify two cases: (1A) if $\Theta' \in \gamma^\ooplus_\leqslant(\Theta, \varphi)$ and $\Theta'' \in \excedfi$, then $\Theta'' \leqslant \Theta'$; (1B) if $\Theta' \notin \gamma^\ooplus_\leqslant(\Theta, \varphi)$ and $\Theta' \in \excedfi$, then $\Theta' \prec \Theta''$ for some $\Theta'' \in \excedfi$.\\[2mm]        
         To prove (1A), suppose $\Theta' \in \gamma^\ooplus_\leqslant(\Theta, \varphi)$ and $\Theta'' \in \excedfi$. We need to show that $\Theta'' \leqslant \Theta'$, according to conditions (i)-(iii) of $\leqslant$ established above. By assumption, condition (i) $\Theta'' \in \excedfi$ for some $\varphi$ is immediately satisfied. Since $\Theta' \in \gamma^\ooplus_\leqslant(\Theta, \varphi)$, then, by the definition of $\gamma^\ooplus_\leqslant$, it is the case that $\Theta' \in \excedfi$ and $\bigcap \gamma^\ooplus(\Theta, \varphi) \subseteq \Theta'$, with $\expfip = \bigcap \gamma^\ooplus(\Theta, \varphi)$. Hence, condition (ii) $\Theta' \in \excedfi$ and $\expfip \subseteq \Theta'$ for some $\varphi$ is also satisfied. It remains to verify condition (iii).\\[2mm]
         Let $\psi \in \mathcal{L}_{\Sigma_*}$ and suppose $\Theta'', \Theta' \in \excedpsi$ and $\exppsip \subseteq \Theta''$---the antecedent of condition (iii) in the definition of $\leqslant$. To obtain $\Theta'' \leqslant \Theta'$, we must therefore prove its consequent, i.e., that $\exppsip \subseteq \Theta'$. More precisely, we need to show that if $\delta \in \exppsip$, then $\delta \in \Theta'$.\\[2mm]
         To do this, consider proposition \ref{proposition_propriedades_teta7_agmpabd} (i) $\expfilorp \subseteq \expfip$ or $\expfilorp \subseteq \exppsip$. Taking the first disjunct and the recently obtained condition (ii), we have $\expfilorp \subseteq \expfip \subseteq \Theta'$ for $\Theta' \in \excedfi$. Taking the second disjunct, the assumption that $\exppsip \subseteq \Theta''$ and condition (i) just obtained, we get $\expfilorp \subseteq \exppsip \subseteq \Theta''$ for $\Theta'' \in \excedfi$. In this latter case, however, we have that $\til \varphi \notin \expfilorp$ (otherwise, $\Theta'' \notin \excedfi$, since, by Observation \ref{varphi_in_anel_conj_maximal_agmpabd}, $\varphi \in \Theta''$ and thus we would violate definition \ref{definicao_conj_excedente_agmpabd} (iii)). Thus, by postulate $\postp7$, we have $\expfilorp \subseteq \expfip$. Hence, from both disjuncts, we obtain $\expfilorp \subseteq \expfip \subseteq \Theta'$ for $\Theta' \in \excedfi$. Let us call this result (a). \\[1mm]
         Finally, to show that if $\delta \in \exppsip$, then $\delta \in \Theta'$, suppose $\delta \in \exppsip$. By postulate $\postp6$, it is the case that $\exppsip \subseteq Cn_{\mathbb{L}}(\expfilorp \cup \{\psi\})$, so $\delta \in Cn_{\mathbb{L}}(\expfilorp \cup \{\psi\})$. By the \textit{deduction theorem}, we have $\psi \to \delta \in Cn_{\mathbb{L}}(\expfilorp)$ and, by $\postp1$, $\psi \to \delta \in \expfilorp$. Thus, by result (a), we have $\psi \to \delta \in \Theta'$. But, by assumption, $\Theta' \in \excedpsi$ and, by Observation \ref{varphi_in_anel_conj_maximal_agmpabd}, $\psi \in \Theta'$. Therefore, by Observation \ref{remark_superconjunto_maximal_conjunto_de_crencas_agmpabd} and \textit{modus ponens}, we have $\delta \in \Theta'$. Hence, it is the case that $\exppsip \subseteq \Theta'$ and, thus, condition (iii) of the definition of $\leqslant$ is finally satisfied and, as required, $\Theta'' \leqslant \Theta'$. That is, case (1A) is proven. \\[1mm]
         Consider case (1B), i.e., if $\Theta' \notin \gamma^\ooplus_\leqslant(\Theta, \varphi)$ and $\Theta' \in \excedfi$, then $\Theta' \prec \Theta''$ for some $\Theta'' \in \excedfi$ (i.e., $\Theta' \leqslant \Theta''$, but $\Theta'' \npreceq \Theta'$).   
         Let, therefore, $\Theta'' \in \gamma^\ooplus_\leqslant(\Theta, \varphi)$. Consider only condition (iii) of the definition of $\leqslant$. Since $\gamma^\ooplus_\leqslant(\Theta, \varphi) \subseteq \excedfi$, we have $\Theta', \Theta'' \in \excedfi$. Since $\Theta'' \in \gamma^\ooplus_\leqslant(\Theta, \varphi)$, by the definition of the completion function $\gamma^\ooplus_\leqslant$, we have $\bigcap \gamma^\ooplus(\Theta, \varphi) \subseteq \Theta''$ and, therefore, $\expfip \subseteq \Theta''$. But $\Theta' \notin \gamma^\ooplus_\leqslant(\Theta, \varphi)$. Therefore, by the definition of the completion function $\gamma^\ooplus_\leqslant$, we have $\bigcap \gamma^\ooplus(\Theta, \varphi) \nsubseteq \Theta'$ and, therefore, $\expfip \nsubseteq \Theta'$, which fails to satisfy condition (iii) of the definition of $\leqslant$. Hence, $\Theta' \leqslant \Theta''$, but $\Theta'' \npreceq \Theta'$, as required.

        \noindent (2) The relation $\leqslant$ is transitive with respect to $\gamma^\ooplus_\leqslant$, for all $\varphi \in \mathcal{L}_{\Sigma_*}$.

        Suppose $\Theta''' \leqslant \Theta''$ and $\Theta'' \leqslant \Theta'$ (Assumption A). We need to show that $\Theta''' \leqslant \Theta'$, that is, that conditions (i)-(iii) of the definition of $\leqslant$ are satisfied to obtain $\Theta''' \leqslant \Theta'$. Since $\Theta''' \leqslant \Theta''$, we have that $\Theta''' \in \Theta \top \delta$, for some $\delta \in \mathcal{L}_{\Sigma_*}$. Condition (i) is therefore satisfied. Similarly, since $\Theta'' \leqslant \Theta'$, we have that $\Theta' \in \Theta \top \delta$ and $\Theta^\ooplus_\delta \subseteq \Theta'$, for some $\delta \in \mathcal{L}_{\Sigma_*}$. Hence, condition (ii) is also satisfied. We must therefore verify condition (iii) of the definition of $\leqslant$ for $\Theta''' \leqslant \Theta'$. \\[1mm]
        Suppose, accordingly, as per condition (iii), for $\psi \in \mathcal{L}_{\Sigma_*}$, that $\Theta''', \Theta' \in \Theta \top \psi$ and $\exppsip \subseteq \Theta'''$ (Assumption B). We need to show that $\exppsip \subseteq \Theta'$. By lemma \ref{lema_excedente_ou_igual_excedente_uniao_B5_agmpabd}, we have that $\excedlor = \excedfi \cup \Theta \top \psi$. Therefore, since by Assumption B, $\Theta''', \Theta' \in \Theta \top \psi$, we have that $\Theta''', \Theta' \in \excedlor$. Since, by Assumption A, we have $\Theta''' \leqslant \Theta''$, then, by condition (ii) of the definition of $\leqslant$, we have that $\Theta'' \in \excedfi$ and $\expfip \subseteq \Theta''$ for some $\varphi \in \mathcal{L}_{\Sigma_*}$ (result (b)). Therefore, it is also the case that $\Theta'' \in \excedlor$. Thus, we have $\Theta''', \Theta'', \Theta' \in \excedlor$ (result (c)).\\[1mm]
        By proposition \ref{proposition_propriedades_teta7_agmpabd} (i), we have that $\expfilorp \subseteq \expfip$ or $\expfilorp \subseteq \exppsip$. Taking the first disjunct and result (b), we have $\expfilorp \subseteq \expfip \subseteq \Theta''$ and, therefore, $\expfilorp \subseteq \Theta''$. Since, by Assumption A, we have $\Theta'' \leqslant \Theta'$, and by result (c), we have $\Theta'', \Theta' \in \excedlor$, then, by condition (iii) of the definition of $\leqslant$, we obtain $\expfilorp \subseteq \Theta'$. Similarly, taking the second disjunct and Assumption B, we have $\expfilorp \subseteq \exppsip \subseteq \Theta'''$. Since, by Assumption A, $\Theta''' \leqslant \Theta''$ and, by result (c), $\Theta''', \Theta'' \in \excedlor$, then, by condition (iii) of the definition of $\leqslant$, we obtain $\expfilorp \subseteq \Theta''$. But, once again, since $\Theta'' \leqslant \Theta'$ by Assumption A, and $\Theta'', \Theta' \in \excedlor$ by result (c), then, by condition (iii) of $\leqslant$, we have $\expfilorp \subseteq \Theta'$. Therefore, from either disjunct, it is the case that $\expfilorp \subseteq \Theta'$ (result (d)).\\[1mm]
        Finally, to show that $\exppsip \subseteq \Theta'$, suppose $\delta \in \exppsip$. We need to show that $\delta \in \Theta'$. By postulate $\postp6$, we have $\exppsip \subseteq Cn_{\mathbb{L}}(\expfilorp \cup \{\psi\})$. Therefore, $\delta \in Cn_{\mathbb{L}}(\expfilorp \cup \{\psi\})$ and, by the \textit{deduction theorem}, $\psi \to \delta \in Cn_{\mathbb{L}}(\expfilorp)$ and, by $\postp1$, $\psi \to \delta \in \expfilorp$. Hence, by result (d), $\psi \to \delta \in \Theta'$. Since, by Assumption B, $\Theta' \in \Theta \top \psi$, then, by observation \ref{varphi_in_anel_conj_maximal_agmpabd}, $\psi \in \Theta'$. Therefore, by observation \ref{remark_superconjunto_maximal_conjunto_de_crencas_agmpabd} and \textit{modus ponens}, we have $\delta \in \Theta'$, as required. Thus, condition (iii) of $\leqslant$ for $\Theta''' \leqslant \Theta'$ is satisfied and, consequently, we obtain the transitivity of $\leqslant$.
\end{proof}

\end{appendices}

\bibliographystyle{plain}  
\bibliography{referencies} 

\end{document}